\newtheorem{theorem}{Theorem}
\newtheorem{lemma}[theorem]{Lemma}
\newtheorem{proposition}[theorem]{Proposition}
\newtheorem{corollary}[theorem]{Corollary}
\theoremstyle{definition}
\newtheorem{definition}[theorem]{Definition}
\newtheorem{example}{Example}
\theoremstyle{remark}
\newtheorem{remark}[theorem]{Remark}
\DeclareMathOperator*{\argmin}{argmin}
\newcommand{\cX}{\mathcal{X}}
\newcommand{\rr}{\mathbb{R}}
\newcommand{\pp}{\mathbb{P}}
\newcommand{\cE}{\mathcal{E}}
\newcommand{\ee}{\mathbb{E}}
\newcommand{\I}{\mathbb{I}}
\newcommand{\F}{\mathcal{F}}
\newcommand{\norm}[1]{\left|\left| #1 \right|\right|}
\newcommand{\abs}[1]{\left| #1 \right|}
\DeclareMathOperator{\reg}{Reg}
\renewcommand{\epsilon}{\varepsilon}
\newcommand{\dkl}[2]{D_{KL}\left(#1 || #2\right)}
\newcommand{\dren}[2]{D_\lambda\left(#1 || #2\right)}
\newcommand{\chisq}[2]{\chi^2\left(#1 || #2\right)}
\renewcommand{\phi}{\varphi}
\newcommand{\divf}[2]{D_f\left(#1 || #2\right)}
\DeclareMathOperator{\tv}{TV}
\newcommand{\jstar}{j^{\ast}}
\newcommand{\scrF}{\mathscr{F}}
\newcommand{\nuM}{\nu_{M}}
\newcommand{\nutil}{\widetilde{\nu}}
\newcommand{\yhat}{\widehat{y}}
\newcommand{\vc}[1]{\mathrm{vc}\left( #1 \right)}
\newcommand{\rad}{\mathfrak{R}}
\newcommand{\fat}[1]{\mathrm{fat}\left( #1 \right)}
\newcommand{\radseq}{\rad^{\mathrm{seq}}}
\newcommand{\algfont}[1]{\mathsf{#1}}
\newcommand{\ermoracle}{\algfont{ERMOracle}}
\newcommand{\ghat}{\widehat{g}}
\newcommand{\rel}{\mathbf{Rel}_T}
\newcommand{\relT}[1]{\rel\left( \cF | x_1,y_1 \dots, x_{#1}, y_{#1} \right)}
\newcommand{\muhat}{\widehat{\mu}}
\newcommand{\gtil}{\widetilde{g}}
\newcommand{\nbrack}[2]{N_{[]}\left( #1, #2 \right)}
\DeclareMathOperator{\ber}{Ber}
\newcommand{\ftil}{\widetilde{f}}
\def\ddefloop#1{\ifx\ddefloop#1\else\ddef{#1}\expandafter\ddefloop\fi}
\def\ddef#1{\expandafter\def\csname bb#1\endcsname{\ensuremath{\mathbb{#1}}}}
\def\ddefloop#1{\ifx\ddefloop#1\else\ddef{#1}\expandafter\ddefloop\fi}
\def\ddef#1{\expandafter\def\csname fr#1\endcsname{\ensuremath{\mathfrak{#1}}}}
\def\ddefloop#1{\ifx\ddefloop#1\else\ddef{#1}\expandafter\ddefloop\fi}
\def\ddef#1{\expandafter\def\csname eul#1\endcsname{\ensuremath{\EuScript{#1}}}}
\def\ddefloop#1{\ifx\ddefloop#1\else\ddef{#1}\expandafter\ddefloop\fi}
\def\ddef#1{\expandafter\def\csname scr#1\endcsname{\ensuremath{\mathscr{#1}}}}
\def\ddefloop#1{\ifx\ddefloop#1\else\ddef{#1}\expandafter\ddefloop\fi}
\def\ddef#1{\expandafter\def\csname b#1\endcsname{\ensuremath{\mathbf{#1}}}}
\def\ddefloop#1{\ifx\ddefloop#1\else\ddef{#1}\expandafter\ddefloop\fi}
\def\ddef#1{\expandafter\def\csname bhat#1\endcsname{\ensuremath{\hat{\mathbf{#1}}}}}
\def\ddefloop#1{\ifx\ddefloop#1\else\ddef{#1}\expandafter\ddefloop\fi}
\def\ddef#1{\expandafter\def\csname btil#1\endcsname{\ensuremath{\tilde{\mathbf{#1}}}}}
\def\ddefloop#1{\ifx\ddefloop#1\else\ddef{#1}\expandafter\ddefloop\fi}
\def\ddef#1{\expandafter\def\csname bst#1\endcsname{\ensuremath{\mathbf{#1}^\star}}}
\def\ddefloop#1{\ifx\ddefloop#1\else\ddef{#1}\expandafter\ddefloop\fi}
\def\ddef#1{\expandafter\def\csname bst#1\endcsname{\ensuremath{\mathbf{#1}^\star}}}
\def\ddefloop#1{\ifx\ddefloop#1\else\ddef{#1}\expandafter\ddefloop\fi}
\def\ddef#1{\expandafter\def\csname b#1\endcsname{\ensuremath{\mathbf{#1}}}}
\def\ddefloop#1{\ifx\ddefloop#1\else\ddef{#1}\expandafter\ddefloop\fi}
\def\ddef#1{\expandafter\def\csname barb#1\endcsname{\ensuremath{\bar{\mathbf{#1}}}}}
\def\ddef#1{\expandafter\def\csname c#1\endcsname{\ensuremath{\mathcal{#1}}}}
\def\ddef#1{\expandafter\def\csname h#1\endcsname{\ensuremath{\widehat{#1}}}}
\def\ddef#1{\expandafter\def\csname hc#1\endcsname{\ensuremath{\widehat{\mathcal{#1}}}}}
\def\ddef#1{\expandafter\def\csname t#1\endcsname{\ensuremath{\widetilde{#1}}}}
\def\ddef#1{\expandafter\def\csname tc#1\endcsname{\ensuremath{\widetilde{\mathcal{#1}}}}}
\title{The Sample Complexity of Approximate Rejection Sampling with Applications to Smoothed Online Learning}
\author{Adam Block and Yury Polyanskiy \\ MIT}
\date{}
\begin{document}

\maketitle
\begin{abstract}
    Suppose we are given access to $n$ independent samples from distribution $\mu$
and we wish to output one of them with the goal of making the output
distributed as close as possible to a target distribution $\nu$.  In this work
we show that the optimal total variation distance as a function of $n$ is given
by $\tilde\Theta(\frac{D}{f'(n)})$ over the class of all pairs $\nu,\mu$ with a bounded $f$-divergence $D_f(\nu\|\mu)\leq D$. Previously, this question was studied only for the case when the Radon-Nikodym derivative of $\nu$ with respect to
$\mu$ is uniformly bounded. We then consider an application in the
seemingly very different field of smoothed online learning, where we show that recent
results on the minimax regret and the regret of oracle-efficient algorithms
still hold even under relaxed constraints on the adversary (to have bounded $f$-divergence, as opposed to bounded Radon-Nikodym derivative).  
Finally, we also study efficacy of importance sampling for mean estimates uniform
over a function class and compare importance sampling with rejection
sampling.

\end{abstract}

\long\def\nbyp#1{\textcolor{red}{\textbf{YP:} #1}}

\section{Introduction}

Consider the following problem: given $n$ independent samples from some base distribution $\mu$,
how can a learner generate a single sample from a target distribution $\nu$? This simple question
dates back decades, with the first formal solution, rejection sampling, provided already by \citet{von195113}.  Due to its simplicity, this sampling problem appears as a primitive in numerous applications in machine learning, theoretical computer science, and cryptography \citep{lyubashevsky2012lattice,liu1996metropolized,naesseth2017reparameterization,ozols2013quantum}; thus, constructing efficient solutions has filled many works \citep{grover2018variational,gilks1992adaptive,martino2011generalization}.  Perhaps surprisingly, though, the original solution of rejection sampling \citep{von195113} remains a popular method even today.

Given $X_1, \dots, X_n \sim \mu$, recall that rejection sampling takes as a parameter some $M$, which is a uniform upper bound on
the Radon-Nikodym derivative $\frac{d \nu}{d \mu}$, and for each $1 \leq i \leq n$, accepts $X_i$
with probability $\frac{1}{M} \cdot \frac{d \nu}{d \mu}(X_i)$ and returns an arbitrary accepted $X_i$ as a sample from $\nu$.  It is an easy exercise to see that
if $M \geq \norm{\frac{d \nu}{d \mu}}_{\infty}$, then any accepted sample has distribution $\nu$.
Furthermore, it is not hard to see that any sample gets accepted with probability $\frac 1M$
independently of other samples and thus, if we want to have at least one accepted sample with high
probability, we require $n = \Theta(M)$.  While there has been quite a lot of work in the
information theory community dedicated to refining this bound
\citep{liu2018rejection,harsha2007communication} as well as developments in the statistical
community dedicated to improving sampling efficiency under strong structural assumptions
\citep{gilks1992adaptive,gorur2011concave}, the scope of most all of this work is limited by the
requirement that $\norm{\frac{d \nu}{d \mu}}_{\infty} < \infty$.  In many settings,
this assumption is false \citep{block2023smoothed}; as a result, we focus on a similar problem without the
stringent assumption on a uniform upper bound.  Unfortunately, it is not hard to see that there
exist examples where we simply cannot recover a sample \textit{exactly} from $\nu$  without
this uniform upper bound (see \Cref{thm:lowerboundlinearformal} for an example). Consequently, we relax
our desideratum to consider \textit{approximate} sampling.  Specifically, we ask the following question:
\begin{quotation}
    \textit{How many independent samples $X_1, \dots, X_n$ do we need from a source distribution
    $\mu$ such that we can select some $\jstar \in [n]$ in order for the law of $X_{\jstar}$ to be
    within total variation distance $\epsilon$ of $\nu$?}
\end{quotation}
Despite its simplicity, to the best of our knowledge this question has not been considered in the literature to date.  We emphasize several special cases in the related work in \Cref{app:relatedwork}.  In this work we give a complete answer to this question with essentially matching upper and lower bounds for all superlinear $f$-divergences of practical interest.  While the upper bounds are achieved with a modified rejection sampler and the analysis follows without too much difficulty from classical work, the lower bounds require a more technical approach.  In order to quantify how far apart $\nu$ is from $\mu$, we use the information-theoretic notion of an $f$-divergence, where for two measures $\nu \ll \mu$ defined on a common set and a convex function $f$, we define
\begin{align}
    \divf{\nu}{\mu} = \ee_\mu\left[ f\left( \frac{d\nu}{d\mu}(Z) \right) \right].
\end{align}
We give a more formal definition below, but we observe here that the notion of $f$-divergence generalizes common divergences including total variation, KL-divergence, Renyi divergences, and $\cE_\gamma$ divergence \citep{polyanskiy2022,van2014renyi,asoodeh2021local}.  We will make the assumption that for some convex $f$, the source and target measures satisfy $\divf{\nu}{\mu} < \infty$ and ask what the sample complexity of $\epsilon$-approximate rejection sampling is under this constraint.  Interestingly, the answer depends on the tail behavior of $f$; in particular, if $\sup f'(x) < \infty$ then rejection sampling cannot work under only this constraint (see Proposition \ref{prop:lowerboundlinearinformal}).  If we have an $f$-divergence constraint with $f'(\infty) = \infty$, however, we will see that
\begin{align}
    n = \widetilde \Theta\left( (f')^{-1}\left( \frac{8 \cdot \divf{\nu}{\mu}}{\epsilon} \right)  \right)
\end{align}
samples is both necessary and sufficient in order to generate a sample $X_{\jstar}$ that is $\epsilon$-close in total variation.  In fact, we show that von Neumann's original rejection sampler is essentially optimal for this problem and we do not require the more complicated samplers introduced for exact sampling by \citet{harsha2007communication,liu2018rejection}.  As mentioned above, the upper bounds are relatively standard, with much of the technical effort involving the construction of lower bounds.

While the above results are interesting in their own right, we emphasize one
key use case of our results in a seemingly unrelated field: \textit{smoothed online learning}.  We briefly recall the setup.  For general online learning, we fix a set of contexts $\cX$, a set of targets $\cY$ and a function class $\F: \cX \to \cY$ as well as a loss function $\ell: \cY \times \cY \to [0,1]$.  For some horizon $T$, online learning proceeds in rounds where for each time $1 \leq t \leq T$ the following happens:
\begin{enumerate}
    \item Nature chooses some context $x_t$ and label $y_t$.
    \item the Learner chooses some prediction $\yhat_t \in \cY$.
    \item The learner sees $y_t$ and suffers loss $\ell(\yhat_t, y_t)$.
\end{enumerate}
As in \citet{block2022smoothed,haghtalab2022oracle}, we distinguish between the \emph{proper} and \emph{improper} settings.  In the former, the Learner must choose some function $f_t \in \F$ before seeing $x_t$ and then predicts $\yhat_t = f_t(x_t)$.  In the latter, the Learner observes $x_t$ and then predicts an arbitrary $\yhat_t \in \cY_t$.  The goal in both cases is to minimize the expected regret to the best function in hindsight, where
\begin{align}
    \reg_T = \sum_{t = 1}^T \ell(\yhat_t, y_t) - \inf_{f \in \cF} \sum_{t = 1}^T \ell(f(x_t), y_t).
\end{align}
As stated, there is no restriction on Nature's choice of the context and label, which is called the adversarial setting.  Despite its popularity due to the robustness of the regime and the lack of assumptions, there are two major problems with the fully adversarial setting: first, simple function classes like thresholds in one dimension that can be easily learned when the data appear independently become unlearnable in the adversarial regime \citep{rakhlin2015sequential,littlestone1988learning}; second, even when function classes are learnable, they often cannot be learned efficiently \citep{hazan2016computational}.  In order to solve the first issue, the notion of smoothed online learning has recently gained traction \citep{rakhlin2011online,haghtalab2022oracle,haghtalab2022smoothed,block2022smoothed,block2022efficient}.  Motivated by smoothed analysis of algorithms, \citet{rakhlin2011online,haghtalab2022smoothed} consider the following setting.  For a fixed base measure $\mu$ on some set $\cX$, we say that a measure $\nu$ is $\sigma$-\emph{smooth} with respect to $\mu$ if $\norm{\frac{d\nu}{d\mu}}_\infty \leq \frac 1\sigma$.  An adversary is $\sigma$-\emph{smooth} with respect to some fixed $\mu$ if for all $t$, it holds that the distribution $p_t$ of $x_t$ conditioned on all the history is $\sigma$-smooth.  One motivation for this definition is to suppose that Nature is fully adversarial, but corrupted by some small amount of noise.  For example, if $\cX = \rr^d$, we could imagine adding a small amount of uniform or Gaussian noise to an adversarial input \citep{block2023smoothed}.  In \citet{block2022smoothed,haghtalab2022smoothed}, the minimax optimal rates for smoothed online learning were derived up to polylogarithmic factors.  As an example, if we let $\vc{\cF}$ denote the Vapnik-Chervonenkis dimension \citep{blumer1989learnability} of some binary valued function class $\F$, then there exists some algorithm capable of achieving, with respect to the indicator loss,
\begin{align}
    \ee\left[ \reg_T \right] = O\left( \sqrt{T \cdot \vc{\cF} \cdot \log\left( \frac{T}{\sigma} \right)} \right).
\end{align}
Unfortunately, in many common settings, a uniform bound on $\frac{d p_t}{d\mu}$ may not be achievable.  For example, consider again the case of a small amount of Gaussian noise in $\rr^d$ being added to an adversarial input.  A natural choice of $\mu$ would be some fixed Gaussian, but there is no way to ensure that $\norm{\frac{d p_t}{d\mu}}_\infty$ is finite.  Even when the Radon-Nikodym derivative is uniformly bounded, it may be, as in many high dimensional settings, that this bound is too large for the resulting implications to be meaningful.  Thus, in \Cref{sec:smoothedonline}, we propose a more general notion, of an $f$-smoothed adversary, where the distribution $p_t$ of the contexts $x_t$ conditional on the history satisfies $\divf{p_t}{\mu} \leq \frac 1\sigma$.  In this harder setting, the results of \citet{block2022smoothed,haghtalab2022oracle,haghtalab2022smoothed} no longer apply due to the breakdown of a key technical step used in the proofs of all of these results.  In \Cref{sec:smoothedonline}, we apply our bounds on the sample complexity of approximate rejection sampling to generalize the approach of these works and achieve upper bounds on the information theoretic rates of  $f$-smoothed online learning, which are tight for some $f$-divergences.

While the information theoretic rates provided in \citet{block2022smoothed,haghtalab2022smoothed} are important, the suggested algorithms that achieve these rates are computationally intractable and thus two \emph{oracle-efficient} algorithms were also proposed, where the learner has access to an Empirical Risk Minimization (ERM) oracle returning the minimizer over $\cF$ of a weighted cumulative loss function evaluated on some data set (see Definition \ref{def:ermoracle} for a formal definition).  Once again, the analysis of these two algorithms does not extend beyond the standard smoothed setting; in \Cref{sec:smoothedonline}, we again apply our rejection sampling sample complexity bounds to demonstrate that, by modifying the hyperparameters of the two proposed algorithms, we can still maintain a no-regret guarantee under the significantly more general $f$-smoothed online learning setting.  

We defer discussion of related work to \Cref{app:relatedwork} for the sake of space.  We now summarize our key contributions:
\begin{itemize}
    \item In \Cref{thm:upperbound}, we provide an upper bound on the complexity of approximately sampling from some target measure $\nu$ given access to samples from $\mu$.  In particular, we show that by modifying classical rejection sampling, $\widetilde{\Theta}\left( (f')^{-1}\left( \frac{\divf{\nu}{\mu}}{\epsilon} \right) \right)$ samples suffice to obtain a sample with total variation distance at most $\epsilon$ from the target.
    \item In Proposition \ref{prop:lowerboundlinearinformal} and \Cref{thm:lowerbound,thm:lowerboundweakf}, we show that the upper bound given by rejection sampling is essentially tight.  In particular, we show that rejection sampling is in some sense generic in that ``the best'' way to use samples from $\mu$ to produce a sample from $\nu$ is the approach described above.  Furthermore, we show that if $f'$ is bounded above, then the approximate sampling problem is impossible; if $f'$ is unbounded, we show in \Cref{thm:lowerbound,thm:lowerboundweakf} that the sample complexity derived in \Cref{thm:upperbound} is essentially tight as $\epsilon \downarrow 0$.  In particular, Theorem \ref{thm:lowerbound} shows that for all $n$, there exist distributions with bounded $f$-divergence such that $\Omega\left( (f')^{-1}\left( \frac{\divf{\nu}{\mu}}{\epsilon} \right) \right)$ samples are necessary to produce an $\epsilon$-approximate sample from the target measure, while in Theorem \ref{thm:lowerboundweakf}, we show that (for a slightly smaller class of $f$ satisfying a mild growth condition) there exist distributions such that the preceding lower bound holds uniformly in $n$.
    \item In \Cref{sec:smoothedonline}, we generalize previous results on smoothed online learning to the significantly more general setting of $f$-smoothed online learning.  In particular, we derive minimax upper bounds without regard to computation time as well as demonstrating that two oracle-efficient algorithms (one proper and one improper) proposed in \citet{block2022smoothed} remain no-regret even in the more general $f$-smoothed online learning setting.  Moreover, in \Cref{thm:properinformal}, we answer an open question in \citet{block2022smoothed} by showing that an instance of FTPL has regret scaling like $\sigma^{-1/4}$ as opposed to $\sigma^{-1/2}$, where $\sigma$ is the smoothness parameter of the adversary; this generalizes a result of \citet{haghtalab2022oracle} to arbitrary context spaces.
    \item In \Cref{sec:importance}, we prove new bounds on the quality of importance sampling for estimating means with respect to a target $\nu$ uniformly over a function class $\cF$ when we have access to samples from $\mu$.  We then compare these results to estimates using rejection sampling assuming $\divf{\nu}{\mu} < \infty$ for the special case of $\chi^2$-divergence and compare these results with earlier bounds from \citet{chatterjee2018sample,cortes2010learning}.
\end{itemize}

\paragraph{Notation} In the sequel, we will always denote by $\mu$ a base measure on the set $\cX$ with associated $\sigma$-algebra $\scrF$.  We will denote by $X_{1:n} = (X_1, \dots, X_n)$ a vector of $n$ independent samples from $\mu$ and we will let $\jstar$ be a selection rule.  We will reserve $\nu$ for our target measure and the letters $\epsilon, \delta, \gamma$ will all be reserved for small positive real constants.  Furthermore, we will reserve $f$ for a convex function mapping the positive reals to the positive reals satisfying $f(1) = f'(1) = 0$.  Furthermore, for such an $f$, we will let $f^{-1}(u) = \inf\left\{ t > 0 | f'(t) \geq u \right\}$ where we adopt the standard convention of taking the infimum of the empty set to be infinite.  For a given random variable $Y$, we will denote by $P_Y$ the distribution of $Y$.  We use $O(\cdot), \Omega(\cdot)$ to denote asymptotic big-oh notation and apply tildes to hide polylogarithmic factors.

\section{Problem Setup and Notation}\label{sec:prelims}
In this section, we formally define the necessary information theoretic quantities and state the problem.  To begin, we define $f$-divergence.  For more information on information theoretic notions, see \citet{polyanskiy2022}
\begin{definition}\label{def:fdivergence}
    Let $f: [0, \infty] \to \rr_{\geq 0} \cup \{\infty \}$ be a convex function satisfying $f(1) = f'(1) = 0$.  For two probability measures $\nu, \mu$ on some space $\cX$, define the $f$-divergence,
    \begin{align}
        \divf{\nu}{\mu} = \ee_\mu\left[ f\left( \frac{d\nu}{d \mu}(Z) \right) \I\left[ \frac{d \nu}{d \mu}(Z) < \infty \right] \right] + f'(\infty) \mu\left( \frac{d \nu}{d \mu}(Z) = \infty \right).
    \end{align}
    Note that if $\nu \ll \mu$ then we may ignore the second term.
\end{definition}
\begin{remark}
    As a technical aside, throughout the paper, we will be using $f'$ and $f''$ to denote the first and second derivatives of the $f$ appearing in Definition \ref{def:fdivergence}.  By Rademacher's Theorem \citep{rademacher1919partielle}, $f$ is differentiable almost everywhere, but for any points where $f$ is not differentiable, we will take $f'$ to be the maximal subgradient.  As $f$ is increasing, $f'$ is nondecreasing and thus we can take $f''$ to be the right derivative of $f'$, which is always well-defined.
\end{remark}
We will phrase our results in terms of $\divf{\nu}{\mu}$ for general $f$, but there are several important examples that will come up throughout the paper.  Before formally introducing the problem, we will give several examples of well-known $f$-divergences:
\begin{example}[Total Variation]\label{ex:tv}
    Consider $f(x) = \abs{x - 1} - (x - 1)$.  In this case we have
    \begin{align}
        \divf{\nu}{\mu} = \tv(\nu, \mu) = \sup_{A \in \scrF} \abs{\nu(A) - \mu(A)}
    \end{align}
    the total variation distance, where $\scrF$ is the common $\sigma$-algebra over $\cX$ on which $\nu, \mu$ are defined.
\end{example}
\begin{example}[KL Divergence] \label{ex:kl}
    If we set $f(x) = x \log(x) - x + 1$ then we get $\divf{\nu}{\mu} = \dkl{\nu}{\mu}$ the KL divergence.
\end{example}
\begin{example}[Renyi Divergence]\label{ex:renyi}
    If we set $f(x) = x^\lambda - \lambda x + \lambda - 1$, then we get that
    \begin{align}
        \divf{\nu}{\mu} = e^{(\lambda - 1) \dren{\nu}{\mu}}
    \end{align}
    where $\dren{\nu}{\mu}$ is the Renyi divergence of order $\lambda$.  Special cases of $\dren{\nu}{\mu}$ include the case where $\lambda \downarrow 1$ in which case we have the KL divergence again and $\lambda = 2$, in which case we recover (a monotone transformation of) the standard $\chi^2$ divergence.
\end{example}
\begin{example}[$\cE_\gamma$ Divergence]\label{ex:Egamma}
    If, for $\gamma \geq 1$, we set $f_\gamma(x) = (x - \gamma)_+$, then denote by $\cE_\gamma(\nu
    || \mu)$ the divergence associated with this $f$.  This divergence was originally defined
    in~\cite[(2.141)]{YP10} for the study of channel coding. Since then it appeared prominently in
    the study of differential privacy \citep{asoodeh2021local} and wiretap channels
    \citep{liu2017Egamma}.  It will also be crucial in the proof of our lower bounds below.
\end{example}
We now define the primary object of study.  Given $X_{1:n} = (X_1, \dots, X_n)$ a tuple of elements of $\cX$, we define a \emph{selection rule} $\jstar$ as any random variable taking values in $[n]$ and depending in any way on $X_{1:n}$.  We are now ready to formally state the main problem:
\begin{quote}
    \textbf{Question:} Suppose that $\cX$ is an arbitrary set with $\sigma$-algebra $\scrF$ and suppose that $\mu, \nu$ are probability measures with respect to $\scrF$ satisfying, for some fixed $f$, $\divf{\nu}{\mu} < \infty$.  For fixed $\epsilon > 0$, how large does $n$ have to be such that there exists a selection rule $\jstar$ ensuring that $\tv\left( P_{X_{\jstar}}, \nu \right) < \epsilon$?
\end{quote}
As an example, we consider traditional rejection sampling.  We construct a random set $\cS \subset [n]$ by adding $j$ to $\cS$ with probability $\frac 1M \cdot \frac{d \nu}{d\mu}(X_j)$, which is at most $1$ by the assumption that $M > \norm{\frac{d \nu}{d\mu}}_\infty$.  If $\cS$ is nonempty, we let $\jstar$ be an arbitrary element and otherwise we select $\jstar$ uniformly at random.  As we shall show for the sake of completeness (see Lemma \ref{lem:rejsampling} in \Cref{app:rejsampling}), the probability that $\cS$ is empty is at most $e^{-\frac{n}{M}}$ and if $\cS$ is nonempty then $X_{\jstar}$ is distributed according to $\nu$.  Thus if $n = M \log\left( \frac 1\delta \right)$, with probability at least $1 - \delta$, $X_{\jstar}$ is distributed according to $\nu$.  Because we required $M > \norm{\frac{d \nu}{d \mu}}_{\infty}$, we see that $\Theta\left( \norm{\frac{d \nu}{d \mu}}_{\infty} \log\left( \frac 1\delta \right) \right)$ samples are sufficient to exactly sample from $\nu$ with high probability.  The necessity will be seen as a very special case of our lower bounds in the following section.

\section{Sample Complexity of Rejection Sampling}\label{sec:rejsampling}

In this section, we state and sketch the proofs of our main results regarding rejection sampling and fully answer the question raised in \Cref{sec:prelims}.  We will divide our results into two theorems, one providing an upper bound using a modified version of rejection sampling, and the other giving an almost matching lower bound.  We begin with the upper bound:
\begin{theorem}[Upper Bound]\label{thm:upperbound}
    Suppose that $\mu, \nu$ are probability distributions on some set $\cX$ and suppose that $X_1, \dots, X_n \sim \mu$ are independent.  Fix some $f$ satisfying the conditions in Definition \ref{def:fdivergence}.  For $\epsilon > 0$, if
    \begin{align}\label{eq:upperbound}
        n \geq \frac{2}{1-\epsilon} \log\left( \frac{2}{\epsilon} \right) (f')^{-1}\left( \frac{4 \cdot \divf{\nu}{\mu}}{\epsilon} \right) \vee 2
    \end{align}
    then there exists a selection rule $\jstar$ satisfying $\tv\left( P_{X_{\jstar}}, \nu \right) \leq \epsilon$.
\end{theorem}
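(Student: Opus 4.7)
The plan is to use a \emph{truncated} rejection sampler. Writing $r = d\nu/d\mu$, set $M = (f')^{-1}\!\left( 4 \divf{\nu}{\mu}/\epsilon \right)$, and for each $i \in [n]$ independently accept $X_i$ with probability $\min(r(X_i), M)/M$. Let $\cS \subseteq [n]$ be the (random) set of accepted indices; take $\jstar$ to be an arbitrary element of $\cS$ on the event $\{\cS \neq \emptyset\}$, and $\jstar := 1$ otherwise. A routine adaptation of the exact rejection-sampling calculation sketched at the end of \Cref{sec:prelims} shows that, conditional on $\cS \neq \emptyset$, $X_{\jstar}$ is distributed as $\nutil$, where $\nutil$ has density $\min(r, M)/Z$ with respect to $\mu$ and $Z := \ee_\mu[\min(r, M)] \in (0, 1]$. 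Consequently $P_{X_{\jstar}} = (1 - q)\, \nutil + q\, Q$, where $q := \pp(\cS = \emptyset) = (1 - Z/M)^n$ and $Q$ is some auxiliary distribution, so by the triangle inequality $\tv(P_{X_{\jstar}}, \nu) \leq q + \tv(\nutil, \nu)$.

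The core technical step is to convert the integrated bound $\divf{\nu}{\mu} < \infty$ into a pointwise tail bound on $r$. The key inequality I would prove is
\[
(r - M)_+ \leq f(r)/f'(M) \qquad \text{for all } r \geq 0 \text{ and } M \geq 1.
\]
This is trivial when $r \leq M$; for $r > M$ it follows from the convex inequality $f(r) \geq f(M) + f'(M)(r - M)$ together with $f(M) \geq 0$ (which holds because $f \geq 0$ under the standing assumption $f(1) = f'(1) = 0$). Integrating against $\mu$ yields $1 - Z = \ee_\mu[(r - M)_+] \leq \divf{\nu}{\mu}/f'(M)$, and our choice of $M$ guarantees $f'(M) \geq 4 \divf{\nu}{\mu}/\epsilon$, so $1 - Z \leq \epsilon/4$. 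A short computation --- writing $r - \min(r,M)/Z = \min(r,M)(Z - 1)/Z + (r - M)_+$ and integrating the two terms in absolute value --- then gives $\tv(\nutil, \nu) \leq 1 - Z \leq \epsilon/4$.

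It remains to bound $q$. Since $Z \geq 1 - \epsilon/4$, the per-sample acceptance probability is $Z/M \geq (1 - \epsilon/4)/M$, so $q \leq \exp\!\left( -n(1-\epsilon/4)/M \right)$. The hypothesis \eqref{eq:upperbound} on $n$, together with the elementary inequality $\frac{2}{1-\epsilon} \geq \frac{1}{1 - \epsilon/4}$ valid for $\epsilon \in (0,1)$, forces $q \leq \epsilon/2$, whence $\tv(P_{X_{\jstar}}, \nu) \leq \epsilon/2 + \epsilon/4 \leq \epsilon$. The only non-routine ingredient --- and the main obstacle --- is the convexity estimate $(r - M)_+ \leq f(r)/f'(M)$; this is exactly what lets us replace the uniform $L^\infty$ bound used in classical rejection sampling with the much weaker $f$-divergence assumption, and everything else (the per-sample acceptance calculation, the exponential Bernoulli tail bound, and the triangle inequality) is a direct transcription of the standard analysis.
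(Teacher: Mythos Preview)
Your argument is correct and follows the same high-level template as the paper (truncate the likelihood ratio, run rejection sampling on the truncated target, bound the truncation error via the $f$-divergence, and finish with the triangle inequality), but the two proofs differ in their choice of truncation and in the core tail estimate.

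The paper truncates by \emph{conditioning}: it works with $\nu_M = \nu(\,\cdot \mid r \leq M)$, whose density is $r\,\I[r \leq M]/\nu(r\leq M)$, and then needs the bound $\nu(r>M) \leq 2\,\divf{\nu}{\mu}/f'(M/2)$ (Lemma~\ref{lem:expectedrestrictedrdbound}), proved via the monotonicity of $f(s)/s$ on $s>1$ together with $f(M) \geq (M/2)f'(M/2)$. You instead truncate by \emph{clipping}, taking $\nutil$ with density $\min(r,M)/Z$, and your key estimate is the pointwise inequality $(r-M)_+ \leq f(r)/f'(M)$, which after integration reads $\cE_M(\nu\|\mu) \leq \divf{\nu}{\mu}/f'(M)$. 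Your inequality is a one-line consequence of convexity and is strictly sharper (no factor of~$2$, no $M/2$ inside~$f'$); it is also pleasingly the same $\cE_\gamma$ quantity that drives the paper's lower bound in Lemma~\ref{lem:egammalowerbound}. The remaining steps---the per-sample acceptance probability, the exponential bound on $q$, and the convexity bound $\tv((1-q)\nutil + qQ,\nu) \leq q + \tv(\nutil,\nu)$---are the same in both proofs, and your numerical bookkeeping ($1-Z \leq \epsilon/4$, $q \leq \epsilon/2$) checks out with a little slack to spare.
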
 

We will split our discussion into two cases: the superlinear case, where $f'(t) \uparrow \infty$ as $t \uparrow \infty$ and the linear case, where $f'(t)$ is bounded from above.  In the former, we will see that as $n \uparrow \infty$, we can always use rejection sampling to get an increasingly good approximation of a sample from $\nu$ because $(f')^{-1}$ is finite on the entire positive real line.  In the linear case, however, we shall shortly prove that no selection rule can hope to get arbitrarily close to $\nu$ in total variation.  Before sketching the proof of \Cref{thm:upperbound}, we provide some examples.
\begin{example}[Total Variation]\label{ex:tv2}
    Recall that total variation is the $f$-divergence such that $f(x) = \abs{x - 1} - x + 1$.  Note that $f'(x) = 0$ for all $x > 1$ and so $(f')^{-1}(M)$ is infinite for $M > 0$.  Thus \Cref{thm:upperbound} is vacuous when we only have control over total variation, as expected.
\end{example}
\begin{example}[KL Divergence]\label{ex:kl2}
    As we saw in \Cref{ex:kl}, KL divergence is the $f$-divergence where we set $f(x) = x \log(x) - x + 1$.  In this case, we see that $f'(x) = \log(x)$ and so \Cref{thm:upperbound} tells us that in order to be $\epsilon$-close in total variation, $\widetilde{O}\left( \exp\left( \frac{\dkl{\nu}{\mu}}{\epsilon} \right) \right)$ samples suffice.
\end{example}
\begin{example}[Renyi Divergence]\label{ex:renyi2}
    Remember from \Cref{ex:renyi} that $f(x) = x^\lambda - \lambda x + \lambda - 1$ for $\lambda > 1$ defines the Renyi divergence.  In this case we see that $\widetilde{O}\left( e^{\dren{\nu}{\mu}} \epsilon^{- \frac{1}{\lambda - 1}} \right)$ samples suffice.  As $\lambda \uparrow \infty$, we recover the standard rejection sampling bound by taking $\epsilon \downarrow 0$ and noting that $D_\infty(\nu || \mu) = \norm{\frac{d\nu}{d\mu}}_\infty$.  In the special case of $\lambda = 2$, we note that Renyi divergence recovers $\chi^2$-divergence and note that $\widetilde{O}\left( \frac{\chi^2(\nu ||\mu)}{\epsilon} \right)$ samples suffice.
\end{example}
We now sketch the proof of the upper bound, deferring details to \Cref{app:rejsampling}:
\begin{proof}[Proof of \Cref{thm:upperbound}]
    Let $\nuM$ denote the measure $\nu$ conditioned on the event that $\frac{d \nu}{d\mu} \leq M$ and let $\nutil$ denote the law of the sample produced by rejection sampling from $\nuM$ with $n$ samples.  The standard analysis of rejection sampling tells us that if $n = \Omega\left( \log\left( \frac{1}{\epsilon} \right)M \right)$ then $\tv\left( \nutil, \nuM \right) \leq \epsilon$.  We show in Lemma \ref{lem:expectedrestrictedrdbound} that if $M > 1$, then
    \begin{align}
        \nu\left( \frac{d\nu}{d\mu} > M \right) \leq \frac{2 \cdot \divf{\nu}{\mu}}{f'(M/2)}.
    \end{align}
    Using this result, we show that $\tv\left( \nuM, \nu \right) \leq \frac{2 \cdot \divf{\nu}{\mu}}{f'(M/2)}$ and conclude by applying the triangle inequality.
\end{proof}

We now turn to our lower bounds.  In particular, we show that for any $f$-divergence, there exist distributions $\mu, \nu$ satisfying $\divf{\nu}{\mu} < \infty$ such that in order for there to exist a selector rule guaranteeing that $\tv\left( P_{X_{\jstar}}, \nu \right) < \epsilon$, we require $n$ to be sufficiently large.  We will again split our discussion into the linear and superlinear cases.  For the linear case, we have the following lower bound:
\begin{proposition}[Lower Bound, Linear Case]\label{prop:lowerboundlinearinformal}
    Suppose that $f$ is a convex function as in Definition \ref{def:fdivergence} satisfying $f'(t) \leq C < \infty$ for all $t > 1$.  Then there exist distributions $\mu, \nu$ such that $\divf{\nu}{\mu} < \infty$ and $\epsilon = \epsilon(f, \divf{\nu}{\mu}) > 0$ such that for all $n$ and  $X_1,\ldots X_n \stackrel{iid}{\sim}\mu$.
    \begin{align}
        \inf_{\jstar} \tv\left( P_{X_{\jstar}}, \nu \right) \geq \epsilon
    \end{align}
     where the infimum is over all selection rules $\jstar$.
\end{proposition}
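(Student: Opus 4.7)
The plan is to exploit the key fact that when $f'$ is bounded above on $(1,\infty)$, the $f$-divergence $\divf{\nu}{\mu}$ remains finite even when $\nu$ has a nontrivial singular component with respect to $\mu$. Meanwhile, any sample $X_{\jstar}$ produced from i.i.d.\ draws $X_1, \dots, X_n \sim \mu$ must almost surely lie in $\supp(\mu)$, so a singular mass of $\nu$ yields an unavoidable gap in total variation that is uniform in $n$.

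Concretely, I would take $\cX = [0,1] \cup \{2\}$, let $\mu$ be the uniform distribution on $[0,1]$, and, for a parameter $\alpha \in (0,1/2)$ to be chosen, define
\begin{align}
    \nu = (1-\alpha)\, \mu + \alpha\, \delta_2.
\end{align}
The Lebesgue decomposition of $\nu$ has absolutely continuous part $(1-\alpha)\mu$ (with Radon-Nikodym derivative $d\nu/d\mu = 1-\alpha$ on $[0,1]$) and singular mass $\alpha$ at $\{2\}$. Applying Definition~\ref{def:fdivergence} and bounding the singular contribution by $f'(\infty) \leq C$ yields
\begin{align}
    \divf{\nu}{\mu} \leq f(1-\alpha) + C\alpha,
\end{align}
which is finite for $\alpha$ small because $f(1)=f'(1)=0$ forces $f$ to be continuous, and in particular finite, in a neighborhood of $1$.

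For the lower bound, I would observe that for every $n$ and every selection rule $\jstar$ depending on $X_1,\dots,X_n$, the random variable $X_{\jstar}$ is supported on the joint support of $X_{1:n}$, which is contained in $[0,1]$ almost surely. Thus $P_{X_{\jstar}}(\{2\}) = 0$, while $\nu(\{2\}) = \alpha$. Evaluating the supremum defining total variation (Example~\ref{ex:tv}) at the test set $A = \{2\}$ gives
\begin{align}
    \tv(P_{X_{\jstar}}, \nu) \geq \abs{\nu(\{2\}) - P_{X_{\jstar}}(\{2\})} = \alpha,
\end{align}
uniformly in $n$ and $\jstar$. Setting $\epsilon := \alpha$ completes the argument, with $\epsilon$ depending on $f$ and $\divf{\nu}{\mu}$ through the preceding display.

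There is no substantive obstacle in this argument: the entire content sits in the construction. The key insight is simply that a bounded-derivative $f$ penalizes singular components only by the linear cost $f'(\infty)\leq C$ instead of an infinite one, so $\nu$ is free to place positive mass outside $\supp(\mu)$ while keeping $\divf{\nu}{\mu}$ finite; once this is noted, the support-mismatch lower bound on total variation is immediate.
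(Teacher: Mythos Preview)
Your proposal is correct and follows essentially the same idea as the paper: place a positive mass of $\nu$ outside the support of $\mu$, observe that $\divf{\nu}{\mu}$ stays finite because the singular part is penalized only by $f'(\infty)\le C$, and conclude that any $P_{X_{\jstar}}$, being supported in $\supp(\mu)$, must miss that mass. The paper's version routes the last step through the observation $\norm{dP_{X_{\jstar}}/d\mu}_\infty\le n$ (Lemma~\ref{lem:jstarrdboundedbyn}) before invoking the support mismatch, whereas you argue support containment directly from $X_{\jstar}\in\{X_1,\dots,X_n\}$; this is a harmless and slightly more streamlined shortcut.
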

Note that Proposition \ref{prop:lowerboundlinearinformal} matches the upper bound for linear $f$ in \Cref{thm:upperbound} and reflect the fact that for $f$ that do not grow superlinearly, $\divf{\nu}{\mu} < \infty$ provides very weak control on $\nu$.  Intuitively this should be clear: note that if $f$ is in the linear regime, then $\divf{\nu}{\mu}$ can remain finite even when $\nu$ is singular with respect to $\mu$ and thus samples from $\mu$ can never hope to approximate $\nu$ to arbitrary precision.  A full proof can be found in \Cref{app:rejsampling}.

Moving on to the more interesting case of superlinear $f$, we provide a lower bound that matches the upper bound found in \Cref{thm:upperbound} for all superlinear $f$.  
\begin{theorem}[Lower Bound, Superlinear Case]\label{thm:lowerbound}
    Let $f$ be a convex function as in Definition \ref{def:fdivergence} that grows superlinearly.  Then for all $0 < \epsilon \leq 1/4$ and $\delta > 2 f(1/2)$, there exists a pair of measures $\nu, \mu$ such that $\divf{\nu}{\mu} \leq \delta$ and any selection rule $\jstar$ satisfying $\tv(P_{X_{\jstar}}, \nu) \leq \epsilon$ requires
    \begin{align}\label{eq:lowerboundonen}
        n \geq \frac 12 \cdot (f')^{-1}\left( \frac{\delta}{2\epsilon} \right).
    \end{align}
\end{theorem}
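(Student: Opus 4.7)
The plan is to reduce the total-variation lower bound to a simpler information-theoretic estimate, and then produce an explicit two-point construction. For any selection rule $\jstar$ based on $n$ i.i.d.\ draws from $\mu$ and any measurable $A \subseteq \cX$, a union bound gives
\begin{align*}
P_{X_{\jstar}}(A) = \sum_{j=1}^n \pp(\jstar = j, \, X_j \in A) \leq \sum_{j=1}^n \pp(X_j \in A) = n \, \mu(A),
\end{align*}
so taking the supremum over $A$ yields $\tv(P_{X_{\jstar}}, \nu) \geq \sup_A (\nu(A) - n \mu(A)) = \cE_n(\nu || \mu)$. Thus it suffices to exhibit $\mu, \nu$ with $\divf{\nu}{\mu} \leq \delta$ and $\cE_n(\nu || \mu) > \epsilon$ whenever $n$ is strictly below the threshold in~\eqref{eq:lowerboundonen}. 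This is precisely the point at which the $\cE_\gamma$ divergence of \Cref{ex:Egamma} enters, as advertised.

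For the construction I would take $\cX = \{0,1\}$ with $\mu = \ber(q)$ and $\nu = \ber(p)$, for which
\begin{align*}
\cE_n(\nu || \mu) = (p - nq)_+, \qquad \divf{\nu}{\mu} = (1-q) \, f\!\left(\tfrac{1-p}{1-q}\right) + q \, f\!\left(\tfrac{p}{q}\right).
\end{align*}
The natural choice is $p = 2 \epsilon$ and $q$ just below $\epsilon/n$, giving $\cE_n > \epsilon$ while $p/q \approx 2n$. The hypothesis $\epsilon \leq 1/4$ forces $(1-p)/(1-q) \in (1/2, 1]$; since $f$ is nonnegative and decreasing on this interval with $f(1) = 0$, the first divergence term is at most $f(1/2) < \delta/2$ by hypothesis. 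For the dominant term, convexity ($f(t) \leq t f'(t)$ for $t \geq 1$) yields $q f(p/q) \leq 2 \epsilon f'(2n)$, so $\divf{\nu}{\mu} \leq \delta$ is ensured whenever $2 \epsilon f'(2n) \leq \delta - f(1/2)$, giving a lower bound of the claimed form up to an absolute constant.

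The main obstacle is sharpening this argument to produce exactly the constants in~\eqref{eq:lowerboundonen} rather than merely matching them up to a factor of two in the $f'$ argument. The wasteful step is the crude bound $f((1-p)/(1-q)) \leq f(1/2)$, which is only tight as $\epsilon \uparrow 1/4$: for small $\epsilon$, a Taylor expansion around $1$ gives $f((1-p)/(1-q)) = O(f''(1) \epsilon^2)$, which is negligible compared to $\delta$ and frees nearly the full budget $\delta$ for the $qf(p/q)$ term, producing $2n \leq (f')^{-1}(\delta/(2\epsilon))$ directly. I would therefore split into two regimes: for $\epsilon$ small, use the Taylor estimate to absorb the lower-order term; for $\epsilon$ close to $1/4$ the target lower bound is $O(1)$ and follows straightforwardly from the hypothesis $\delta > 2 f(1/2)$ via a direct Bernoulli computation. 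A minor subsidiary point is to upgrade $\cE_n \geq \epsilon$ to strict inequality, which is handled by perturbing $q$ infinitesimally downward at the end of the argument.
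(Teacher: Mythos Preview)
Your core approach matches the paper's exactly: the same reduction to $\cE_n(\nu\|\mu)$ via $P_{X_{\jstar}}(A) \leq n\,\mu(A)$, the same Bernoulli construction $\nu = \ber(2\epsilon)$, $\mu = \ber(\epsilon/n)$, and the same two estimates $q\,f(p/q) \leq 2\epsilon f'(2n)$ (from convexity and $f(1)=0$) together with $f\bigl((1-p)/(1-q)\bigr) \leq f(1/2)$. The paper stops precisely at the bound $\divf{\nu}{\mu} \leq 2\epsilon f'(2n) + f(1/2)$ and directly asserts this gives $\divf{\nu}{\mu} \leq \delta$ once $n < \tfrac12\,(f')^{-1}(\delta/(2\epsilon))$ and $\delta > 2f(1/2)$; your proposed Taylor expansion, regime split, and perturbation of $q$ to force strict inequality all go beyond what the paper actually does and are not needed to reproduce its argument.
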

While we provide full details in Appendix \ref{app:rejsampling}, we provide a sketch of the proof here:
\begin{proof}
    A simple computation found in Lemma \ref{lem:jstarrdboundedbyn} tells us that if $\nutil$ is the law of $X_{\jstar}$, then the Radon-Nikodym derivative of $\nutil$ with respect to $\mu$ is uniformly bounded by $n$.  Another computation, found in Lemma \ref{lem:egammalowerbound} tells us that if $\nutil$ has likelihood ratio bounded by $n$, then we can lower bound $\tv\left( \nutil, \nu \right)$ by $\cE_n(\nu || \mu)$.  Combining these facts, we see that it suffices to exhibit two distributions $\mu, \nu$, such that $\divf{\nu}{\mu} \leq \delta$ and $\cE_n\left( \nu || \mu \right) \geq \epsilon$ for all $n$ not satisfying \eqref{eq:lowerboundonen}.  Thus, we have reduced the proof to determining if the point $(\epsilon, \delta)$ lies above some point in the \emph{joint range} of $\mu$ and $\nu$, i.e., the set $\left\{ (\cE_n(\nu || \mu), \divf{\nu}{\mu}) \right\}$ where $\mu$ and $\nu$ vary over all distributions.  In \citet{harremoes2011pairs}, it was shown that the distributions extremizing the joint range are typically pairs of Bernoulli random variables.  We thus consider $\mu = \ber\left( \frac{\epsilon}{n} \right)$ and $\nu = \ber(2 \epsilon)$ and show that $\cE_n(\nu || \mu) = \epsilon$, while $\divf{\nu}{ \mu} \leq \delta$, unless $n$ is sufficiently large so as to satisfy \eqref{eq:lowerboundonen}.  The result follows.
\end{proof}
Note that Theorem \ref{thm:lowerbound} tells us that, up to logarithmic factors, the sample complexity determined in Theorem \ref{thm:upperbound} is optimal.  There is one disadvantage to the above result, however: as is clear from the proof, the distributions $\mu$ and $\nu$ depend on $n$ and thus the order of quantifiers in Theorem \ref{thm:lowerbound} is weaker than that in Theorem \ref{thm:upperbound}.  In order to address this shortcoming, we prove a slightly weaker lower bound under a mild growth condition on $f$:
\begin{theorem}\label{thm:lowerboundweakf}
    Let $f$ be a convex function as in Definition \ref{def:fdivergence} that grows superlinearly.  Suppose that $f$ satisfies a mild growth condition (see \Cref{thm:lowerboundsuperlinearformal} for formal statement).  Then, for any $\zeta > 0$, there exist distributions $\mu, \nu$ with $\divf{\nu}{\mu} < \infty$ such that for all sufficiently large $n \in \mathbb{N}$, with $X_1, \dots, X_n$ sampled independently from $\mu$, it holds that
    \begin{align}\label{eq:lowerboundweakf}
        \inf_{\jstar} \tv\left( P_{X_{\jstar}}, \nu \right) \geq \frac {\zeta^{1+\zeta}}8 \cdot \left( \frac{\divf{\nu}{\mu}}{f'(n)} \right)^{1 + \zeta}
    \end{align}
    where the infimum is taken over all selection rules. 
\end{theorem}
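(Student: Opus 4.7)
The plan is to mimic the reduction used in the proof of \Cref{thm:lowerbound} but to replace the $n$-dependent Bernoulli pair by one fixed, $n$-independent pair $(\mu,\nu)$ whose likelihood ratio has a carefully shaped heavy tail. As before, Lemma \ref{lem:jstarrdboundedbyn} forces $dP_{X_{\jstar}}/d\mu\leq n$ pointwise for any selection rule, and Lemma \ref{lem:egammalowerbound} then gives $\tv(P_{X_{\jstar}},\nu)\geq \cE_n(\nu\|\mu)$. Hence it is enough to exhibit a single $(\mu,\nu)$ with $\divf{\nu}{\mu}<\infty$ such that $\cE_n(\nu\|\mu)\geq (\zeta^{1+\zeta}/8)(\divf{\nu}{\mu}/f'(n))^{1+\zeta}$ holds simultaneously for all sufficiently large $n$.

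I would realize such a pair on $\bbN$. Fixing $A>1$ and $\delta\in(0,\zeta)$, set $L_k:=(f')^{-1}(A^k)$, so $f'(L_k)=A^k$ and $L_k\uparrow\infty$ by superlinearity of $f$. Take
\begin{align*}
    p_k\;:=\;\frac{c}{L_k\,A^k\,k^{1+\delta}},\qquad q_k\;:=\;p_k L_k\;=\;\frac{c}{A^k\,k^{1+\delta}}\qquad (k\geq 1),
\end{align*}
where $c$ is a small constant, and add a single adjustment atom at $k=0$ so that $\mu$ (with masses $(p_k)$) and $\nu$ (with masses $(q_k)$) become probability measures. The adjustment atom has likelihood ratio close to $1$, contributes $O(1)$ to $\divf{\nu}{\mu}$, and contributes nothing to $\cE_n$ for $n\geq 2$. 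Notice that this construction is entirely $n$-free.

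The finiteness of $\divf{\nu}{\mu}$ is where the mild growth condition on $f$ (essentially $f(t)\leq C\,t\,f'(t)$ in the tail, a property shared by KL, $\chi^2$, and every Renyi divergence) enters: using this inequality,
\begin{align*}
    p_k\,f(L_k)\;=\;q_k\,\frac{f(L_k)}{L_k}\;\leq\;C\,q_k\,f'(L_k)\;=\;\frac{Cc}{k^{1+\delta}},
\end{align*}
which is summable. To lower bound $\cE_n(\nu\|\mu)=\sum_{L_k>n}p_k(L_k-n)$, let $k^*:=\min\{k:L_k>2n\}$, so $A^{k^*}\asymp f'(n)$ and $k^*\asymp \log_A f'(n)$. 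For $k\geq k^*$ the inequality $L_k-n\geq L_k/2$ gives $p_k(L_k-n)\geq q_k/2$, and summing the geometric-like tail yields
\begin{align*}
    \cE_n(\nu\|\mu)\;\geq\;\tfrac12\sum_{k\geq k^*}q_k\;\gtrsim\;\frac{1}{f'(n)\,(\log f'(n))^{1+\delta}}.
\end{align*}
Because $f'(n)\to\infty$, the factor $(\log f'(n))^{1+\delta}$ is dominated by $f'(n)^{\zeta-\delta}$ for all large enough $n$, producing the desired bound $\cE_n(\nu\|\mu)\geq C\,f'(n)^{-(1+\zeta)}$; tracking the normalization constants and the value of $\divf{\nu}{\mu}$ is what matches the explicit factor $\zeta^{1+\zeta}/8$ in the theorem.

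The main obstacle is the joint calibration of $A$, $\delta$, and the masses $(p_k)$: the levels $L_k$ must grow fast enough that $\sum_k q_k$ converges (so that $\divf{\nu}{\mu}<\infty$), yet the tail $\sum_{k\geq k^*}q_k$ must decay only slightly faster than $1/f'(n)$. The ``mild growth condition'' of \Cref{thm:lowerboundsuperlinearformal} is precisely the tail regularity of $f$ that permits this balance to be struck uniformly across all large $n$, thereby closing the gap between the per-$n$ construction of \Cref{thm:lowerbound} and the $n$-uniform lower bound sought here.
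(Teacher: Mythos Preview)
Your reduction via Lemmas \ref{lem:jstarrdboundedbyn} and \ref{lem:egammalowerbound} to lower-bounding $\cE_n(\nu\|\mu)$ for a single fixed pair is exactly what the paper does. The construction of that pair, however, is genuinely different, and your discrete version has two concrete gaps.

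First, the growth condition you invoke is not the one the paper uses, and in fact your stated condition is vacuous: for any convex $f$ with $f(1)=0$, one has $f(t)=\int_1^t f'(s)\,ds\le (t-1)f'(t)\le t f'(t)$ automatically, so ``$f(t)\le C t f'(t)$'' imposes nothing. The paper's actual condition is that $t\mapsto tf''(t)/f'(t)^{1+\zeta}$ is eventually decreasing and tends to zero; it is a regularity condition on the \emph{second} derivative. Second, your bound naturally comes out in terms of $f'(2n)$ rather than $f'(n)$: from $L_{k^*-1}\le 2n$ you only get $A^{k^*}\le A\,f'(2n)$, so your tail sum gives $\cE_n(\nu\|\mu)\gtrsim 1/(f'(2n)(\log f'(2n))^{1+\delta})$. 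Without a doubling-type hypothesis on $f'$ you cannot replace $f'(2n)$ by $f'(n)$, and the paper's condition does not obviously supply one. Relatedly, your claim that the constants can be tracked to produce the exact factor $\zeta^{1+\zeta}/8$ is not substantiated; in the discrete setup there are several implicit constants (choice of $A$, the geometric tail bound, the adjustment atom) that would be hard to match.

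The paper sidesteps both issues by working continuously: it takes $\mu$ nonatomic and specifies the law of $W=\frac{d\nu}{d\mu}$ under $\mu$ directly by its tail $\pp(W>t)=\beta f''(t)/f'(t)^{2+\zeta}$ for $t\ge (f')^{-1}(\delta)$ (with an atom at $0$). The monotonicity of this tail is \emph{exactly} where the growth condition on $tf''(t)/f'(t)^{1+\zeta}$ is used. Then $\divf{\nu}{\mu}$ and $\cE_n(\nu\|\mu)=\int_n^\infty \pp(W>t)\,dt - (n\!\!\mod\!\text{term})$ are computed by the substitution $u=f'(t)$, yielding closed forms $\asymp \delta$ and $\asymp (\delta/f'(n))^{1+\zeta}$ with no $f'(2n)$ appearing; the same growth condition is used once more to show the correction term $n\beta f''(n)/f'(n)^{2+\zeta}$ is asymptotically negligible. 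This parametrization by $f'$ is the device that makes the constants come out cleanly and avoids the discretization artifacts in your scheme.
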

We note that the mild growth condition required in \Cref{thm:lowerboundweakf} is purely technical and likely could be removed with more elaborate analysis; on the other hand, this condition is satisfied by all commonly used, superlinear $f$-divergences of which we are aware.  By \Cref{thm:upperbound}, we see that if
\begin{equation}
   n = \widetilde{O}\left((f')^{-1}\left( \frac{\divf{\nu}{\mu}}{\epsilon} \right) \right),
\end{equation} 
then rejection sampling suffices to generate an $\epsilon$-approximate sample from $\nu$.  On the other hand, setting $\zeta = o(1)$ as $\epsilon \downarrow 0$, \Cref{thm:lowerboundweakf} tells us that in the worst case, we require
\begin{equation}
    n = \widetilde \Omega\left( (f')^{-1}\left( \frac{\divf{\nu}{\mu}}{\epsilon^{1- o(1)}} \right) \right)
\end{equation}
samples for the right hand side in \eqref{eq:lowerboundweakf} to be below $\epsilon$.  Thus, as $\epsilon \downarrow 0$, these bounds essentially match.  In particular, because the $f$-divergences in \Cref{ex:kl2,ex:renyi2} satisfy the mild growth condition, the sample complexity upper bounds derived in those examples are indeed tight for all sufficiently large $n$.

We defer a detailed proof of \Cref{thm:lowerboundweakf} to \Cref{app:rejsampling}.  The method is similar to that of \Cref{thm:lowerbound} in that we reduce to lower bounding $\cE_n(\nu || \mu)$ for distributions $\nu, \mu$ with bounded $f$-divergence.  The difference is that we exhibit a \emph{single} pair $(\mu, \nu)$, depending on $f$ but independent of $n$, such that the desired properties hold.

Combining \Cref{thm:upperbound,thm:lowerbound,thm:lowerboundweakf}, we have shown that $\widetilde \Theta\left( (f')^{-1}(\divf{\nu}{\mu}) / \epsilon \right)$ samples are both necessary and sufficient to generate an $\epsilon$-approximate sample from $\nu$.  One immediate application of these results is to the problem of estimating means according to $\nu$ uniformly over some function class $\cF$ when given samples from $\mu$.  In \Cref{sec:importance}, we compare estimators using \Cref{thm:upperbound} to the classical importance sampling approach.  For the sake of space, this is deferred to the appendix; we now proceed to our main application regarding smoothed online learning.

\section{Smoothed Online Learning}\label{sec:smoothedonline}
Our most important immediate application is to the question of generalizing smoothed online learning as outlined in the introduction.
In this section, we extend results proved for smoothed adversaries \citep{rakhlin2011online,block2022smoothed,haghtalab2022smoothed,haghtalab2022oracle} described in the introduction to allow for a more powerful Nature.  To do this, we employ the following definition:
\begin{definition}\label{def:weaksmooth}
    Fix a base measure $\mu$ on some set $\cX$.  We say that a measure $\nu$ is $(f,\sigma)$-smooth (or $f$-smooth) with respect to $\mu$ if $\divf{\nu}{\mu} \leq \frac 1\sigma$.  An adversary is $(f, \sigma)$-smooth with respect to $\mu$ if for all $1 \leq t \leq T$, the distribution $p_t$ of $x_t$, conditioned on all the history, is $(f, \sigma)$-smooth.
\end{definition}
Definition \ref{def:weaksmooth} motivates an obvious question: can we achieve improvement over the fully adversarial setting even when we only require Nature to be $f$-smooth?  The answer will, of course, depend on what $f$ we choose.  For the case of eventually linear $f$, for example, we see that no improvement is possible in general:
\begin{proposition}\label{prop:tvlowerbound}
    Suppose that $\cF = \left\{ x \mapsto \I[x \geq \theta] | \theta \in [0,1] \right\}$ is the class of thresholds in one dimension.  Let $f$ be a convex function as in Definition \ref{def:fdivergence} that is eventually linear, in the sense that $f'$ is bounded above.  For all $0 < \sigma < 1$ there is a $(f, \sigma)$-smooth adversary such that any learner experiences $\ee\left[ \reg_T \right] = \Omega(T)$.
\end{proposition}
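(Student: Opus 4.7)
The plan is to construct, for each $\sigma \in (0,1)$, an $(f,\sigma)$-smooth adversary on $\cX = [0,1]$ that forces every learner to incur expected regret $\Omega(T)$ against the class of one-dimensional thresholds. The key ingredient I would exploit is that, under the hypothesis $C := f'(\infty) = \sup_{t > 0} f'(t) < \infty$, a mixture of an arbitrary Dirac with an atomless $\mu$ has bounded $f$-divergence: if $p = \lambda \delta_c + (1-\lambda)\mu$, then the Lebesgue decomposition gives absolutely continuous part $(1-\lambda)\mu$ and singular part $\lambda \delta_c$, so \Cref{def:fdivergence} together with the convexity bound $f(1-\lambda) \leq \lambda f(0) + (1-\lambda)f(1) = \lambda f(0)$ yields
\begin{equation*}
\divf{p}{\mu} = f(1-\lambda) + f'(\infty)\lambda \leq \lambda\bigl(f(0) + C\bigr).
\end{equation*}
Taking $\mu := \mathrm{Unif}[0,1]$ and $\lambda := \min\bigl\{1, [\sigma(f(0)+C)]^{-1}\bigr\}$, which is a strictly positive constant depending only on $f$ and $\sigma$, any such mixture is $(f,\sigma)$-smooth.

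The adversary I would then use samples a hidden $\theta^\star \sim \mathrm{Unif}[0,1]$ at the outset and maintains the version space $I_t \subseteq [0,1]$ of thresholds consistent with the labels revealed so far. At round $t$ it plays $p_t := \lambda \delta_{m_t} + (1-\lambda)\mu$, where $m_t$ is the midpoint of $I_t$, samples $x_t \sim p_t$, reveals $y_t := \I[x_t \geq \theta^\star]$, and updates $I_{t+1}$ to the sub-interval of $I_t$ consistent with the new observation. Because $p_t$ depends on $\theta^\star$ only through the learner-visible history, an easy induction on $t$ shows that the posterior of $\theta^\star$ given that history is exactly $\mathrm{Unif}[I_t]$ throughout.

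The regret lower bound then has two ingredients. First, the comparator $x \mapsto \I[x \geq \theta^\star]$ lies in $\cF$ and achieves zero cumulative loss by construction, so $\inf_{f\in\cF}\sum_t \ell(f(x_t),y_t) = 0$. Second, on each round where $x_t = m_t$ (a Bernoulli$(\lambda)$ event conditional on the history), the label $y_t = \I[m_t \geq \theta^\star]$ is Bernoulli$(1/2)$ under the learner's posterior, since $m_t$ is the median of $\mathrm{Unif}[I_t]$; moreover $y_t$ is independent of the learner's prediction $\hat y_t$, which is measurable with respect to the history and $x_t$. Under the indicator loss this forces expected loss exactly $1/2$ on such rounds, regardless of the learner, and summing over $t$ gives $\ee[\reg_T] \geq \lambda T/2 = \Omega(T)$.

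The hard part will be carefully maintaining the invariant $\theta^\star \mid \mathrm{history} \sim \mathrm{Unif}[I_t]$ across both types of rounds — the smooth component of $p_t$ can also land inside $I_t$ and further shrink it — and this is precisely why the Dirac is placed at the adaptive midpoint $m_t$ rather than at $\theta^\star$ itself: playing $\delta_{\theta^\star}$ would leak $\theta^\star$ on the first singular round and collapse the argument.
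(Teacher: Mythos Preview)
Your proposal is correct and follows the same approach as the paper: both mix a Dirac with the uniform measure on $[0,1]$, use $f'(\infty)<\infty$ to control the $f$-divergence of the mixture, place the Dirac at a point where the label of a hidden threshold $\theta^\star$ is a fair coin given the learner's information, and conclude that each ``Dirac round'' contributes expected loss $1/2$. The paper realizes $\theta^\star$ as a random dyadic expansion $\theta^\star=\tfrac12+\sum_s\epsilon_s2^{-s-1}$ and puts the Dirac at the partial sum $\bar x_t$, whereas you put it at the midpoint $m_t$ of the posterior interval $I_t$; your adaptive bookkeeping is arguably cleaner, since the explicit invariant $\theta^\star\mid\text{history}\sim\mathrm{Unif}(I_t)$ sidesteps a minor sloppiness in the paper's independence claim (uniform draws can in principle leak information about $\epsilon_t$, though the effect is $O(1)$ over all rounds). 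One small fix: your choice $\lambda=\min\{1,[\sigma(f(0)+C)]^{-1}\}$ tacitly assumes $f(0)<\infty$, which does not follow from $f'(\infty)<\infty$; instead just take any $\lambda>0$ small enough that $f(1-\lambda)+C\lambda\le 1/\sigma$, which exists by continuity of $f$ at $1$.
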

This result, proved in \Cref{app:smoothedonline}, is not surprising in light of the fact that fully adversarial online learning of $\cF$ is impossible; if $f$ is linear, then Nature can mix the worst-case adversary with a base distribution and still incur linear regret with finite $\divf{\nu}{\mu}$.  More interesting is the case of stronger $f$-divergences.  Before we present our results, we state our main technical tool, which generalizes a technique introduced in \citet{haghtalab2022smoothed} and extended in \citet{block2022smoothed}.  In those works, the authors introduced a coupling between the sequence contexts produced by a smooth, adaptive adversary and a larger set of independent sampled drawn from the base measure.  Using the tools developed in \Cref{sec:rejsampling}, we extend this technique beyond the case of uniformly bounded Radon-Nikodym derivatives:
\begin{lemma}\label{lem:coupling}
    Let $\cX$ be a set and $\mu$ some measure on $\cX$.  Suppose that an adversary is $(f,\sigma)$-smooth with respect to $\mu$ for some $f$ satisfying the conditions of Definition \ref{def:fdivergence} such that $\sup f'(t) = \infty$.  For any $T$ and any $\epsilon, \delta > 0$, if
    \begin{align}
        n \geq \frac{1}{1 - \epsilon} \log\left( \frac{T}{\delta} \right)(f')^{-1}\left( \frac{1}{\epsilon \sigma} \right)
    \end{align}
    then there exists a coupling between $(x_1, \dots, x_T)$ and $\left\{ Z_{t,j} | 1 \leq t \leq T \text{ and } 1 \leq j \leq n \right\}$ such that the $(x_1, \dots, x_T)$ are distributed according to the adversary, the $Z_{t,j} \sim \mu$ are independent, and, with probability at least $1 - \delta$, there are selection rules $\jstar_t$ such that $\tv\left( P_{x_t}, P_{Z_{t, \jstar_t}} \right) \leq \epsilon$.
\end{lemma}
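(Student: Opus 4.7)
The plan is to apply \Cref{thm:upperbound} once per round and then union bound over the $T$ rounds. I would construct the coupling inductively in $t$: given the realized history $(x_1,\dots,x_{t-1})$, draw $Z_{t,1},\dots,Z_{t,n}$ iid from $\mu$ independently of everything so far, and note that $(f,\sigma)$-smoothness yields $\divf{p_t}{\mu} \leq 1/\sigma$, where $p_t$ is the adversary's conditional law of $x_t$ given the history. Next, apply the selection rule furnished by \Cref{thm:upperbound} (the modified rejection sampler) to the fresh block $Z_{t,1:n}$ with target $p_t$, and call the resulting index $\jstar_t$. Finally, sample $x_t\sim p_t$; any coupling with $Z_{t,\jstar_t}$ works since the lemma only asserts a TV bound on marginals.

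The quantitative threshold then falls out of the internals of \Cref{thm:upperbound}. Setting the truncation level $M = c\cdot(f')^{-1}(1/(\epsilon\sigma))$ for a universal constant $c$, \Cref{lem:expectedrestrictedrdbound} (invoked inside the proof of \Cref{thm:upperbound}) gives $\tv((p_t)_M,p_t)\leq \epsilon$, where $(p_t)_M$ denotes $p_t$ conditioned on $\{dp_t/d\mu\leq M\}$; on the event that at least one $Z_{t,j}$ is accepted, the conditional law of $Z_{t,\jstar_t}$ is exactly $(p_t)_M$. The no-acceptance probability for rejection sampling is at most $e^{-n(1-\epsilon)/M}$, with the factor $1-\epsilon$ tracking the mass of $p_t$ retained after truncation, so the bound on $n$ in the lemma is exactly what is needed to drive this quantity below $\delta/T$. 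Because the blocks $\{Z_{t,1:n}\}_{t=1}^T$ are drawn independently conditional on the history, the per-round acceptance events are conditionally independent, and a union bound over $t\in[T]$ yields an overall good event of probability at least $1-\delta$. On this event, $\tv(P_{x_t},P_{Z_{t,\jstar_t}})\leq \epsilon$ holds for all $t$ simultaneously.

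The main obstacle is bookkeeping rather than conceptual: one must coordinate the constants across the three pieces of the argument, namely the truncation threshold chosen so that $\tv((p_t)_M,p_t)\leq \epsilon$, the mass retention $\nu_M(\cX)\geq 1-\epsilon$ that produces the $(1-\epsilon)$ factor in the tail, and the $\log(T/\delta)$ term needed for the union bound, so that together they collapse to the single threshold stated in the lemma. No new ideas beyond \Cref{thm:upperbound} are required; the lemma is in effect the ``amortized'' version of \Cref{thm:upperbound} applied across $T$ adaptively chosen targets, made possible by drawing fresh auxiliary samples at each round so that the per-round rejection samplers remain independent.
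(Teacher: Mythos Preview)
Your proposal is correct and follows essentially the same approach as the paper: sample fresh blocks $Z_{t,1:n}\sim\mu$ independently, invoke the rejection-sampling guarantee of \Cref{thm:upperbound} at each round with target $p_t$ (using $(f,\sigma)$-smoothness to bound $\divf{p_t}{\mu}\leq 1/\sigma$), and union bound over $t\in[T]$. The paper's proof is merely a terse two-sentence pointer to this argument, whereas you have unpacked the internals (truncation level, acceptance probability, constant bookkeeping) explicitly.
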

We defer the construction of the coupling to \Cref{app:smoothedonline}; for now we focus on the implications.  Our first result extends \citet[Theorem 3]{block2022smoothed} and \citet[Theorem 3.1]{haghtalab2022smoothed} to the case of  $f$-smoothed online learning.  While we state the result for general real-valued function classes in \Cref{app:smoothedonline}, for the sake of simplicity we restrict our focus to binary-valued $\F$ here.
\begin{theorem}\label{thm:minimax}
    Suppose $\cF \to \left\{ \pm 1 \right\}$ is a binary valued function class and let $\vc{\cF}$ denote its Vapnik-Chervonenkis dimension.  Suppose that $(x_t, y_t)$ are generated by a $(f, \sigma)$-smoothed adversary in the sense of Definition \ref{def:weaksmooth} such that $f'(\infty) = \infty$.  Then there exists an algorithm such that
    \begin{align}
        \ee\left[ \reg_T \right] \lesssim \sqrt{T \log(T) \cdot \vc{\cF} } +  \inf_{0 < \epsilon < 1} \epsilon T + \sqrt{T  \vc{\cF} \log\left(T (f')^{-1}\left( \frac{1}{\epsilon \sigma} \right) \right) }.
    \end{align}
\end{theorem}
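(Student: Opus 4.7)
The plan is to reduce the $f$-smoothed setting to the bounded-Radon-Nikodym smoothed setting by invoking the coupling of Lemma~\ref{lem:coupling}, and then apply the VC-based sequential Rademacher machinery already developed for the latter in \citet{block2022smoothed,haghtalab2022smoothed}.

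First, fix a parameter $\epsilon \in (0,1)$ to be optimized at the end and set $\delta = 1/T$. Applying Lemma~\ref{lem:coupling} produces a joint law of the adversary's contexts $(x_1, \dots, x_T)$ together with a pool of $T \cdot n$ iid samples $Z_{t,j} \sim \mu$, with $n = \widetilde O((f')^{-1}(1/(\epsilon \sigma)))$, and selection rules $\jstar_t$ such that on an event $\cE$ of probability at least $1 - 1/T$ we have $\tv(P_{x_t}, P_{Z_{t,\jstar_t}}) \leq \epsilon$ for every $t$. The complement of $\cE$ contributes at most $T \cdot (1/T) = 1$ to the expected regret and can be absorbed into constants.

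Second, on $\cE$, invoke the standard characterization of total variation as the minimal probability of disagreement in a coupling to realize $x_t$ and $Z_{t,\jstar_t}$ on the same probability space so that they agree except on a per-round event of probability at most $\epsilon$. Since the loss takes values in $[0,1]$ and the disagreement event does not depend on any particular $f \in \cF$, the supremum over $\cF$ passes through, and the expected regret on the true sequence differs from the regret on the coupled sequence $(Z_{t,\jstar_t}, y_t)$ by at most $\epsilon T$. I would then run on the true contexts the algorithm of \citet{block2022smoothed,haghtalab2022smoothed} designed for an adversary providing one of $n$ iid samples per round. Its sequential Rademacher analysis specialized to a VC class yields expected regret on the coupled sequence of order $\widetilde O(\sqrt{T \cdot \vc{\cF} \cdot \log(Tn)})$, producing the second term inside the infimum after substituting the value of $n$. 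The baseline $\sqrt{T \log T \cdot \vc{\cF}}$ term arises from the standard uniform-concentration bound needed to pass from conditional expected losses to their realized counterparts, which is independent of the coupling granularity and hence of $\epsilon$.

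The main obstacle is verifying that the existing sequential-Rademacher analysis still applies to the approximately coupled sequence: the indices $\jstar_t$ are not iid and depend on the adversary and the coupling construction. However, because $\jstar_t$ is measurable with respect to the pool $\{Z_{t,j}\}$ and the adversary's internal randomness (but not the learner's predictions), one can condition on the pool and the selection rules, reducing the problem to the bounded-Radon-Nikodym case analyzed in \citet{block2022smoothed,haghtalab2022smoothed} with effective smoothness parameter $1/n$. After this conditioning their analysis passes through, and optimizing over $\epsilon \in (0,1)$ yields the infimum appearing in the statement.
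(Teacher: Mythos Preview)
Your high-level plan---use Lemma~\ref{lem:coupling} and then invoke the machinery of \citet{block2022smoothed,haghtalab2022smoothed}---is exactly what the paper does. However, two of the steps you describe do not go through as written.

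\textbf{The regret-level coupling error is not $\epsilon T$.} You claim that, via a maximal coupling with $\pp(x_t\neq Z_{t,\jstar_t})\le\epsilon$, the regret on $(x_t)$ and on $(Z_{t,\jstar_t})$ differ by at most $\epsilon T$. But the learner's prediction $\yhat_t$ depends on the entire observed history $x_1,\dots,x_t$, so a single discrepancy at round $s$ can alter every subsequent $\yhat_t$ for $t\ge s$. A naive union bound therefore gives $T^2\epsilon$, not $T\epsilon$. The paper avoids this by \emph{first} invoking Theorem~\ref{thm:seqradcomplexity} to upper bound the minimax regret by the distribution-dependent sequential Rademacher complexity $\radseq_T(\cF,\scrD)$, and only \emph{then} applying the coupling inside that expression. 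Because $\radseq_T$ is a supremum over $g\in\cF$ of a sum $\sum_t\epsilon_t g(\bx_t(\epsilon))$ that does not involve the algorithm's predictions, the coupling error decomposes additively: $\sum_t\ee[\sup_g\epsilon_t(g(\bx_t)-g(Z_{t,\jstar_t}))]\le\alpha T$. This is the step you are missing.

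\textbf{The post-coupling argument is not a reduction to bounded Radon--Nikodym.} Your phrase ``condition on the pool and the selection rules, reducing to the bounded-RN case with effective smoothness $1/n$'' is not what the paper does (and conditioning on both would make $Z_{t,\jstar_t}$ deterministic). The paper conditions on the \emph{pool} $\{Z_{t,j}\}$ only, which restricts the function class to a \emph{finite domain} of size $nT$; it then applies Lemma~\ref{lem:minimaxregretfinitedomain} (itself built on Lemma~\ref{lem:fatvcbound} relating sequential fat-shattering to VC dimension on finite sets) to obtain the $\sqrt{T\,\vc{\cF}\log(nT)}$ bound. This finite-domain step is the actual source of the $\log\bigl(T(f')^{-1}(1/(\epsilon\sigma))\bigr)$ factor.

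Finally, your explanation of the first term $\sqrt{T\log(T)\,\vc{\cF}}$ as a separate ``uniform-concentration'' step is incorrect: it is simply the $\epsilon$-independent part of $\sqrt{T\,\vc{\cF}\log(Tn)}$ after writing $\log(Tn)\le\log T+\log n$ and pulling the first piece outside the infimum.
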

We remark that \Cref{thm:minimax} is a special case of the more general \Cref{thm:minimaxgeneral} applying to arbitrary real-valued function classes, which we state and prove in \Cref{app:smoothedonline}.  The proof follows the approach of \citet{block2022smoothed} with the modification of applying the more general coupling in Lemma \ref{lem:coupling} and is deferred to the appendix.  Here, we consider two instantiations of $f$-divergences.  First, for the case of Renyi divergence (see \cref{ex:renyi,ex:renyi2}), we see that for a Renyi-smoothed adversary, regret of the order $\widetilde{O}\left( \left( 1 + \frac 1{\lambda - 1} \right)\sqrt{T \vc{\cF} \log\left( \frac 1\sigma \right)} \right)$ is attainable.  Observe that when $\lambda \uparrow \infty$, we recover the results of \citet{block2022smoothed}.  On the other hand, if $\lambda$ is bounded away from 1, which covers the case of an adversary bounded in $\chi^2$ divergence, we see that the cost of assuming only $\dren{p_t}{\mu} < \infty$ is only on the order of a constant more than in the standard setting.  The situation is different if we assume that the adversary is $f$-smoothed in the sense of KL divergence: in this case, we are only able to recover regret scaling like $\widetilde{O}\left( T^{2/3}\left( \vc{\cF} / \sigma \right)^{1/3} \right)$.  While the results for Renyi divergence are optimal up to polylogarithmic factors, we leave as an interesting open direction the question of whether the regret against a KL-smoothed adversary can be improved.

While \Cref{thm:minimax} is important insofar as it gives the information theoretic rates of $f$-smoothed online learning, the algorithms, where provided, are computationally intractable.  We now demonstrate that two algorithms proposed by \citet{block2022smoothed,haghtalab2022oracle} for smoothed online learning remain no-regret even if we weaken our assumptions to include $(f,\sigma)$-smoothed adveraries.  These algorithms are \emph{oracle-efficient}, i.e., they make few calls to an Empirical Risk Minimization (ERM) oracle for the function class $\cF$; an ERM oracle, formally defined in \Cref{app:smoothedonline} (see Definition \ref{def:ermoracle}), returns the minimizer of a weighted, cumulative loss function defined over the function class $\cF$.  Once again, for the sake of simplicity, we state our results for the case of binary valued $\cF$ and defer the more general statement and proof to the appendix.
\begin{theorem}\label{thm:binaryimproper}
    Suppose that $\cF: \cX \to \left\{ \pm 1 \right\}$ is a function class with VC dimension $\vc{\cF}$ and that $\ell: [-1,1] \times [-1,1] \to [0,1]$ is a loss function that is convex and 1-Lipschitz in the first argument.  Then there is an \emph{improper} algorithm requiring 2 calls to the ERM oracle per time $t$ such that if the adversary is $(f, \sigma)$-smoothed, then the regret is bounded as follows:
    \begin{align}\label{eq:binaryimproperbound}
        \ee\left[ \reg_T \right] \lesssim \inf_{\alpha > 0}\left\{ \alpha T + \sqrt{\vc{\cF} \cdot T \cdot \log(T) \cdot (f')^{-1}\left( \frac{1}{\alpha \sigma} \right)} \right\}.
    \end{align}
\end{theorem}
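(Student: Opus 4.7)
The plan is to reduce the $(f,\sigma)$-smoothed problem to the standard (uniformly-bounded Radon--Nikodym) smoothed problem via the coupling of \Cref{lem:coupling}, and then to run the improper, two-ERM-call algorithm of \citet{block2022smoothed} unchanged on the hallucinated samples it produces, only re-tuning its ``hallucination budget'' $n$ using our $f$-divergence sample complexity bound from \Cref{thm:upperbound}. Concretely, fix a free parameter $\alpha > 0$ to be optimized at the end, and invoke \Cref{lem:coupling} with $\epsilon = \alpha$ and $\delta = 1/T$ to obtain $n = \widetilde O\bigl((f')^{-1}(\tfrac{1}{\alpha \sigma})\bigr)$ i.i.d.\ draws $Z_{t,1},\dots,Z_{t,n}\sim\mu$ per round, along with selection indices $\jstar_t$ such that, on an event of probability at least $1-1/T$, $\tv(P_{x_t}, P_{Z_{t,\jstar_t}}) \leq \alpha$ for every $1 \leq t \leq T$.

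\textbf{Main argument.} At each round $t$, the learner feeds $x_t$ and the hallucinated pool $\{Z_{t,j}\}_{j \leq n}$ into the improper oracle-efficient subroutine of \citet{block2022smoothed}: one ERM call produces an FTPL-style perturbed leader on the union of past data and the hallucinated contexts, and a second ERM call implements the convex-loss randomized playout that defines $\yhat_t \in [-1,1]$. On the good event of the coupling, each $x_t$ is paired with a draw $Z_{t,\jstar_t}$ whose marginal is within TV distance $\alpha$ of $P_{x_t}$; transferring regret from the ``coupled'' world (where contexts are i.i.d.\ from $\mu$, the setting for which the algorithm was designed) to the true world costs at most $\alpha T$ additively, by the standard $\tv$--to--expected-loss inequality for $[0,1]$-valued losses. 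The regret bound in the coupled world, which is driven by a Rademacher/generic-chaining argument on $\cF$ and hence by $\vc{\cF}$, has the form $O\bigl(\sqrt{\vc{\cF}\cdot T \cdot \log(T)\cdot n}\bigr)$; substituting our choice of $n$ and adding the $\alpha T$ coupling cost (plus $O(1)$ from the $1/T$ failure event) yields the infimum bound in \eqref{eq:binaryimproperbound}.

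\textbf{Main obstacle.} The delicate point is that the original analysis of \citet{block2022smoothed} leverages an \emph{exact} coupling between the adversary's conditional $p_t$ and the hallucinated pool, available because $\norm{dp_t/d\mu}_\infty \leq 1/\sigma$, whereas \Cref{lem:coupling} only delivers an approximate coupling (within TV $\alpha$ per step, and with a small failure probability). I therefore need to reroute the symmetrization/martingale step in their proof through \Cref{lem:coupling}, verifying that replacing the exact coupling by the approximate one injects only an $\alpha T$ additive slack and does not disturb the $\sqrt{n \cdot T \cdot \vc{\cF} \cdot \log T}$ scaling. Since $\yhat_t$ depends on the hallucinated draws only through evaluations of functions in $\cF$, $\vc{\cF}$ continues to control the stochastic complexity uniformly in $n$; the remaining bookkeeping — conditioning on the good event of the coupling, absorbing the per-round approximation into a single $\alpha T$ term, and propagating this through the adaptive filtration — constitutes the bulk of the technical work, but introduces no new ideas beyond the combination of the coupling lemma with the analysis of \citet{block2022smoothed}.
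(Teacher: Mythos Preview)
Your proposal is essentially the same approach as the paper's: invoke the approximate coupling of \Cref{lem:coupling} with budget $n \asymp \log(T)\,(f')^{-1}(1/(\alpha\sigma))$, rerun the two-oracle-call improper algorithm and analysis of \citet{block2022smoothed} with this coupling replacing their exact one (which is exactly what the paper does in Lemma~\ref{lem:validrelaxation} and Theorem~\ref{thm:relaxations}), pick up an additive $\alpha T$ from the per-round TV slack, and bound the remaining Rademacher term by $\sqrt{\vc{\cF}\cdot nT}$ for binary $\cF$. One small terminological slip: the improper algorithm here is the relaxation/random-playout scheme of \citet{rakhlin2012relax,block2022smoothed}, not an ``FTPL-style perturbed leader'' (FTPL is the \emph{proper} algorithm of \Cref{thm:properinformal}); and the ``coupled world'' is not literally i.i.d.\ from $\mu$---rather, the symmetrization step inside the relaxation-validity proof absorbs the single coupled index $\jstar_t$ into the full block $Z_{t,1},\dots,Z_{t,n}\sim\mu$ via Jensen, which is where the $nT$-point Rademacher complexity appears.
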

We prove \Cref{thm:binaryimproper} in \Cref{app:smoothedonline}, where we apply Lemma \ref{lem:coupling} to the argument of \citet{block2022smoothed}.  We instantiate the bound in \eqref{eq:binaryimproperbound} in two cases, Renyi divergence (\Cref{ex:renyi}) and KL Divergence (\Cref{ex:kl}).  If we assume that our adversary is smoothed in the sense of Renyi divergence, then optimizing $\alpha$ leads us to an oracle-efficient algorithm attaining regret scaling like $\widetilde{O}\left( \vc{\cF}^{\frac{\lambda - 1}{2 \lambda - 1}} \cdot T^{\frac{\lambda}{2 \lambda - 1}} \cdot \sigma^{- \frac{1}{2\lambda - 1}} \right)$.  Noting that if $\norm{\frac{d p_t}{d\mu}}_\infty \leq (\sigma')^{-1}$ then we may take $\sigma = (\sigma')^{\lambda - 1}$, we observe that in the limit as $\lambda \uparrow \infty$, we recover the $\widetilde{O}\left( \sqrt{\vc{\cF} \cdot T / \sigma'} \right)$ rate from \citet[Theorem 7]{block2022smoothed}.  In the special case where $\lambda  =2$, we see that the regret scales like $\widetilde{O}\left( (\vc{\cF}/\sigma)^{1/3} \cdot T^{2/3}\right)$.  On the other hand, if we make the weaker assumption that the adversary is only smoothed in the KL sense, then \Cref{thm:binaryimproper} only recovers a regret that scales as $\widetilde{O}\left( \log(d) T / (\sigma \log(T)) \right)$, which is sublinear in $T$ but very slow.  

We turn now to the case of proper algorithms.  As in \citet{block2022smoothed}, we instantiate Follow the Perturbed Leader (FTPL) with a perturbation by a Gaussian process; again, we apply our Lemma \ref{lem:coupling} to the proof techniques found in \citet[Appendix E]{block2022smoothed}.  For the sake of simplicity, we restrict our focus to binary valued function classes with linear loss.
\begin{theorem}\label{thm:properinformal}
    Suppose that we are in the situation of \Cref{thm:binaryimproper}, with the loss function $\ell$ being linear, i.e., $\ell(\yhat, y) = (1 - \yhat \cdot y) / 2$.  Suppose further that our adversary is $(f,\sigma)$-smooth in the sense of Renyi Divergence, i.e., for some $\lambda \geq 2$, $\dren{p_t}{\mu} \leq 1/\sigma$ for all $p_t$.  Then there is a \emph{proper} algorithm requiring only 1 call to the ERM oracle per round such that the regret is bounded as follows:
    \begin{align}
        \ee\left[ \reg_T \right] = \widetilde{O}\left(\sqrt{\vc{\cF}} \cdot T^{\frac{2 \lambda + 1}{4 \lambda - 1}} \cdot \sigma^{-\frac{1}{4\lambda - 1}} \right).
    \end{align}
\end{theorem}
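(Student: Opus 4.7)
The plan is to follow the proper FTPL analysis of \citet{block2022smoothed} (Appendix E), replacing the rejection-sampling coupling used there (which required a uniform bound on the Radon--Nikodym derivative) with our \Cref{lem:coupling}. The algorithm is a hallucinated FTPL: at each round $t$, draw fresh $Z_{t,1}, \dots, Z_{t,n} \sim \mu$ and standard Gaussians $W_{t,1}, \dots, W_{t,n}$, and play
\begin{equation*}
\widehat{f}_t \;=\; \argmin_{f \in \cF} \; \sum_{s<t} \ell(f(x_s), y_s) \; + \; \eta \sum_{j=1}^{n} W_{t,j}\, f(Z_{t,j}),
\end{equation*}
which is one $\ermoracle$ call per round. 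The tunable parameters are $n$, the perturbation scale $\eta$, and a coupling tolerance $\epsilon$.

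First I would use the standard FTPL regret decomposition, which, thanks to linearity of the loss, bounds $\ee[\reg_T]$ by a be-the-leader term controlling the magnitude of the Gaussian perturbation, plus a per-round stability term $\sum_t \ee[\ell(\widehat{f}_t(x_t), y_t) - \ell(\widehat{f}_{t+1}(x_t), y_t)]$. Dudley/chaining applied to the Gaussian process $f \mapsto \sum_j W_j f(Z_j)$ on the VC class $\cF$ controls the magnitude term at $\widetilde{O}(\eta \sqrt{n \vc{\cF}})$. Next I would invoke \Cref{lem:coupling} with the Renyi convex function: since $f'(x) = \lambda(x^{\lambda-1} - 1)$ gives $(f')^{-1}(u) \lesssim u^{1/(\lambda-1)}$, taking $n = \widetilde{O}((\epsilon\sigma)^{-1/(\lambda-1)})$ suffices to couple each adversarial $x_t$ to some hallucinated $Z_{t,\jstar_t}$ with per-round TV tolerance $\epsilon$; the usual upgrade from TV-closeness to a coupling then gives $x_t = Z_{t,\jstar_t}$ for every $t$ outside an event contributing at most $\widetilde{O}(\epsilon T)$ to the regret.

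On this success event the stability term becomes amenable to the Gaussian anti-concentration argument of \citet{block2022smoothed,haghtalab2022oracle}: because the perturbation $\eta W_{t,\jstar_t}$ acts precisely at $x_t = Z_{t,\jstar_t}$, the per-round switch cost is bounded by the probability that two distinct near-minimizers of the perturbed objective take the same value at $Z_{t,\jstar_t}$. Pushing the VC chaining bounds on $\cF$ through this anti-concentration argument yields a per-round stability of the form $\widetilde{O}(\sqrt{\vc{\cF}}/(\eta\sqrt{n}))$, and hence a cumulative stability of $\widetilde{O}(T\sqrt{\vc{\cF}}/(\eta\sqrt{n}))$.

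Finally I would optimize the three free parameters. Balancing the perturbation magnitude, the cumulative stability, and the coupling error while enforcing the coupling constraint $n = \widetilde{O}((\epsilon\sigma)^{-1/(\lambda-1)})$ and tuning $\eta$ and $\epsilon$ appropriately should yield the claimed rate $\widetilde{O}(\sqrt{\vc{\cF}} \cdot T^{(2\lambda+1)/(4\lambda-1)} \cdot \sigma^{-1/(4\lambda-1)})$. I expect the main obstacle to be the stability step: one must verify that, conditioned on the random index $\jstar_t$ produced by the coupling (which itself depends on the $W_{t,\cdot}$ and $Z_{t,\cdot}$), the residual randomness in $W_{t,\jstar_t}$ is still sufficient to drive the anti-concentration estimate at $Z_{t,\jstar_t}$. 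A secondary subtlety is preserving the independence needed for the be-the-leader argument even though the hallucinated samples $\{Z_{t,j}\}$ double as the coupling variates, which likely requires a measure-change or explicit disintegration step analogous to the one in \citet{block2022smoothed}.
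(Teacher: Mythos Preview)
Your outline diverges from the paper's proof in both the decomposition and the way the Renyi-smoothness is exploited, and the stability step you sketch does not yield the rate you claim.

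\textbf{Decomposition.} The paper does not use a two-term (perturbation $+$ stability) split. It uses a \emph{three}-term be-the-leader decomposition via a tangent sequence $(x_t',y_t')$ (Lemma~\ref{lem:btl}): a perturbation term $\eta\,\ee[\sup_g\omega_{1,m}(g)]$, a stability term $\sum_t\ee[\ell(g_t(x_t'),y_t')-\ell(g_{t+1}(x_t'),y_t')]$, and a separate generalization term $\sum_t\ee[\ell(g_{t+1}(x_t'),y_t')-\ell(g_{t+1}(x_t),y_t)]$. The tangent sequence is what allows $g_t$ and $g_{t+1}$ in the stability term to share the same Gaussian perturbation; your two-term version hides this issue.

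\textbf{Where the Renyi bound is used.} Crucially, the coupling of \Cref{lem:coupling} is \emph{not} used for the stability term. Stability is handled by a direct H\"older change of measure: since $e^{(\lambda-1)\dren{p_t}{\mu}}\leq 1/\sigma$ and $g_t,g_{t+1}\in\{\pm1\}$,
\[
\ee_{x_t'\sim p_t}\bigl[|g_t(x_t')-g_{t+1}(x_t')|\,\I[\cdot]\bigr]\;\le\;\sigma^{-\frac{1}{\lambda-1}}\bigl(\ee_\mu[(g_t-g_{t+1})^2\,\I[\cdot]]\bigr)^{\frac{\lambda-1}{\lambda}},
\]
which, combined with the Gaussian $L^2(\hat\mu_m)$-stability (Lemma~\ref{lem:gaussianstability}) and a peeling sum over scales, gives per-round stability $\widetilde O\bigl(\sqrt{\vc{\cF}}\,\sigma^{-1/(\lambda-1)}\eta^{-(1-1/\lambda)}\bigr)$ (Lemma~\ref{lem:expectedstability}). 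The coupling is used only for the \emph{generalization} term (Lemma~\ref{lem:generalizationerror}), where $k$ of the $m$ hallucinated samples are coupled to i.i.d.\ copies of $x_t$ and a Rademacher bound is applied.

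\textbf{Why your stability estimate is off.} Your claimed per-round stability $\widetilde O(\sqrt{\vc{\cF}}/(\eta\sqrt{n}))$ is the $\lambda\to\infty$ (uniformly bounded density) rate; for finite $\lambda$ the exponent on the effective scale must degrade to $1-1/\lambda$ and a factor $\sigma^{-1/(\lambda-1)}$ must appear, otherwise balancing perturbation, stability, and coupling error gives a $T$-exponent strictly smaller than $(2\lambda+1)/(4\lambda-1)$, which would contradict the theorem you are trying to prove. The pointwise anti-concentration route you propose (condition on everything but $W_{t,\jstar_t}$) would yield $O(1/\eta)$ per round, not $\sqrt{\vc{\cF}}/(\eta\sqrt{n})$, and again does not balance to the stated rate. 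The obstacle you flag about $\jstar_t$ is real but secondary; the primary gap is that the coupling does not drive the stability analysis at all---the H\"older argument does, and this is exactly why the theorem is stated only for Renyi divergence with $\lambda\ge 2$.
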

Note that our regret in \Cref{thm:properinformal} actually improves on that of \cite[Theorem 10]{block2022smoothed} in the case where we take $\lambda \uparrow \infty$.  Indeed, if we are in the strongly smooth regime such that the Radon-Nikodym derivative of the adversary's distribution is uniformly bounded by $\sigma'^{-1}$, then in the limit we recover an expected regret scaling like $\widetilde{O}\left( \sqrt{\vc{\cF} \cdot T} \cdot (\sigma')^{- \frac 14}  \right)$, which matches that of the instantiation of FTPL found in \cite{haghtalab2022oracle} for discrete $\cX$.  Thus, by examining $f$-smoothed adversaries, we anwer an open question of \citet{block2022smoothed} on improving the dependence on $\sigma'$ of the expected regret of FTPL with a Gaussian perturbation.

We leave as an interesting further direction the question regarding the tightness of the regret of the algorithms in \Cref{thm:binaryimproper,thm:properinformal}.  As shown in \citet{block2022smoothed,haghtalab2022oracle}, even in the case of strongly smoothed adversaries, there is a statistical-computational gap wherein the dependence of the expected regret for an oracle-efficient algorithm on $\sigma$ must be polynomial, but \Cref{thm:minimax} yields a statistical rate that is polylogarithmic in the same.  Even in the adversarial setting, however, it is unknown if such an exponential gap exists for oracle-efficient \emph{improper} algorithms \citep{hazan2016computational}.

Finally, we observe that \Cref{thm:properinformal} only applies to $f$-smoothed adversaries in the Renyi sense for $\lambda \geq 2$.  Our proof proceeds by a change of measure argument, wherein we replace an expectation over the base measure $\mu$ by an expectation over the adversary's distribution $p_t$; for a weaker $f$-divergence like KL, the analogous statement would require bounding an exponential moment, which would require significantly stronger analysis.  We leave the question of existence of oracle-efficient proper algorithms for KL smoothed adversaries as yet another interesting further direction.

\section*{Acknowledgements}
AB acknowledges support from the National Science Foundation Graduate Research Fellowship under Grant No. 1122374 as well as support from ONR under grant N00014-20-1-2336 and DOE under grant DE-SC0022199. YP was supported in part by the MIT-IBM Watson AI Lab and by the NSF grant
CCF-2131115.  We also acknowledge an anonymous reviewer for pointing us to a relevant reference as well as Gergely Flamich and William Lennie Wells for pointing out an error in the proof of \Cref{lem:expectedrestrictedrdbound} in an earlier version of this paper.

\bibliographystyle{plainnat}
\bibliography{references}

\appendix

\section{Related Work}\label{app:relatedwork}

\paragraph{Rejection Sampling}
As mentioned in the introduction, rejection sampling was pioneered by \citet{von195113} and has received much attention due to its simplicity and general application, with far too many references to list \citep{gilks1992adaptive,liu1996metropolized,flury1990acceptance,harviainen2021approximating,bauer2019resampled}.  More recently, the information theory community has been interested in improving the bounds on the sample complexity of \emph{exact} rejection sampling under assumed additional structure, such as a Renyi divergence bound \citep{liu2018rejection,harsha2007communication}.  

Perhaps surprisingly, \emph{approximate} rejection sampling has received relatively little attention.  Several works have proposed a tradeoff between the sample complexity of approximate rejection sampling and the accuracy of the produced sample; one example is \citet{grover2018variational}, which demonstrated a qualitative monotonicity property that describes this tradeoff in a particular family of distributions, without providing any quantitative guarantees.  Some works in cryptography \citep{lyubashevsky2012lattice,zheng2021rejection,agrawal22} have provided upper bounds on the sample complexity in the particular case of the discrete Gaussian family on a lattice.  Even more recently, a concurrent work \citep{devevey2022rejection} demonstrates upper and lower bounds for \emph{exact} rejection sampling as well as some upper bounds for \emph{approximate} rejection sampling for probability distributions on the discrete hypercube.  In contradistinction to these previous works, our results hold for arbitrary probability distributions and significantly more general $f$-divergences.  In addition \citet{eikema2022approximate} examines the approximate rejection sampling problem and proves several qualitative results about the tradeoff in the sample complexity of approximate rejection sampling and the accuracy of the produced sample.  Our work provides the first sharp quantitative guarantees for approximate rejection sampling in the general case.

\paragraph{Smoothed Online Learning}
The study of online learning dates back decades with too many references to list here.  A good introduction to the general field can be found in \citet{cesa2006prediction}.  More recently, there has been a surge of interest in sequential analogues of statistical learning phenomena \citep{rakhlin2015sequential,rakhlin2012relax,block2021majorizing}.  Due to the statistical and computational challenges of this regime, however, several works have proposed the \emph{smoothed} online learning setting \citep{rakhlin2011online,haghtalab2020smoothed}, with \citet{haghtalab2022smoothed,block2022smoothed} providing statistical rates defining the difficulty of a smoothed online learning problem and \citet{block2022smoothed,haghtalab2022oracle,block2022efficient} providing oracle-efficient algorithms.  In this work, we generalize the results of \citet{block2022smoothed,haghtalab2022oracle} to what we call the $f$-smoothed online learning setting, where the adversary is constrained to only be smooth in a weaker sense.

\paragraph{Out-of-Distribution Learning}
Importance sampling was introduced in \citet{kloek1978bayesian} and studied extensively thereafter due to its wide applicability.  Again, there are far too many references in this popular field to include here, but a few standard treatments are \citet{liu2001monte,srinivasan2002importance,tokdar2010importance}.  Most similar to our work are \citet{chatterjee2018sample,cortes2010learning}.  In the first work, the authors precisely compute the sample complexity of importance sampling to estimate the mean of a given function $f$ under some target measure $\nu$.  They observe that $\widetilde{\Theta}\left( e^{\dkl{\nu}{\mu}} \right)$ samples are both necessary and sufficient to do this and provide several instantiations of their main bound.  Unfortunately, their bounds are too weak to apply to the problem of estimating means uniformly over a large function class, as is required for learning theory.  In \citet{cortes2010learning}, the authors prove upper bounds on the sample complexity of importance sampling assuming that the function class $\cF$ has finite pseudo-dimension, under both bounded likelihood and bounded Renyi constraints.  They also prove a lower bound.

\section{Comparison to Importance Sampling for Uniform Mean Estimation}\label{sec:importance}

In this appendix, we apply our main result to uniform mean estimation.  More specifically, suppose that the learner has access to $X_1, \dots, X_n \sim \mu$ independent samples and, for some other measure $\nu$, wishes to estimate $\ee_{Y \sim \nu}\left[ f(Y) \right]$ for all functions $f$ in some class $\cF$.  A natural quesiton is how large $n$ must be and how close $\nu$ has to be to $\mu$ in order for our estimates to be within $\epsilon$ of $\ee\left[ f(Y) \right]$ with high probability.  In this section, we will compare two common approaches.  The first uses importance sampling, where we take our estimate to be
\begin{align}
    I_n(f) = \frac 1n \sum_{i  =1}^n \frac{d\nu}{d\mu}(X_i) f(X_i),
\end{align}
while the second is to use rejection sampling to generate $n'$ independent samples from $\nu$ and then simply use the empirical mean of these samples to estimate $\ee_{Y \sim \nu}\left[ f(Y) \right]$.  We begin by considering importance sampling.

In the special case where $\cF = \{f\}$ is a singleton, the following theorem of \citet{chatterjee2018sample} fully answers the question of sample complexity:
\begin{theorem}[Theorem 1.1, \citet{chatterjee2018sample}]\label{thm:chaterjee}
    Suppose that $X_1, \dots, X_n \sim \mu$ are independent and suppose that $n \geq \exp\left( \dkl{\nu}{\mu} \right)$.  If $f$ is a real-valued, measurable function, then it holds that
    \begin{align}
        \ee\left[ \abs{I_n(f) - \ee_\nu\left[ f(Y) \right]} \right] \leq \norm{f}_{L^2(\nu)}\left( \left( \frac{e^{\dkl{\nu}{\mu}}}{n} \right)^{\frac{1}{4}}  + 2 \sqrt{\pp_{Y \sim \nu}\left( \frac{d\nu}{d\mu}(Y) > \sqrt{e^{\dkl{\nu}{\mu}} \cdot n} \right)}\right).
    \end{align}
    Moreover, if $n \leq e^{\dkl{\nu}{\mu}}$ then with probability at least $1 - \delta$,
    \begin{align}
        I_n(1) \leq \sqrt{\frac{n}{e^{\dkl{\nu}{\mu}}}} + \frac{\pp\left( \frac{d\nu}{d\mu}(Y) \leq \sqrt{e^{\dkl{\nu}{\mu}} \cdot n} \right)}{1 - \delta}.
    \end{align}
\end{theorem}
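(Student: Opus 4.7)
The plan is to analyze $I_n(f)$ via truncation of the likelihood ratio $w := \frac{d\nu}{d\mu}$ at the scale $M := \sqrt{Ln}$, where $L := e^{\dkl{\nu}{\mu}}$.  Observing that $\ee_\mu[w(X) f(X)] = \ee_\nu[f(Y)]$, $I_n(f)$ is unbiased.  Define the truncated estimator $I_n^M(f) := \frac{1}{n} \sum_i w(X_i) f(X_i) \I\{w(X_i) \leq M\}$ and use the decomposition
$$I_n(f) - \ee_\nu[f(Y)] = \bigl(I_n^M(f) - \ee[I_n^M(f)]\bigr) + \bigl(I_n(f) - I_n^M(f)\bigr) - \bigl(\ee_\nu[f(Y)] - \ee[I_n^M(f)]\bigr),$$
which isolates a fluctuation term, an excess-mass term from un-truncated samples, and a truncation-bias term.

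For the first claim I would bound the three terms in $L^1$ separately.  For the fluctuation, a single summand has second moment $\ee_\mu[w^2 f^2 \I\{w \leq M\}] = \ee_\nu[w f^2 \I\{w \leq M\}] \leq M \norm{f}_{L^2(\nu)}^2$ after the change of measure $d\nu/d\mu = w$, so Jensen combined with the i.i.d.\ variance scaling yields $\ee|I_n^M(f) - \ee[I_n^M(f)]| \leq \sqrt{M/n}\,\norm{f}_{L^2(\nu)} = (L/n)^{1/4} \norm{f}_{L^2(\nu)}$.  Both the excess-mass and the tail-bias terms, after taking absolute values and applying the same change of measure, reduce to $\ee_\nu[|f| \I\{w > M\}]$, which Cauchy--Schwarz bounds by $\norm{f}_{L^2(\nu)} \sqrt{\pp_\nu(w > M)}$.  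The triangle inequality then produces the stated bound, with the factor of $2$ accounting for the two identical tail contributions.

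For the second claim, the regime $n \leq L$ is where importance sampling systematically underestimates, and we want a high-probability upper bound on $I_n(1)$.  Let $E := \{\max_i w(X_i) > M\}$.  Markov's inequality under $\mu$ gives $\pp_\mu(w > M) \leq \ee_\mu[w]/M = 1/M$, so a union bound yields $\pp(E) \leq n/M = \sqrt{n/L}$.  On $E^c$, $I_n(1) = I_n^M(1)$, whose expectation is $p := \pp_\nu(w \leq M)$; applying Markov's inequality to $I_n^M(1)$ against the threshold $p/(1-\delta)$ and combining with $\pp(E)$ via a union bound produces the claimed bound.

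The main obstacle is the choice of truncation level.  The scale $M = \sqrt{Ln}$ is forced by balancing the variance contribution $\sqrt{M/n}$ against the $\nu$-tail probability of $w$: picking $M$ too small destroys the variance bound, while picking $M$ too large inflates the truncation bias.  The factor $L = e^{\dkl{\nu}{\mu}}$ enters because under $\nu$ one has $\ee_\nu[\log w] = \dkl{\nu}{\mu}$, so $w$ concentrates logarithmically around $L$; truncating at the geometric mean $\sqrt{Ln}$ of the typical scale $L$ and the sample size $n$ is what produces both the characteristic $(L/n)^{1/4}$ rate and the sharp phase transition at $n \asymp L$.  The residual $\pp_\nu(w > \sqrt{Ln})$ term cannot be removed without further structural assumptions on $(\nu,\mu)$ beyond the single KL bound.
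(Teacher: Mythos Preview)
The paper does not supply its own proof of this statement: it is quoted as Theorem~1.1 of \citet{chatterjee2018sample} and used only as a benchmark against which the authors compare their own importance-sampling and rejection-sampling bounds in \Cref{sec:importance}.  There is therefore no in-paper argument to set your proposal against.

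For what it is worth, your truncation argument for the first inequality is correct and is essentially the standard route: the change-of-measure identity $\ee_\mu[w^2 f^2 \I\{w \leq M\}] = \ee_\nu[w f^2 \I\{w \leq M\}] \leq M\norm{f}_{L^2(\nu)}^2$ delivers the fluctuation bound $(L/n)^{1/4}\norm{f}_{L^2(\nu)}$, and Cauchy--Schwarz handles both the excess-mass and bias terms identically, giving the factor of $2$.  Your sketch for the second inequality has a bookkeeping wrinkle: Markov applied to the nonnegative variable $I_n^M(1)$ at threshold $p/(1-\delta)$ yields $\pp\bigl(I_n^M(1) > p/(1-\delta)\bigr) \leq 1-\delta$, not $\delta$, so the union bound as you describe it does not close with the stated failure probability.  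The conceptual picture (truncate, use $\pp(E)\leq n/M=\sqrt{n/L}$, then Markov on the bounded part) is right, but the roles of the two terms and the precise way the $\delta$ enters need to be rearranged.
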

Thus \Cref{thm:chaterjee} shows that $\Theta\left( \exp(\dkl{\nu}{\mu}) \right)$ samples are necessary and sufficient in order for importance sampling to generate an estimate that is close in expectation to the true mean.  Unfortunately, the above guarantee is too weak to be applied to large function classes.  While \citet{cortes2010learning} provides several bounds on importance sampling that hold uniformly in a function class with bounded pseudo-dimension, we provide here a generalization of their result that holds for most Donsker function classes.  Before doing this, we need to define the relevant notion of complexity of the function class: that of the bracketing number.  For more details, see \citet{gine2021mathematical}.
\begin{definition}
    Let $\cF: \cX \to [-1,1]$ be a real-valued function class.  Given real-valued functions $f_-, f_+$, we define the bracket $[f_-, f_+]$ as the set of functions $f \in \cF$ such that $f_- \leq f \leq f_+$ pointwise.  We say that $[f_-, f_+]$ is an $\epsilon$ bracket with respect to $\mu$ if $\norm{f_+ - f_-}_{L^2(\mu)} \leq \epsilon$.  We define the bracketing number $\nbrack{\cF}{\epsilon}$ as the minimal number of $\epsilon$-brackets such that $\cF$ is contained in the union of these brackets.  Finally, we say that $\cF$ has finite bracketing integral if
    \begin{align}
        \int_0^2 \sqrt{\log \nbrack{\cF}{\epsilon}} d \epsilon < \infty.
    \end{align}
\end{definition}
While there are more general complexity assumptions under which our conclusions hold, for the sake of simplicity, we consider the bracketing integral.  We have the following result:
\begin{theorem}\label{thm:importancesamplingupper}
    Suppose that $\cF: \cX \to [-1,1]$ is a real-valued function class with finite bracketing integral with respect to $\mu$.  Suppose further that $\chisq{\nu}{\mu} < \infty$.  Then the following inequality holds:
    \begin{align}\label{eq:importanceupper}
        \ee\left[ \sup_{f \in \cF} \abs{I_n(f) - \ee_\nu\left[ f(Y) \right]} \right] \lesssim \sqrt{1 + \chisq{\nu}{\mu}} \cdot \max\left( n^{-1/3}, \sqrt{\frac{1 + \chisq{\nu}{\mu}}{n}} \cdot \int_0^2 \sqrt{\log N_{[]}\left( \cF, \alpha \right)} d \alpha \right).
    \end{align}
\end{theorem}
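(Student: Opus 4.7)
The plan is to reduce \Cref{thm:importancesamplingupper} to a bounded empirical-process bound by truncating the likelihood ratio $w = d\nu/d\mu$. The key starting point is the importance-sampling identity $\ee_\mu[wf] = \ee_\nu[f]$, which tells us that $I_n(f)$ is an unbiased estimator of $\ee_\nu[f]$. Thus the quantity we want to control is the expected supremum of a centered empirical process on the (potentially unbounded) class $\cG = \{wf : f \in \cF\}$. The only information we have about the envelope $w$ is the second-moment bound $\ee_\mu[w^2] = 1 + \chisq{\nu}{\mu}$, which is what forces us to truncate rather than work with $\cG$ directly.

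First I would fix a threshold $M > 0$ (to be optimized at the end) and write $w = w_M + W_M$ with $w_M = w \wedge M$ and $W_M = (w - M)_+$. This decomposes
\begin{equation*}
I_n(f) - \ee_\nu[f] = \Bigl[\tfrac{1}{n}\textstyle\sum_i w_M(X_i)f(X_i) - \ee_\mu[w_M f]\Bigr] + \Bigl[\tfrac{1}{n}\textstyle\sum_i W_M(X_i)f(X_i) - \ee_\mu[W_M f]\Bigr].
\end{equation*}
The second, ``tail'' term is trivial to handle uniformly in $f$: since $|f| \leq 1$, its absolute value is at most $\tfrac{1}{n}\sum_i W_M(X_i) + \ee_\mu[W_M]$, and the pointwise inequality $W_M \leq w^2/M$ together with $\ee_\mu[w^2] = 1 + \chisq{\nu}{\mu}$ yields
\begin{equation*}
    \ee\Bigl[\sup_{f\in\cF}|\text{tail}(f)|\Bigr] \;\leq\; \frac{2(1+\chisq{\nu}{\mu})}{M}.
\end{equation*}

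For the truncated empirical process on $w_M\cF$, I would invoke bracketing chaining. The class $w_M\cF$ has envelope $w_M \leq M$ with $\|w_M\|_{L^2(\mu)}^2 \leq 1 + \chisq{\nu}{\mu}$, so the Bernstein-style variance proxy is $\sigma^2 = 1 + \chisq{\nu}{\mu}$ and the $L^\infty$ envelope is $M$. Crucially, any $L^2(\mu)$ bracket $[f_-,f_+]$ of size $\alpha$ for $\cF$ lifts to the bracket $[w_M f_-, w_M f_+]$ for $w_M\cF$ of $L^2(\mu)$-size at most $M\alpha$ (since $0 \leq w_M \leq M$), so that $N_{[]}(w_M\cF, M\alpha, L^2(\mu)) \leq N_{[]}(\cF, \alpha, L^2(\mu))$. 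After a change of variables, the bracketing integral for $w_M\cF$ up to the variance level $\sqrt{1+\chisq{\nu}{\mu}}$ is bounded by $M J$, where $J := \int_0^2 \sqrt{\log N_{[]}(\cF,\alpha,L^2(\mu))}d\alpha$ is finite by hypothesis. Plugging these quantities into a Bernstein-type bracketing chaining bound (e.g., van der Vaart and Wellner, Theorem 2.14.2, or Massart's variant) yields a main chaining term of order $\sqrt{(1+\chisq{\nu}{\mu})/n}\cdot MJ$ plus a Bernstein remainder of order $M \cdot (MJ)^2 /((1+\chisq{\nu}{\mu})\,n)$.

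Combining the three contributions and optimizing $M$ produces the claimed bound: balancing the $1/M$ tail against the $\sqrt{(1+\chi^2)/n}\cdot MJ$ chaining term gives the $\sqrt{1+\chi^2}\cdot\sqrt{(1+\chi^2)/n}\cdot J$ regime, while the Bernstein remainder contributes the $\sqrt{1+\chi^2}\cdot n^{-1/3}$ fallback regime that dominates when $J$ is small relative to $n^{1/6}/\sqrt{1+\chi^2}$; taking the maximum of the two yields \eqref{eq:importanceupper}. The main obstacle I expect is the third step: correctly bookkeeping the variance-weighted Bernstein remainder in the bracketing bound so that the optimization in $M$ produces the advertised $n^{-1/3}$ fallback rate rather than the crude $n^{-1/4}$ rate one obtains from a naive Dudley bound that uses only the envelope $M$. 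This will likely require invoking a chaining bound that is sharper at small scales and using the variance bound $\|w_M\|_{L^2(\mu)}^2 \leq 1 + \chisq{\nu}{\mu}$ rather than merely the $L^\infty$ envelope $M$ when integrating up the bracketing entropy.
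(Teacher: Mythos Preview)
Your approach via truncation of $w = d\nu/d\mu$ is genuinely different from the paper's, which never truncates. Instead, the paper applies a self-normalized concentration inequality (\Cref{thm:bercu}, due to \citet{bercu2002concentration}) directly to the unbounded class $w\cF$: it bounds the random normalizer $\sqrt{P_n\bigl((wf - \ee_\mu[wf])^2\bigr)}$ in terms of $\sqrt{P_n(w^2) + 1+\chisq{\nu}{\mu}}$, controls the empirical second moment $P_n(w^2)$ by Markov's inequality, and then chooses a threshold $u$ on $P_n(w^2)$ (as a function of the deviation level $t$) to balance the Markov tail against the sub-Gaussian concentration. The $n^{-1/3}$ rate arises from this balance, with no truncation parameter to optimize.

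There is a real gap in your optimization step. Your bracket lifting $N_{[]}(w_M\cF, M\alpha) \leq N_{[]}(\cF,\alpha)$ is correct, but it forces $J_{[]}\bigl(\sqrt{1+\chi^2},\, w_M\cF\bigr) \leq MJ$, so the main chaining term from any Bernstein-type bracketing bound is of order $MJ/\sqrt{n}$; there is no extra factor $\sqrt{1+\chi^2}$ in front, since the variance bound only sets the upper limit of the entropy integral, which you have already over-estimated by $MJ$. Balancing the tail $(1+\chi^2)/M$ against $MJ/\sqrt{n}$ yields $M \asymp \sqrt{1+\chi^2}\,n^{1/4}/\sqrt{J}$ and a final bound of order $\sqrt{(1+\chi^2)J}\cdot n^{-1/4}$, not the claimed $(1+\chi^2)J/\sqrt{n}$. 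The Bernstein remainder $M\cdot(MJ)^2/\bigl((1+\chi^2)n\bigr)$ is an \emph{additive} correction on top of the main term, so it can only degrade the bound; it cannot convert the $n^{-1/4}$ into $n^{-1/3}$. The root difficulty is that truncation uses the $L^2$ information about $w$ only in the tail term and never in the entropy estimate, so the factor $M$ coming from the $L^\infty$ bracket lifting cannot be traded for $\sqrt{1+\chi^2}$. The paper's self-normalized approach sidesteps this by letting the normalization be the random empirical second moment of $w$, whose size is governed by $1+\chi^2$ in expectation regardless of any truncation level.
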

The proof of \Cref{thm:importancesamplingupper} rests on the following result.  For future reference, we denote by $P_n$ the empirical measure on $n$ independently sampled points from $\mu$.
\begin{theorem}[Theorem 1.1 from \citet{bercu2002concentration}]\label{thm:bercu}
    Suppose that $\cF$ is a real valued function class with finite bracket numbers satisfying
    \begin{align}
        E_\cF = \sup_{n > 0} \ee\left[\sup_{f \in \cF} \max\left( P_n\left( f - \ee_\mu[f] \right), 0 \right)  \right] < \infty.
    \end{align}
    For any $\delta > 0$ and $\alpha > \sqrt{2}$, there exist constants $\theta, n_0$ depending on $\delta$ and $\alpha$ such that
    \begin{align}
        \pp\left( \sup_{f \in \cF} \frac{P_n(f - \ee_\mu\left[ f \right])}{\sqrt{P_n\left( f - \ee_\mu[f] \right)^2}} \geq \frac{x + E_\cF}{\sqrt{n}} \right) \leq e^{- \frac{x^2}{4 \alpha^2 (1 + \delta)}}
    \end{align}
    for all $n \geq n_0$ and $x \leq \theta \sqrt{n}$.
\end{theorem}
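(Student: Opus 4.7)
The plan is to prove this self-normalized concentration inequality by combining (i) a pointwise self-normalized exponential bound for a single centered function with (ii) a bracketing chaining argument that lifts the pointwise bound to the supremum over $\cF$. The key pointwise input is that for a single centered, bounded function $g = f - \ee_\mu[f]$, the pair $\bigl(\sum_{i=1}^n g(X_i),\ \sum_{i=1}^n g(X_i)^2\bigr)$ is a canonical self-normalized pair for which one has a sub-Gaussian tail of the form $\pp\bigl(\sum g(X_i)\geq t\sqrt{\sum g(X_i)^2}\bigr) \leq \exp(-t^2/2)$, valid in a moderate deviation range. Rescaling gives, for a single $f$, the bound $\pp\bigl(P_n(f-\ee_\mu[f])/\sqrt{P_n(f-\ee_\mu[f])^2} \geq x/\sqrt{n}\bigr) \leq \exp(-x^2/(2\alpha^2))$ for some slack factor $\alpha>\sqrt{2}$, which already matches the target tail (up to a constant in the exponent) for a single function.

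To pass from a single function to the supremum, I would run a bracketing chain. Fix a geometric sequence of scales $\epsilon_k = 2^{-k}$ and choose bracketing nets $\cF_k$ of size $\nbrack{\cF}{\epsilon_k}$ with respect to $\norm{\cdot}_{L^2(\mu)}$; associate to each $f \in \cF$ a sequence of bracket approximants $f_k \in \cF_k$ with $\norm{f_{k+1}-f_k}_{L^2(\mu)} \lesssim \epsilon_k$. Decompose $f$ telescopically along the chain, apply the pointwise self-normalized bound to each link, and take a union bound of size $\nbrack{\cF}{\epsilon_k} \cdot \nbrack{\cF}{\epsilon_{k+1}}$ per level. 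Because the pointwise tail is sub-Gaussian, inversion costs $\sqrt{\log\nbrack{\cF}{\epsilon_k}}$ at each level, and summing $\epsilon_k\sqrt{\log\nbrack{\cF}{\epsilon_k}}$ over $k$ reproduces the Dudley-type bracketing integral. Arranging the per-level deviation budget as a geometric series and choosing its ratio in terms of $\alpha$ yields the claimed constant $4\alpha^2(1+\delta)$, the factor $(1+\delta)$ absorbing the multiplicative cost of the union bound and the factor $4$ absorbing the two-function nature of the chaining links.

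The genuinely delicate step, and the one I expect to be the main obstacle, is making the self-normalization uniform over $\cF$. In a non-self-normalized chain, one peels $\cF$ by the deterministic quantity $\norm{f-\ee_\mu[f]}_{L^2(\mu)}$, but here the denominator $\sqrt{P_n(f-\ee_\mu[f])^2}$ is itself random, and can be arbitrarily small when the bracket approximation $f_k$ is far from $f$. The fix is to peel simultaneously over dyadic shells of the empirical $L^2$ norm and to exploit the trivial bound $|P_n g| \leq \sqrt{P_n g^2}$ on the event that the empirical $L^2$ norm of a link is smaller than its bracket width, which makes the contribution of that link self-absorbing and obviates the need for a pointwise bound there. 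The quantity $E_\cF$ enters as the base of the chain: the coarsest level of the bracketing cannot be controlled by further refinement, and the one-sided expected fluctuation of the centered empirical process over $\cF$ is exactly the unavoidable cost there, which the assumption $E_\cF < \infty$ finite tames; this explains why $x$ must be shifted by $E_\cF$ inside the probability. The restrictions $n \geq n_0$ and $x \leq \theta\sqrt{n}$ reflect that the pointwise self-normalized tail transitions from sub-Gaussian to Bennett-type beyond moderate deviations, so the proof proceeds only in the regime where the sub-Gaussian input bound is valid uniformly across the chain.
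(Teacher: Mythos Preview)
The paper does not prove this statement: it is quoted verbatim as ``Theorem 1.1 from \citet{bercu2002concentration}'' and invoked as a black box in the proof of \Cref{thm:importancesamplingupper}. There is therefore no proof in the paper to compare your proposal against.

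That said, your sketch is broadly faithful to how the result is actually established in \citet{bercu2002concentration}: a self-normalized moderate-deviation bound for a single centered sum, lifted to the supremum via a bracketing chain, with the restriction $x \le \theta\sqrt{n}$ arising from the moderate-deviation regime of the pointwise input. One caution: your description of the base-of-chain role of $E_\cF$ is slightly off. In the original argument the shift by $E_\cF$ does not come from an uncontrolled coarsest bracket; rather, the chaining controls increments down to arbitrarily fine scales (this is where finiteness of the bracketing integral is used), and $E_\cF$ appears because the proof centers the supremum process around its expectation before applying the exponential bound, so the deviation is measured from $E_\cF$ rather than from zero. Your handling of the random denominator via peeling on dyadic shells of the empirical $L^2$ norm is also more elaborate than what is needed; the original proof works with a fixed (deterministic) bracketing in $L^2(\mu)$ and handles the self-normalization by bounding the ratio of empirical to population second moments on a high-probability event, which is simpler than the two-way peeling you propose.
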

We observe that \citet[Theorem 3.5.13]{gine2021mathematical} tells us that, up to a constant, $E$ is bounded by the bracketing integral of $\cF$.  We are now ready to prove our importance sampling bound:
\begin{proof}[Proof of \Cref{thm:importancesamplingupper}]
    We will apply \Cref{thm:bercu} to the function class
    \begin{align}
        \widetilde{\cF} = \frac{d\nu}{d\mu} \cdot \cF = \left\{ x \mapsto \frac{d\nu}{d\mu}(x) \cdot f(x) | f \in \cF \right\}.
    \end{align}
    We first note that Cauchy-Schwarz tells us that if $[f_-, f_+]$ is an $\epsilon$-bracket for $\mu$ then $\left[ \frac{d\nu}{d\mu} \cdot f_-, \frac{d\nu}{d\mu} \cdot f_+ \right]$ is a $(\sqrt{1+\chisq{\nu}{\mu}} \cdot \epsilon)$-bracket for $\mu$.  We further note that
    \begin{align}
        I_n(f) = P_n\left( \frac{d \nu}{d\mu} \cdot f\right) &&  \ee_\nu\left[ f(Y) \right] \ee_\mu\left[ \frac{d \nu}{d\mu} \cdot f \right].
    \end{align}
    Now note that it suffices to prove an upper tail bound and symmetry will imply a lower tail bound as well.  We will apply \Cref{thm:bercu}, but first we must bound the relevant quantities.  Observe that \citet[Theorem 3.5.13]{gine2021mathematical} implies that $E_\cF$ is bounded by the bracketing integral of $\cF$; combining this with our observation on the relationship between $\epsilon$-brackets of $\cF$ and those of $\widetilde{\cF}$, we see that
    \begin{align}
        E_{\widetilde{\cF}} \lesssim \sqrt{1 + \chisq{\nu}{\mu}} \cdot \int_0^2 \sqrt{\log N_{[]}\left( \cF, \alpha \right) d \alpha}.
    \end{align}
    We further observe by Cauchy-Schwartz that
    \begin{align}
        \sqrt{P_n\left( f - \ee_\mu[f] \right)^2} &\leq 2 \cdot \sqrt{P_n\left( \frac{d\nu}{d\mu} \cdot f \right)^2 + \ee_\mu\left[ \left( \frac{d\nu}{d\mu} \cdot f \right)^2 \right]} \\
        &\leq 2 \cdot \sqrt{P_n\left( \frac{d\nu}{d\mu} \right)^2 \cdot P_n(f)^2 + (1 + \chisq{\nu}{\mu}) \cdot \ee_\mu\left[ f^2 \right] } \\
        &\leq 2 \cdot \sqrt{P_n\left( \frac{d\nu}{d\mu}^2 \right) + 1 + \chisq{\nu}{\mu}},
    \end{align}
    where we used the fact that $\cF$ is uniformly bounded.  We further note by Markov's inequality that
    \begin{align}\label{eq:markovempirical}
        \pp\left( P_n\left( \frac{d\nu}{d\mu} \right)^2 > u^2 \right) \leq \frac{1 + \chisq{\nu}{\mu}}{u^2}.
    \end{align}
    By \Cref{thm:bercu}, it holds that
    \begin{align}
        \pp\left( \sup_{f \in \cF} I_n(f) - \ee_\nu\left[ f \right] > t \text{ and } P_n\left( \frac{d\nu}{d\mu} \right)^2 \leq u^2 \right) \leq \exp\left(- C \left( \sqrt{\frac{n}{1 + u^2 + \chisq{\nu}{\mu}}} \cdot t - E_{\widetilde{\cF}} \right)^2 \right).
    \end{align}
    Now, setting
    \begin{align}
        u = n^{\frac{1}{6}} \cdot t^{\frac 23} \cdot \sqrt{1 + \chisq{\nu}{\mu}}
    \end{align}
    we see that as long as
    \begin{align}
        t \geq C \sqrt{\frac{1 + \chisq{\nu}{\mu}}{n}} \cdot E_{\widetilde{\cF}},
    \end{align}
    it holds that
    \begin{align}
        \pp\left( \sup_{f \in \cF} I_n(f) - \ee_\nu\left[ f \right] > t \text{ and } P_n\left( \frac{d\nu}{d\mu} \right)^2 \leq u^2 \right) \leq C \exp\left(- C \frac{n^{2/3} t^{2/3}}{(1 + \chisq{\nu}{\mu})} \right).
    \end{align}
    Applying \eqref{eq:markovempirical}, we see that
    \begin{align}
        \pp\left( \sup_{f \in \cF} I_n(f) - \ee_\nu\left[ f \right] > t\right) &\leq  \frac{1 }{n^{1/3} \cdot t^{4/3}} + C \exp\left(- C \frac{n^{2/3} t^{2/3}}{(1 + \chisq{\nu}{\mu})} \right).
    \end{align}
    The same result holds for the lower tail by applying the identical argument to $-\cF$.  To conclude, we observe that
    \begin{align}
        \ee\left[ \sup_{f \in \cF} \abs{I_n(f) - \ee_\nu\left[ f \right]} \right] &= \int_0^\infty \pp\left(  \sup_{f \in \cF} \abs{I_n(f) - \ee_\nu\left[ f \right]} > t \right) d t \\
        &\lesssim \sqrt{\frac{1 + \chisq{\nu}{\mu}}{n}} \cdot E_{\widetilde{\cF}} + \frac{\sqrt{1 + \chisq{\nu}{\mu}}}{n^{\frac 13}} \\
        &\lesssim \sqrt{1 + \chisq{\nu}{\mu}} \cdot \max\left( n^{-1/3}, \sqrt{\frac{1 + \chisq{\nu}{\mu}}{n}} \cdot \int_0^2 \sqrt{\log N_{[]}\left( \cF, \alpha \right)} d \alpha \right)
    \end{align}
    as desired.
\end{proof}
Note that as $n\uparrow \infty$, the rates given by \Cref{thm:importancesamplingupper} scale like $O\left(n^{-1/3}\right)$.  One improvement on these rates follows from the work of \citet{cortes2010learning}, where they assume that the function class $\cF$ has finite pseudo-dimension and obtain rates that scale like $O\left(  n^{-3/8}\right)$; note that the property of a class having finite bracketing integral is distinct from that of having finite pseudo-dimension; indeed \citet[Proposition 1.7]{van2013universal} shows that bracketing numbers can be arbitrarily large even for classes of finite VC dimension, whereas \cite{gine2021mathematical} shows that classes with infinite VC dimension such as Sobolev spaces still may have small bracketing numbers.  In \citet[Proposition 2]{cortes2010learning}, the authors show a lower bound for importance sampling assuming that $\frac{d\nu}{d\mu}$ is uniformly bounded and the sample size is large relative to this uniform bound.  In essence, this shows that if $\chisq{\nu}{\mu}$ is infinite, then we cannot hope to get an importance sampling estimator that converges to the population mean at a $\Theta\left( n^{- \frac 12} \right)$ rate.  We leave as an interesting direction for future research the problem of closing the gap between these two rates.

We now turn to our second estimator: using rejection sampling to produce independent samples from $\nu$ and taking the sample mean.  More precisely, given $X_1, \dots, X_n \sim \mu$ independent and for some $m \in \bbN$ dividing $n$, suppose we partition $[n]$ into sets of size $m$ and conduct the rejection sampling procedure of \Cref{thm:upperbound} on each subset, generation $n / m$ independent samples $X_1', \dots, X_{n/m}' \sim \nu_M$ independent with probability at least $1 - n \delta$ for some $\delta$.  For given $m,n$, let
\begin{align}\label{eq:rejsamplingmeanestimation}
    J_{m,n}(f) = \frac m n \cdot \sum_{i  =1}^{n/m} f(X_i')
\end{align}
denote the rejection sampling estimate of $\ee_\nu\left[ f(Y) \right]$.  We have the following result:
\begin{corollary}\label{cor:rejsamplingmeanestimation}
    Suppose that $\cF: \cX \to [-1,1]$ is a real-valued function class with finite bracketing integral with respect to $\mu$ and that $\divf{\nu}{\mu} < \infty$.  Suppose further that for some $\frac 12 > \epsilon > 0$,
    \begin{align}
        m \geq 4 \log\left( \frac{1}{\epsilon} \right) \cdot (f')^{-1}\left( \frac{\divf{\nu}{\mu}}{\epsilon} \right).
    \end{align}
    Then it holds that
    \begin{align}
        \ee\left[ \sup_{g \in \cF} \abs{J_{m,n}(g) - \ee_\nu\left[ g(Y) \right]} \right] \lesssim \left( \int_0^2 \sqrt{\log \nbrack{\cF}{\alpha}} d \alpha \right) \cdot \sqrt{\frac{m}{n}} + 2 \epsilon
    \end{align}
\end{corollary}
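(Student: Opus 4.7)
The plan is to reduce the problem to uniform convergence of empirical means with respect to a distribution close to $\nu$, then apply standard empirical process theory. First, partition the $n$ samples into $n/m$ independent blocks of size $m$. Because the blocks are independent and each block goes through an identical rejection sampling procedure, the resulting $X_1', \dots, X_{n/m}'$ are i.i.d.\ draws from some common distribution $\tnu$. By the hypothesis on $m$ and \Cref{thm:upperbound}, this distribution satisfies $\tv(\tnu, \nu) \leq \epsilon$.

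Next, apply the triangle inequality to decompose the target quantity:
\begin{align}
\sup_{g \in \cF}\abs{J_{m,n}(g) - \ee_\nu[g(Y)]} \;\leq\; \sup_{g \in \cF}\abs{J_{m,n}(g) - \ee_{\tnu}[g]} \;+\; \sup_{g \in \cF}\abs{\ee_{\tnu}[g] - \ee_\nu[g]}.
\end{align}
The second (deterministic) term is the systematic bias introduced by using $\tnu$ in place of $\nu$; since $\cF$ is $[-1,1]$-valued, the variational characterization of total variation gives that this is bounded by $2\tv(\tnu,\nu)\leq 2\epsilon$, producing the additive $2\epsilon$ in the stated bound.

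For the first (stochastic) term, the samples $X_1',\dots,X_{n/m}'$ are i.i.d.\ from $\tnu$, so one can invoke Dudley's bracketing entropy bound (e.g.\ \citet{gine2021mathematical}) to obtain
\begin{align}
\ee\left[\sup_{g \in \cF} \abs{J_{m,n}(g) - \ee_{\tnu}[g]}\right] \;\lesssim\; \sqrt{\frac{m}{n}}\int_0^2 \sqrt{\log \nbrack{\cF}{\alpha}}\, d\alpha,
\end{align}
where the key factor $\sqrt{m/n}$ reflects the effective sample size $n/m$ of independent draws from $\tnu$. Combining this with the bias bound from the previous paragraph yields the conclusion.

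The main subtlety is that Dudley's bound naturally involves bracketing numbers under the sampling measure $\tnu$, whereas the hypothesis of the corollary provides bracketing with respect to the base measure $\mu$. This is not a serious obstacle because the modified rejection sampler is designed so that the density of $\tnu$ with respect to $\mu$ is (essentially) bounded by the truncation threshold $M = (f')^{-1}(\divf{\nu}{\mu}/\epsilon)$ used inside the sampler: any $\mu$-bracket $[f_-, f_+]$ is therefore a $\tnu$-bracket of controlled size, and the bracketing integrals are comparable (up to constants absorbed into $\lesssim$) under the stated hypothesis on $m$. The only other nontrivial check is to verify that the bracketing procedure produces iid draws rather than merely marginally distributed ones — this is immediate because the rejection sampling applied to distinct blocks uses disjoint batches of the input samples, so the $X_i'$ are genuinely independent.
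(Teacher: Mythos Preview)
Your argument follows essentially the same route as the paper's: invoke \Cref{thm:upperbound} on each of the $n/m$ blocks to obtain i.i.d.\ samples from some $\nutil$ with $\tv(\nutil,\nu)\leq\epsilon$, decompose via the triangle inequality into a stochastic term and a bias term, bound the bias by $2\epsilon$ using the variational characterization of total variation, and control the stochastic term by the bracketing entropy integral bound (the paper cites \citet[Theorem 3.5.13]{gine2021mathematical}). The independence check you note is exactly the one the paper relies on implicitly.

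You are in fact more careful than the paper on one point: you correctly flag that Dudley's bracketing bound requires bracketing numbers under the \emph{sampling} measure $\nutil$, whereas the corollary is stated in terms of $\nbrack{\cF}{\alpha}$ with respect to $\mu$. The paper's proof simply writes down the bracketing bound with $\mu$-brackets and does not comment on this mismatch. Your proposed resolution---convert $\mu$-brackets to $\nutil$-brackets via the density bound $d\nutil/d\mu \lesssim M$---is the right idea but slightly optimistic as stated: the conversion inflates bracket sizes by a factor of order $\sqrt{M}$ with $M=(f')^{-1}(\divf{\nu}{\mu}/\epsilon)$, which depends on $\epsilon$ and the divergence and so is not honestly absorbed into a universal $\lesssim$. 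This is a genuine, if minor, gap---but it is one shared with the paper's own proof, so your argument is at least as complete as theirs.
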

\begin{proof}
    We begin by invoking \Cref{thm:upperbound} and observing that we may choose $X_1', \dots, X_{n/m}' \sim \nutil$ independent for some $\nutil$ satisfying $\tv\left( \nutil, \nu \right) \leq \epsilon$.  Observe that by \citet[Theorem 3.5.13]{gine2021mathematical}, it holds that
    \begin{align}
        \ee\left[ \sup_{g \in \cF} \abs{J_{m,n}(g) - \ee_{\nutil}\left[ g(\widetilde{Y}) \right]} \right] \lesssim \sqrt{\frac{m}{n}} \cdot \left( \int_0^2 \sqrt{\log \nbrack{\cF}{\alpha}} d \alpha \right).
    \end{align}
    Now, noting that $g$ takes values in $[-1,1]$, we see that
    \begin{align}
        \ee\left[ \sup_{g \in \cF} \abs{\ee_{\nutil}\left[ g(\widetilde{Y}) \right] - \ee_\nu\left[ g(Y) \right]} \right] \leq 2 \cdot \tv\left(\nutil, \nu  \right) \leq 2 \epsilon.
    \end{align}
    The result follows.
\end{proof}
Rescaling, we see that \Cref{cor:rejsamplingmeanestimation} tells us that if we wish to apply the estimator \eqref{eq:rejsamplingmeanestimation}, we need
\begin{align}
    n = \Theta\left( \frac{\log\left( \frac 1\epsilon \right) \cdot (f')^{-1}\left( \frac{\divf{\nu}{\mu}}{\epsilon} \right)}{\epsilon^2} \cdot \left( \int_0^2 \sqrt{\log \nbrack{\cF}{\alpha}} d \alpha \right)^2\right)
\end{align}
samples in order for our estimate to be within $\epsilon$ of the true population mean, uniformly in the function class $\cF$. In the special case where our $f$-divergence is $\chisq{\nu}{\mu}$, we see that $\widetilde{\Theta}\left(  \chisq{\nu}{\mu} \cdot \epsilon^{-3} \right)$ samples suffice to recover a uniform estimate of the mean within $\epsilon$ error.  Note that this matches the rate given by importance sampling: according to \Cref{thm:importancesamplingupper}, $O\left( \chisq{\nu}{\mu} \cdot \epsilon^{-3} \right)$ samples suffice to recover $\epsilon$-accurate uniform estimates in expectation in the same situation.  On the other hand, \Cref{cor:rejsamplingmeanestimation} applies to arbitrary $f$-divergences and thus is significantly more general; we defer to future work the question of when \Cref{thm:importancesamplingupper} can be extended to more general $f$-divergences.  We observe, however, that the analysis of the rejection-sampling estimator is essentially tight while that of the importance sampling estimator is potentially loose.  One case where importance sampling improves on rejection sampling is when $\cF$ has finite pseudo-dimension; in this case, the results of \citet[Theorem 3]{cortes2010learning} tell us that $\widetilde{O}\left( \sqrt{\chisq{\nu}{\mu}} \cdot \epsilon^{-8/3} \right)$ samples suffice for importance sampling, which is strictly better than the rate for rejection sampling derived above.

\section{Sequential Complexities and Minimax Regret}\label{app:seqcomplexities}
In this section, we recall some basic definitions of different notions of complexity of an online learning problem and how they relate to the regret.  These results will be used throughout \Cref{app:smoothedonline}.  Many of the adversarial complexities were introduced in \citet{rakhlin2015sequential} and we closely follow the presentation in that work as well as that in \citet{block2022smoothed}.  We begin by recalling the definition of scale-sensitive VC dimension from \citet{bartlett1994fat}:
\begin{definition}\label{def:vcdimension}
    Let $\cF : \cX \to \rr$ be a function class.  For any $\alpha > 0$ and points $X = \{x_1, \dots, x_m \} \subset \cX$, we say that the set of points $X$ shatters $\cF$ at scale $\alpha$ with witnesses $s_1, \dots, s_m \in \rr$ if for all $m$-tuples of signs $\epsilon = (\epsilon_1, \dots, \epsilon_m)$, there exists some $f_\epsilon \in \cF$ such that for all $1 \leq i \leq m$:
    \begin{align}
        \epsilon_i (f_\epsilon(x_i) - s_i) \geq \frac \alpha 2.
    \end{align}
    We let $\vc{\cF, \alpha}$ denote the maximal $m$ such that there exists a set $X$ of size $m$ shattering $\cF$ and let $\vc{\cF} = \sup_\alpha \vc{\cF, \alpha}$.
\end{definition}
It is well known from \citet{kearns1994efficient,bartlett1994fat} that $\vc{\cF, \alpha}$ charaterize the learnability of $\cF$ when the data are independent and identically distributed (i.e., the batch setting).  Another related quantity is the Rademacher complexity, defined for some set $x_1, \dots, x_n \in \cX$ to be
\begin{align}\label{eq:rademachercomplexity}
    \rad_n(\cF) = \ee_\epsilon\left[ \sup_{f\in \cF} \sum_{i = 1}^n \epsilon_i f(x_i) \right],
\end{align}
where the $\epsilon_i$ are independent Rademacher random variables.  The following result can be found in \citet{rudelson2006combinatorics}:
\begin{proposition}\label{prop:rudelsonvershynin}
    Let $\cF: \cX \to [-1,1]$ be a function class.  Then it holds that
    \begin{align}
        \rad_n(\cF) \lesssim \inf_{\gamma > 0} \gamma n + \sqrt{n} \int_\gamma^1 \sqrt{\vc{\cF, \delta}}  d \delta.
    \end{align}
\end{proposition}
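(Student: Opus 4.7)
The plan is to combine Dudley's entropy integral bound for subgaussian processes with the main combinatorial theorem of \citet{rudelson2006combinatorics} relating empirical $L_2$ covering numbers to the scale-sensitive VC dimension. I will first condition on an arbitrary sample $x_1,\ldots,x_n \in \cX$ and then argue that the bound holds pointwise in the sample, which suffices since $\rad_n(\cF)$ in \eqref{eq:rademachercomplexity} does not involve any averaging over $x_{1:n}$.

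Conditioned on $x_{1:n}$, the Rademacher process $X_f = \sum_{i=1}^n \epsilon_i f(x_i)$ is subgaussian on $\cF$ with respect to the canonical metric $d(f,g) = \bigl(\sum_{i=1}^n (f(x_i) - g(x_i))^2\bigr)^{1/2} = \sqrt{n}\,\|f-g\|_{L_2(P_n)}$, where $P_n$ is the empirical measure. Because $\cF \subseteq [-1,1]^\cX$, the $d$-diameter is at most $2\sqrt{n}$. Applying Dudley's chaining bound with truncation at scale $\gamma\sqrt{n}$ and performing the change of variables $\delta = \epsilon/\sqrt{n}$ yields, for every $\gamma > 0$,
\begin{align}
\ee_\epsilon\!\left[\sup_{f \in \cF} X_f\right] \lesssim \gamma n + \sqrt{n} \int_\gamma^{2} \sqrt{\log N_2(\cF, \delta, P_n)}\, d\delta,
\end{align}
where $N_2(\cF,\delta,P_n)$ denotes the minimal number of $L_2(P_n)$-balls of radius $\delta$ needed to cover $\cF$.

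The heart of the argument, and the main obstacle, is the covering number estimate from \citet{rudelson2006combinatorics}: there is an absolute constant $c > 0$ such that
\begin{align}
\log N_2(\cF, \delta, P_n) \lesssim \vc{\cF, c\delta}
\end{align}
uniformly over all samples $x_{1:n}$ and all $\delta \in (0,1]$. This combinatorial bound strengthens the earlier estimate of \citet{bartlett1994fat} by removing a spurious $\log(1/\delta)$ factor; its proof is the substantive nontrivial ingredient and proceeds via a dimension-reduction argument based on random sign embeddings. I will import it as a black box.

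Substituting the covering number estimate into the chaining bound, absorbing the constant $c$ via a change of variables (permitted by the monotonicity of $\delta \mapsto \vc{\cF, \delta}$), and truncating the upper endpoint of the integral from $2$ to $1$ (which costs only a multiplicative constant, since $\vc{\cF,\delta}$ is bounded by $\vc{\cF,1}$ on $[1,2]$ and this contributes at most a constant multiple of $\sqrt{n}\sqrt{\vc{\cF,1}} \leq \sqrt{n}\int_0^1 \sqrt{\vc{\cF,\delta}}\,d\delta$ using $\vc{\cF,\delta}\geq \vc{\cF,1}$ for $\delta\leq 1$), one obtains
\begin{align}
\rad_n(\cF) \lesssim \gamma n + \sqrt{n} \int_\gamma^1 \sqrt{\vc{\cF, \delta}}\, d\delta.
\end{align}
Since the sample was arbitrary and $\gamma > 0$ was free, taking the infimum over $\gamma$ completes the proof.
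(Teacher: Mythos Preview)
The paper does not give its own proof of this proposition; it simply cites \citet{rudelson2006combinatorics} and moves on. So there is nothing substantive to compare against on the paper's side.

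Your reconstruction via Dudley chaining plus an $L_2$ covering bound is the natural instinct, but the specific black box you invoke is not quite the one \citet{rudelson2006combinatorics} supply. A uniform estimate of the form $\log N_2(\cF,\delta,P_n)\lesssim \vc{\cF,c\delta}$ with \emph{no} logarithmic factor is, to the best of my knowledge, not established in that paper (and is generally believed to require at least one $\log(1/\delta)$); the best covering estimates of that type carry an extra logarithm, which after Dudley would leave a $\sqrt{\log(1/\delta)}$ inside the integral and not recover the stated bound. What Rudelson and Vershynin actually prove is the Rademacher (equivalently, subgaussian) bound itself, directly, via a probabilistic coordinate-extraction argument that bypasses a pointwise covering estimate. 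In other words, the proposition \emph{is} their theorem, not a corollary of a cleaner covering bound plus chaining.

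So the conclusion you reach is correct and your overall framing (chaining controlled by combinatorial dimension) is right, but the intermediate step as you have written it is not the result available in the literature; if you want to keep this sketch, the honest fix is to cite the Rudelson--Vershynin theorem as bounding the Rademacher process directly, rather than as a log-free covering number inequality.
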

A similar result applying to the fully adversarial setting was proved in \citet{block2021majorizing}.  In order to state the analogue precisely, we first recall the notion of distribution-dependent sequential Rademacher complexity from \citet{rakhlin2011online}.  For a given depth $T$, full binary tree $\bx$, with vertices labelled by elements of some space $\cX$ and a path $\epsilon \in \left\{ \pm 1 \right\}^T$, we denote by $\bx_t(\epsilon)$ the vertex at step $t$ of the path given by $\epsilon$ starting at the root that takes the right child at time $s$ if $\epsilon_s = 1$ and the left child otherwise.  For a given adversary producing $x_1, \dots, x_T$, let $P_{x_{1:T}}$ denote the join distribution of $x_1, \dots, x_T$ and let $p_t$ denote the distribution of $x_t$ conditional on the history.  Define the measure $\rho_{P_{x_{1:T}}}$ on an ordered pair $(\bx, \bx')$ of depth $T$ binary trees with labels in $\cX$ recursively as follows.  First sample $\bx_0, \bx_0' \sim p_0$ indpeendently.  Now suppose that $t > 0$ anda for any $s < t$, let
\begin{align}
    \chi_s(\epsilon) = \begin{cases}
        \bx_s(\epsilon) & \epsilon_s = 1 \\
        \bx_s'(\epsilon) & \epsilon_s = -1
    \end{cases}.
\end{align}
Sample $\bx_t(\epsilon), \bx_t'(\epsilon) \sim p_t(\cdot | \chi_1(\epsilon), \dots, \chi_{t-1}(\epsilon))$ independently and proceed until two depth $T$ binary trees are constructed.  We now define the distribution-dependent sequential Rademacher complexity:
\begin{definition}[Definition 2 from \citet{rakhlin2011online}]
    Let $\cF: \cX \to [-1,1]$ be a function class and fix a distribution $P_{x_{1:T}}$ on tuples $(x_1, \dots, x_T)$.  We define the distribution-dependent sequential Rademacher complexity as follows:
    \begin{align}
        \radseq_T\left(\cF, P_{x_{1:T}}\right) = \ee_{(\bx, \bx') \sim \rho_{P_{x_{1:T}}}}\left[ \ee_\epsilon\left[ \sup_{f \in \cF} \sum_{t = 1}^T \epsilon_t f(\bx_t(\epsilon)) \right] \right].
    \end{align}
    For a given class of distributions $\scrD = \left\{ P_{x_{1:T}} \right\}$, define
    \begin{align}
        \radseq_T\left( \cF, \scrD \right) = \sup_{P_{x_{1:T}} \in \scrD} \radseq_T \left( \cF, P_{x_{1:T}} \right).
    \end{align}
\end{definition}
Note that if $\scrD$ is the set of all distributions on $\cX$ then the notion of standard, sequential Rademacher complexity from \citet{rakhlin2015sequential} is recovered.  The reason for introducing this admittedly technical notion of complexity is the following result:
\begin{theorem}[Theorem 3 and Lemma 20 from \citet{rakhlin2011online}]\label{thm:seqradcomplexity}
    Let $\cF: \cX \to [-1,1]$ be a function class and $\ell: [-1,1] \times [-1,1] \to [0,1]$ a loss function Lipschitz in the first argument.  Suppose that we are in the online learning setting described in \Cref{sec:smoothedonline} and the adversary is constrained to choose $x_{1:T}$ according to some distribution in $\scrD$.  Then there exists an algorithm such that
    \begin{align}
        \ee\left[ \reg_T \right] \lesssim \radseq_T(\cF, \scrD).
    \end{align}
\end{theorem}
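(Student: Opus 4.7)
The plan is to reduce the minimax regret to the distribution-dependent sequential Rademacher complexity through three standard operations: minimax duality, sequential symmetrization against a ghost tree, and Rademacher contraction. First, express the value of the game as $\inf_A \sup_{P \in \scrD} \sup_{P_{y \mid x}} \ee[\reg_T]$, where the outer infimum ranges over randomized online learners producing $\yhat_t$, and the inner supremum over $P$ is constrained to lie in $\scrD$ while $P_{y \mid x}$ is an arbitrary conditional distribution of the labels given the contexts. Since $\ell$ is convex in its first argument and $[-1,1]$ is compact, one can exchange the infimum and the suprema round by round (via a backward-induction minimax argument, or equivalently Sion's theorem applied at each stage after completing the strategy space), rewriting the value as $\sup_{P \in \scrD} \sup_{P_{y \mid x}} \inf_A \ee[\reg_T]$; the optimal learner against a fixed adversary then reduces to the conditional Bayes response, and the ``existence'' of the algorithm in the statement is furnished by the relaxation induced by $\radseq_T(\cF, \cdot)$ itself.

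Next, for a fixed adversary distribution $P$ with conditionals $p_t$, introduce a tangent sequence $(\tilde x_t, \tilde y_t)$ drawn, conditionally on the past, from the same distribution as $(x_t, y_t)$. The crucial combinatorial observation from \citet{rakhlin2011online} is that the original and tangent sequences together can be packaged as two coupled depth-$T$ trees $(\bx, \bx') \sim \rho_P$: at each internal node one performs two conditionally independent draws from the history-dependent conditional, with the traversal path determined by Rademacher signs $\epsilon_t \in \{\pm 1\}$. Carrying out standard symmetrization adaptively along the tree produces the upper bound $\ee_{(\bx, \bx') \sim \rho_P} \ee_\epsilon \bigl[\sup_{f \in \cF} \sum_t \epsilon_t \ell(f(\chi_t(\epsilon)), y_t(\epsilon))\bigr]$. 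A tree-valued analogue of the Ledoux--Talagrand contraction lemma, exploiting the Lipschitz property of $\ell$ in its first argument, then strips the loss and yields $\radseq_T(\cF, P)$; taking the supremum over $P \in \scrD$ delivers the stated bound.

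The principal obstacle is a rigorous justification of the sequential minimax swap in the first step: naively, the game is multi-round and the learner's strategy at time $t$ may depend on the entire history $(x_{1:t-1}, y_{1:t-1})$, so one must either execute a backward induction that collapses rounds one at a time (at each stage invoking minimax on a suitably compact space of conditional distributions) or appeal to an abstract version of Sion's theorem on a measure-theoretically completed strategy space. A secondary subtlety is that because $\scrD$ is a class of distributions on $x_{1:T}$ rather than a product class of per-round conditionals, one must argue that the supremum over $\scrD$ commutes with the round-by-round analysis; this is where the \emph{joint} tree distribution $\rho_P$ becomes essential, since it replaces a hypothetical per-round worst case with a single sampled tree that honestly reflects the adversary's adaptivity. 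Once these points are secured, symmetrization and contraction go through via their sequential analogues as developed in \citet{rakhlin2015sequential,rakhlin2011online}, and the claimed regret bound follows.
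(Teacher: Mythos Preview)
The paper does not supply its own proof of this statement: it is quoted verbatim as Theorem 3 and Lemma 20 of \citet{rakhlin2011online} and used as a black box. Your sketch reproduces the standard argument from that reference (minimax swap via backward induction on randomized strategies, tangent-sequence symmetrization into the coupled tree distribution $\rho_P$, then sequential contraction), so in spirit you are aligned with the intended provenance.

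One point to tighten: you justify the per-round minimax exchange by writing ``since $\ell$ is convex in its first argument,'' but the theorem as stated assumes only that $\ell$ is Lipschitz in the first argument, not convex. The actual mechanism in \citet{rakhlin2011online} is that the learner plays a \emph{distribution} $q_t$ over $[-1,1]$, so the relevant objective $\ee_{\yhat_t \sim q_t}[\ell(\yhat_t, y_t)]$ is linear (hence concave--convex) in the pair $(q_t, \text{adversary distribution})$ regardless of the shape of $\ell$; Sion's theorem then applies at each stage of the backward induction. If you lean on convexity of $\ell$ itself, the argument breaks for, e.g., the indicator loss. Otherwise the outline is sound.
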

Returning to combinatorial notions of complexity, \citet{rakhlin2015sequential} introduced the following sequential analogue of the scale-sensitive VC dimension:
\begin{definition}
    Let $\cF : \cX \to \rr$ be a function class.  For any $\alpha > 0$ and complete binary tree $\bx$ of depth $m$, we say that $\cF$ is shattered by $\bx$ with witness (complete binary) tree $\bs \in \rr^{\abs{\bx}}$ if for all $m$-tuples of signs $\epsilon = (\epsilon_1, \dots, \epsilon_n)$, there exists some $f_\epsilon \in \cF$ such that for all $1 \leq i \leq m$, it holds that
    \begin{align}
        \epsilon_i \left( f_\epsilon(\bx_i(\epsilon)) - \bs_i(\epsilon) \right) \geq \frac \alpha 2.
    \end{align}
    We let $\fat{\cF, \alpha}$, the sequential fat-shattering dimension, denote the maximal $m$ such that there exists a tree $\bx$ of depth $m$ that shatters $\cF$.
\end{definition}
Note that in general $\fat{\cF, \alpha} \gg \vc{\cF, \alpha}$ and the difference can be infinite, as is the case for thresholds on the unit interval for example.  For finite domains $\cX$, however, a reverse bound is possible.  We make use of the following result:
\begin{lemma}[Lemma 21 from \citet{block2022smoothed}]\label{lem:fatvcbound}
    Let $\cF: \cX \to [-1,1]$ be a function class and let $\fat{\cF, \alpha}$ denote the sequential fat-shattering dimension at scale $\alpha$.  Then there exist universal constants $C,c$ such that for any $\beta > 0$, the following inequality holds:
    \begin{align}
        \fat{\cF, \alpha} \leq C \cdot \vc{\cF, c \beta \alpha} \cdot \log^{1 + \beta}\left( \frac{C \abs{\cX}}{\vc{\cF, c \alpha} \alpha} \right).
    \end{align}
\end{lemma}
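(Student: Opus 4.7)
The plan is to prove the bound by showing that any sequentially shattered tree over a finite domain must contain a large batch-shattered subset, with only polylogarithmic loss in $\abs{\cX}$. Fix $m = \fat{\cF, \alpha}$ and let $\bx$ be a binary tree of depth $m$ sequentially shattering $\cF$ at scale $\alpha$, with witness tree $\bs$ and witnessing functions $\{f_\epsilon\}_{\epsilon \in \{\pm 1\}^m}$. The high-level target is to exhibit a set $S \subset \cX$ of size $k \gtrsim m / \log^{1+\beta}(\abs{\cX}/(k\alpha))$ that is shattered in the batch sense at scale $\Theta(\beta \alpha)$, since rearranging the resulting implicit inequality against $\vc{\cF, c\beta\alpha} \geq k$ yields the stated bound.

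The first step is to discretize the witness tree $\bs$. Partition $[-1,1]$ into $O(1/(\beta\alpha))$ buckets of width $O(\beta\alpha)$. Since at most $\abs{\cX}$ distinct labels appear in $\bx$, by pigeonhole applied jointly over labels and buckets we can identify for each $x \in \cX$ a canonical witness value $s(x)$ attained on a large fraction of the internal nodes labelled by $x$. Restricting attention to the subtree on which this canonical choice is respected loses a factor of $O(\log(\abs{\cX}/(\beta\alpha)))$ in depth and $O(\beta\alpha)$ in the shattering scale, but reduces the problem to sequential shattering by witnesses that depend only on the point and not on the path.

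The second step is a probabilistic path-counting argument. Draw many independent uniform sign patterns $\epsilon^{(1)}, \dots, \epsilon^{(N)}$ and, for each, record the multiset of labels visited along the corresponding path in the reduced subtree. By a Chernoff concentration bound applied separately to each $x \in \cX$, together with a union bound over subsets of $\cX$, with positive probability there is a subset $S \subset \cX$ of size $k$ such that all $2^k$ sign patterns on $S$ are realized simultaneously as the path-restrictions of some chosen $\epsilon^{(i)}$. The witnessing functions $\{f_{\epsilon^{(i)}}\}$ for these $2^k$ paths then exhibit a batch $(c\beta\alpha)$-shattering of $S$ (with respect to the now path-independent witness $s$), which forces $\vc{\cF, c\beta\alpha} \geq k$.

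The main obstacle is the combinatorial extraction step: a random sample of paths guarantees only high label diversity, not that all $2^k$ sign patterns on a fixed set are realized at once. I expect this to require exploiting the recursive structure of the shattering tree via an inductive or iterative scheme reminiscent of the Alon--Ben-David--Cesa-Bianchi--Haussler reduction from sequential to batch complexity, in which one greedily peels off layers of the tree to build up $S$ one point at a time. The exponent $1 + \beta$ on the logarithm would emerge from balancing the scale-discretization cost against the concentration tail bound that controls the number of distinct labels visited by a random path; the extra $\beta$ is precisely the slack used to convert the tail bound into a clean closed-form expression. Everything else is standard bookkeeping of constants.
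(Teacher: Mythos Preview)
The paper does not prove this lemma: it is simply quoted as ``Lemma 21 from \citet{block2022smoothed}'' and used as a black box, so there is no ``paper's own proof'' against which to compare your attempt. Your task description may have misled you here.

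On the substance of your sketch: the overall shape---discretize the witnesses so they depend only on the point, then extract from the sequentially shattered tree a large set that is batch-shattered at a comparable scale---is the right strategy, and the discretization step is routine. But you yourself correctly flag that the second step is the entire difficulty, and as written your proposal does not resolve it. Sampling random root-to-leaf paths and hoping that all $2^k$ restrictions to some size-$k$ set of labels appear does not work: the sign at a node with label $x$ is determined by the \emph{position} of that node along the path, so there is no independence across labels to drive a Chernoff bound, and the union bound over $\binom{\abs{\cX}}{k}2^k$ events swamps any concentration you could hope for. The vague appeal to an ``ABCH-style'' inductive peeling is precisely the missing content; without spelling out how to recursively split the tree so that the same label appears at the root of both subtrees (or a comparable structural lemma), you have a plan rather than a proof. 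The exponent $1+\beta$ in the statement arises in the original argument from a specific tradeoff in a covering-number or pigeonhole step, not from balancing a tail bound against discretization in the way you suggest, so your heuristic for where the $\beta$ comes from is also off.

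In short: since the paper offers no proof, there is nothing to match; your outline is plausible but leaves the central combinatorial extraction unproven, and the probabilistic argument you propose for it would fail as stated.
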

In order to relate these notions of complexity back to the problem at hand, we make use of the following result:
\begin{proposition}[Corollary 18 from \citet{block2021majorizing}]\label{prop:adversarialbound}
    Let $\cF: \cX \to [-1,1]$ be a function class.  For any distribution $P_{x_{1:T}}$, the following bound holds:
    \begin{align}
        \radseq_T\left( \cF, P_{x_{1:T}} \right) \lesssim \inf_{\gamma > 0} \gamma T + \sqrt{T} \int_\gamma^1 \sqrt{\fat{\cF, \delta}}  d \delta.
    \end{align}
\end{proposition}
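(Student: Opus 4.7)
The plan is to mimic the Rudelson--Vershynin proof of \Cref{prop:rudelsonvershynin} but in the tree-indexed (sequential) setting. Fix a pair of trees $(\bx, \bx')$ drawn from $\rho_{P_{x_{1:T}}}$; since the bound we seek depends only on combinatorial parameters of $\cF$, it will be preserved when we take expectation, and the supremum over $P_{x_{1:T}}$ will be immediate.

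First I would define sequential $\ell_2$-covers of $\cF$ on a tree $\bx$: a collection $V$ of real-valued label trees of depth $T$ is a $\delta$-cover if for every $f \in \cF$ and every path $\epsilon \in \{\pm 1\}^T$ there exists $\bv \in V$ with $\tfrac{1}{T} \sum_{t=1}^T (f(\bx_t(\epsilon)) - \bv_t(\epsilon))^2 \leq \delta^2$. Writing $\scrN_2(\cF, \bx, \delta)$ for the minimal cardinality of such a cover, a chaining argument (treating the Rademacher process as sub-Gaussian on each link $\bv^k - \bv^{k-1}$, with increments measured in the path-induced $\ell_2$ norm) yields a sequential Dudley bound of the form
\begin{equation*}
\ee_\epsilon \sup_{f \in \cF} \sum_{t=1}^T \epsilon_t f(\bx_t(\epsilon)) \lesssim \inf_{\gamma > 0} \Bigl\{ \gamma T + \sqrt{T} \int_\gamma^1 \sqrt{\log \scrN_2(\cF, \bx, \delta)}\, d\delta \Bigr\}.
\end{equation*}
If one substitutes the standard Rakhlin--Sridharan--Tewari bound $\log \scrN_2(\cF, \bx, \delta) \lesssim \fat{\cF, c\delta} \log(T/\delta)$, the extra logarithm spoils the clean integral stated in the proposition. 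To remove it, I would appeal to the sequential majorizing-measures framework of \citet{block2021majorizing}: construct an admissible sequence of partitions of $\cF$ (indexed by nodes of $\bx$) and control Talagrand's $\gamma_2$-functional directly by $\int_\gamma^1 \sqrt{\fat{\cF, \delta}}\, d\delta$, bypassing Dudley's bound.

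The main obstacle is precisely this last step: extending Talagrand's majorizing-measure theorem, together with the tight Mendelson--Vershynin-type combinatorial link between covering numbers and shattering dimension, to tree-indexed processes. The difficulty is that partitions of $\cF$ must be coordinated across all paths of $\bx$, with diameters measured in a worst-path sense, and the Sauer--Shelah-style counting argument that underlies Mendelson--Vershynin has to be replaced by a sequential variant that counts labelings along tree branches. Once the tree-indexed majorizing-measure bound is established for a fixed $\bx$, the conclusion follows by taking $\ee_{(\bx, \bx') \sim \rho_{P_{x_{1:T}}}}$ and noting that $\fat{\cF, \delta}$ is already a uniform bound over all trees, so the right-hand side is independent of $P_{x_{1:T}}$.
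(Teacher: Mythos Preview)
The paper does not prove this proposition at all: it is simply quoted as Corollary~18 of \citet{block2021majorizing} and used as a black box. So there is no ``paper's own proof'' to compare against.

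Your sketch is a fair high-level outline of how that external result is actually obtained. You correctly identify that a sequential Dudley integral combined with the Rakhlin--Sridharan--Tewari covering-number bound would leave a superfluous $\log(T/\delta)$ factor inside the integral, and that the point of \citet{block2021majorizing} is precisely to remove it via a tree-indexed majorizing-measures argument. The obstacles you flag---coordinating partitions along paths of $\bx$ and replacing the Mendelson--Vershynin combinatorics by a sequential analogue---are exactly the technical contributions of that paper. One small quibble: your final step (``take expectation over $(\bx,\bx')$ and note the right-hand side is deterministic'') is fine, but the real work you defer to a single sentence (``control Talagrand's $\gamma_2$-functional directly by $\int_\gamma^1 \sqrt{\fat{\cF,\delta}}\,d\delta$'') is the entire content of the cited paper, so as a self-contained proof your proposal is a roadmap rather than an argument. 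For the purposes of the present paper, citing the result is all that is needed.
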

Combining Proposition \ref{prop:adversarialbound} and Lemma \ref{lem:fatvcbound} implies the following result from \citet{block2022smoothed}:
\begin{lemma}\label{lem:minimaxregretfinitedomain}
    Suppose that we are in the situation of Proposition \ref{prop:adversarialbound} and, furthermore, $\abs{\cX} < \infty$.  Then for any $P_{x_{1:T}}$, the following holds:
    \begin{align}
        \radseq_T\left(\cF, P_{x_{1:T}}\right) \lesssim \inf_{\substack{\beta, \gamma > 0}} \gamma T + \sqrt{T \cdot \log^{1 + \beta}(\abs{\cX})} \int_\gamma^1 \sqrt{\vc{\cF, c \beta \delta} \cdot \log^{1+\beta}\left( \frac{1}{\vc{\cF, c \delta} \delta} \right)}d \delta.
    \end{align}
\end{lemma}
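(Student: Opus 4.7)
The plan is to combine Proposition \ref{prop:adversarialbound} and Lemma \ref{lem:fatvcbound} by direct substitution, then reorganize the integrand to isolate the $|\cX|$-dependence outside the $\delta$-integral. First, applying Proposition \ref{prop:adversarialbound} yields
\begin{align*}
\radseq_T(\cF, P_{x_{1:T}}) \lesssim \inf_{\gamma > 0} \gamma T + \sqrt{T} \int_\gamma^1 \sqrt{\fat{\cF, \delta}}\, d\delta.
\end{align*}
Because $|\cX| < \infty$, Lemma \ref{lem:fatvcbound} lets me replace $\fat{\cF,\delta}$ inside the integrand by $C \cdot \vc{\cF, c\beta\delta} \cdot \log^{1+\beta}\!\bigl(C|\cX|/(\vc{\cF, c\delta}\,\delta)\bigr)$ for any choice of $\beta > 0$, at the cost of an additional infimum over $\beta$.

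Next, I would rewrite the logarithm as $\log(C|\cX|) + \log(1/(\vc{\cF,c\delta}\,\delta))$ and apply the elementary fact that, whenever $a, b \geq e$, one has $\log(ab) \leq 2 \log(a) \log(b)$ (equivalently, $1/\log a + 1/\log b \leq 2$), which gives
\begin{align*}
\log^{1+\beta}\!\left(\frac{C|\cX|}{\vc{\cF,c\delta}\,\delta}\right) \lesssim \log^{1+\beta}(|\cX|) \cdot \log^{1+\beta}\!\left(\frac{1}{\vc{\cF, c\delta}\,\delta}\right)
\end{align*}
after absorbing the constant $C$. Taking square roots, the $\sqrt{\log^{1+\beta}(|\cX|)}$ factor is independent of $\delta$ and can be pulled outside the integral, producing precisely the right-hand side of the claim after including the $\sqrt{T}$ prefactor.

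The bookkeeping points I would need to verify are mild: the inequality $\log(ab) \leq 2 \log(a) \log(b)$ requires $a, b \geq e$, so I would handle the trivial regime where either $|\cX|$ is too small or $\vc{\cF, c\delta}\,\delta \geq 1/e$ separately. In the former case the bound is immediate for any $\beta$ making $\log^{1+\beta}|\cX|$ a nontrivial constant; in the latter, the factor $\log(1/(\vc{\cF,c\delta}\,\delta))$ is bounded and the integrand is dominated by $\sqrt{\vc{\cF, c\beta\delta}}$, which is itself controlled by the claimed right-hand side. No genuinely new idea is required beyond the two cited results, so I expect no serious obstacle other than keeping the roles of $\beta$ — which appears both as the exponent of the logarithm and as a multiplicative factor inside the scale $c\beta\delta$ — consistent throughout the chain of inequalities.
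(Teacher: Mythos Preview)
Your proposal is correct and follows exactly the route the paper indicates: the paper does not give a detailed argument at all, merely stating that the lemma follows by combining Proposition~\ref{prop:adversarialbound} with Lemma~\ref{lem:fatvcbound} and citing \citet{block2022smoothed}. Your write-up supplies precisely the missing step the paper omits, namely the splitting of $\log^{1+\beta}\bigl(C|\cX|/(\vc{\cF,c\delta}\,\delta)\bigr)$ into a product via $\log(ab)\le 2\log a\log b$ for $a,b\ge e$, so that the $|\cX|$-factor may be pulled outside the integral; the edge-case bookkeeping you flag is the only residual work, and it is routine.
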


\section{Proofs from Section \ref{sec:rejsampling}}\label{app:rejsampling}
\subsection{Upper Bounds}
In this section, we prove the main upper bound, \Cref{thm:upperbound}.  We begin by stating and proving the standard gaurantees for rejection sampling for the sake of completeness.
\begin{lemma}\label{lem:rejsampling}
    Let $\mu, \nu$ be two measures on $(\cX, \scrF)$ and suppose that $X \sim \mu$.  Suppose that $\mu, \nu$ satisfy the condition that for some $M < \infty$, $\norm{\frac{d \nu}{d\mu}}_\infty \leq M$.  Let $\xi$ be a binary-valued random variable such that, conditional on $X$, the probability that $\xi = 1$ is given by $\frac 1M \frac{d \nu}{d \mu}(X)$.  Then $\pp(\xi = 1) = \frac {1}M$ and for any $A \in \scrF$, it holds that
    \begin{align}
        \pp\left( X \in A | \xi = 1 \right) = \nu(A).
    \end{align}
    Thus, if $X_1, \dots, X_n$ are sampled independently from $\mu$ and $\xi_1, \dots, \xi_n$ are constructed as above, with probability at least $1 - e^{- \frac{n}{M}}$ at least one of the $\xi_j$ is equal to 1.
\end{lemma}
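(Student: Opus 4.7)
The plan is to verify the three claims in sequence, each of which follows from a short computation with the law of $\xi$ given $X$.

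First I would compute the marginal acceptance probability $\pp(\xi = 1)$ by conditioning on $X$. By construction, $\pp(\xi = 1 \mid X) = \frac{1}{M} \cdot \frac{d\nu}{d\mu}(X)$, so the tower property together with the change-of-measure identity gives
\begin{align}
    \pp(\xi = 1) = \ee_\mu\left[ \frac{1}{M} \cdot \frac{d\nu}{d\mu}(X) \right] = \frac{1}{M} \int_\cX \frac{d\nu}{d\mu}(x) \, d\mu(x) = \frac{1}{M} \cdot \nu(\cX) = \frac{1}{M},
\end{align}
where the penultimate equality uses $\nu \ll \mu$ (which follows from the uniform bound on the Radon--Nikodym derivative). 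This gives the first claim.

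Next I would verify that the conditional law of $X$ given $\xi = 1$ is exactly $\nu$. For any $A \in \scrF$, the same conditioning argument yields
\begin{align}
    \pp(X \in A,\, \xi = 1) = \ee_\mu\left[ \I[X \in A] \cdot \frac{1}{M} \cdot \frac{d\nu}{d\mu}(X) \right] = \frac{1}{M} \int_A \frac{d\nu}{d\mu}(x) \, d\mu(x) = \frac{\nu(A)}{M}.
\end{align}
Dividing by $\pp(\xi = 1) = 1/M$ from the previous step yields $\pp(X \in A \mid \xi = 1) = \nu(A)$, proving the second claim.

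Finally, for the $n$-sample statement, I would note that because the $(X_j, \xi_j)$ are built from i.i.d.\ $X_j \sim \mu$ with the same randomized rule, the indicators $\xi_1, \dots, \xi_n$ are i.i.d.\ Bernoulli$(1/M)$ (the marginal parameter being exactly the one computed above). Therefore
\begin{align}
    \pp\left( \xi_1 = \cdots = \xi_n = 0 \right) = \left(1 - \frac{1}{M}\right)^n \leq e^{-n/M},
\end{align}
using $1 - x \leq e^{-x}$, which gives the claimed high-probability guarantee by taking complements. None of these steps poses a real obstacle; the only thing to be careful about is ensuring that the joint measurability of $(X, \xi)$ is set up so that the conditional probability $\pp(\xi = 1 \mid X) = \frac{1}{M} \frac{d\nu}{d\mu}(X)$ is well-defined, which is standard given that $\frac{d\nu}{d\mu}$ is a bounded measurable function.
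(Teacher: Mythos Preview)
Your proposal is correct and follows essentially the same approach as the paper: tower property to get $\pp(\xi=1)=1/M$, then the joint computation $\pp(X\in A,\,\xi=1)=\nu(A)/M$ divided by the marginal, and finally the independent Bernoulli$(1/M)$ argument with $(1-1/M)^n\leq e^{-n/M}$.
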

\begin{proof}
    To prove the first statement, by the tower property of conditional expectation, we see that
    \begin{align}
        \pp\left( \xi = 1 \right) = \ee\left[ \frac{1}{M} \frac{d \nu}{d\mu}(X)  \right] = \frac {1}M
    \end{align}
    by the definition of the Radon-Nikodym derivative.  To prove the second statement, we see that
    \begin{align}
        \pp\left( X \in A | \xi = 1 \right) &= \frac{\pp\left( X \in A \text{ and } \xi = 1 \right)}{\pp(\xi = 1)} \\
        &= M \cdot \pp\left( X \in A \text{ and } \xi = 1 \right) \\
        &= M \cdot \ee\left[ \I[X \in A] \frac 1M \frac{d \nu}{d \mu}(X) \right] \\
        &= \nu(A)
    \end{align}
    as desired.  Finally, the last statement follows because
    \begin{align}
        \pp\left( \xi_j = 1 \text{ for some } j \right) &= 1 - \pp\left(\xi_j = 0 \text{ for all } j  \right) \\
        &= 1 - \prod_{j = 1}^n \left( 1 - \pp\left( \xi_j = 1 \right) \right) \\
        &= 1 - \left( 1 - \frac {1}M \right)^n \\
        &\geq 1  - e^{- \frac {n }M}
    \end{align}
    as desired.
\end{proof}
Unfortunately, Lemma \ref{lem:rejsampling} requires a uniform bound on the likelihood ratio, which is precisely what we are hoping to avoid.  To proceed, we prove a key change of measure lemma:
    \begin{lemma}\label{lem:expectedrestrictedrdbound}
        Let $\mu, \nu$ be probability measures on some set $\cX$ and let $Y \sim \nu$.  For any $f$ satisfying the conditions in Definition \ref{def:fdivergence} and for any $M \geq 2$, the following inequality holds:
        \begin{align}
            \pp\left(\frac{d \nu}{d \mu}(Y) > M\right) \leq \frac{2\cdot \divf{\nu}{\mu}}{f'(M/2)}.
        \end{align}
    \end{lemma}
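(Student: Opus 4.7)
}
The plan is to convert the probability under $\nu$ into an expectation under $\mu$ with the likelihood ratio as a weight, and then bound the likelihood ratio pointwise by $f$ evaluated at the likelihood ratio, on the event that it is large. Write $L = \frac{d\nu}{d\mu}$. Using the definition of the Radon-Nikodym derivative,
\begin{align}
    \pp\left( L(Y) > M \right) = \nu\left( L > M \right) = \ee_{\mu}\left[ L(Z) \cdot \I[L(Z) > M] \right],
\end{align}
so the task reduces to comparing $L \cdot \I[L>M]$ against $f(L)$ pointwise.

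Next I would invoke convexity of $f$ via the standard tangent line inequality at the point $M/2$: for every $x \geq 0$,
\begin{align}
    f(x) \geq f(M/2) + f'(M/2)\,(x - M/2).
\end{align}
Since $f(1) = f'(1) = 0$ and $f$ is convex, we have $f \geq 0$ on $[0,\infty]$, so we may drop $f(M/2) \geq 0$. On the event $\{L > M\}$ with $M \geq 2$, observe that $L - M/2 \geq L/2$ (this is equivalent to $L \geq M$). Hence
\begin{align}
    f(L) \geq f'(M/2)\cdot (L - M/2) \geq \frac{f'(M/2)}{2}\cdot L
\end{align}
pointwise on $\{L > M\}$, which rearranges to $L \cdot \I[L > M] \leq \frac{2}{f'(M/2)} f(L) \cdot \I[L > M]$.

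Plugging this back into the expression for the probability yields
\begin{align}
    \pp\left( L(Y) > M \right) \leq \frac{2}{f'(M/2)} \ee_{\mu}\left[ f(L(Z)) \cdot \I[L(Z) > M] \right] \leq \frac{2 \cdot \divf{\nu}{\mu}}{f'(M/2)},
\end{align}
where the last step uses $f \geq 0$ to drop the indicator inside the expectation. The bound is vacuous if $f'(M/2) = 0$, so we may implicitly assume $f'(M/2) > 0$ (which is anyway guaranteed in the regime where the lemma is used, namely when $(f')^{-1}$ is finite). The only real subtlety is the choice of expansion point $M/2$ rather than, say, $M$ itself: expanding at $M$ would cost a factor we cannot afford, whereas expanding at $M/2$ exploits the slack between $M/2$ and $L > M$ to absorb the subtracted $M/2$ into $L/2$. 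This is the essential trick; the rest is bookkeeping.
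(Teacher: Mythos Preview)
Your argument is correct. It differs from the paper's route in how the pointwise inequality $L \leq \tfrac{2}{f'(M/2)} f(L)$ on $\{L>M\}$ is obtained. The paper first proves the secant bound $\tfrac{f(s)}{s} \geq \tfrac{f(M)}{M}$ for $s>M>1$ (by extending $f$ to be identically zero on $[0,1]$ and using monotonicity of secants from the origin), which yields the intermediate estimate $\nu(L>M) \leq \tfrac{M}{f(M)}\,\divf{\nu}{\mu}$; it then converts $\tfrac{M}{f(M)}$ to $\tfrac{2}{f'(M/2)}$ via the integral bound $f(M)=\int_1^M f'(t)\,dt \geq \int_{M/2}^M f'(t)\,dt \geq \tfrac{M}{2} f'(M/2)$. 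You instead go directly: a single tangent-line inequality at $M/2$ plus the observation $L-M/2 \geq L/2$ on $\{L>M\}$ gives the desired pointwise bound in one stroke. Your version is shorter and avoids the auxiliary $\ftil$ construction; the paper's version has the minor advantage of isolating the intermediate bound $\tfrac{M\,\divf{\nu}{\mu}}{f(M)}$, which is itself sometimes useful and makes the role of the constant $2$ (and its replacement by any $c>1$, as in the paper's subsequent remark) more transparent.
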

\begin{proof}[Proof of Lemma \ref{lem:expectedrestrictedrdbound}]
    Note that Lemma \ref{lem:expectedrestrictedrdbound} does not follow from Markov's inequality because we are interested in bounding the probability under $\nu$ that the likelihood ratio is large, while the $f$-divergence is defined as an expectation under $\mu$.  Instead, we observe that for any $s > M > 1$, it holds by convexity that
    \begin{align}
        \frac{f(s)}{s} \geq \frac{f(M)}{M}.
    \end{align}
    Indeed, letting
    \begin{align}
        \ftil(t) = \begin{cases}
            0 & t \leq 1 \\
            f(t) & t \geq 1
        \end{cases},
    \end{align}
    we see that $\ftil$ is convex by our assumptions on $f$.  Moreover, for $s > M > 1$, by convexity of $\ftil$ it holds that
    \begin{align}
        \frac{f(s)}{s} = \frac{\ftil(s) - \ftil(0)}{s} \geq \frac{\ftil(M) - \ftil(0)}{M} = \frac{f(M)}{M}.
    \end{align}

    This implies that $s \leq \frac{M}{f(M)} \cdot f(s)$.  Letting $s = \frac{d\nu}{d\mu}(X)$ and taking an expectation implies that
    \begin{align}
        \nu\left( \frac{d \nu}{d \mu}(Y) > M \right) &=  \ee_\mu\left[ \frac{d\nu}{d \mu}(X) \cdot \bbI\left[ \frac{d\nu}{d\mu}(X) > M \right] \right] \\
        &\leq \frac{M}{f(M)} \cdot \ee\left[ f\left( \frac{d\nu}{d\mu}(X) \right) \cdot \bbI\left[ \frac{d\nu}{d\mu}(X) > M \right] \right] \\
        &\leq \frac{M \cdot \divf{\nu}{\mu}}{f(M)},
    \end{align}
    where the final inequality follows from the assumption that $f \geq 0$.

    We now note that
    \begin{align}
        f(M) = \int_1^M f'(t) d t \geq \int_{M/2}^M f'(t) d t \geq \frac{M}{2} \cdot f'(M/2),
    \end{align}
    where the first inequality follows because $M \geq 2$ and $f'(t) \geq 0$ for $t \geq 1$ and the second inequality follows because $f'$ is nondecreasing.  Rearranging, we see that
    \begin{align}
        \frac{M}{f(M)} \leq \frac{2}{f'(M/2)}.
    \end{align}
    Because $f'(t)$ is non-decreasing, it holds that $\frac{1}{f'(M)} \leq \frac{2}{f'(M/2)}$ as well.  Putting everything together concludes the proof.
\end{proof}
\begin{remark}
    Note that the requirement that $M \geq 2$ is arbitrary.  Indeed, for any $c > 1$, the identical proof shows that for $M \geq c$, $\pp\left( \frac{d\nu}{d\mu}(Y) > M \right) \leq \frac{\divf{\nu}{\mu}}{(c-1) \cdot f'((1-1/c) M)}$.
\end{remark}
We are now ready to prove our main upper bound
\begin{proof}[Proof of \Cref{thm:upperbound}]
    Recall that for any $M \geq 1$, we define $\nuM$ to be a measure on $\cX$ such that
    \begin{align}
        \frac{d \nuM}{d \mu}(X) =  \frac{1}{\ee\left[ \frac{d \nu}{d \mu}(X) \cdot \I\left[ \frac{d \nu}{d \mu}(X) \leq M \right] \right]} \cdot \frac{d \nu}{d \mu}(X) \cdot \I\left[ \frac{d \nu}{d \mu}(X) \leq M \right] .
    \end{align}
    Supposing that $Y \sim \nu$, we see that by construction and Lemma \ref{lem:expectedrestrictedrdbound},
    \begin{align}
        \norm{\frac{d \nuM}{d \mu}}_\infty \leq \frac{M}{\pp\left( \frac{d \nu}{d\mu}(Y) \leq M \right)} \leq \frac{M}{1 - \frac{2 \divf{\nu}{\mu}}{f'(M/2)}} = M'.
    \end{align}
    Thus, by Lemma \ref{lem:rejsampling}, if we run rejection sampling on $\nuM$ with samples from $\mu$ for $n \geq M' \log\left( \frac 1\delta \right)$, it holds with probability at least $1 - \delta$ that we will have at least one accepted sample and that accepted sample will be distributed according to $\nu_M$.  By representing total variation as half the $L^1$ distance between densities, explicit computation tells us that if $E$ is measurable and $\nu^E$ denotes the measure $\nu$ conditioned on the event $E$, then $\tv\left( \nu, \nu^E \right) = \nu(E^c)$.
    Combining this fact with Lemma \ref{lem:expectedrestrictedrdbound} and simplifying, we see that
    \begin{align}
        \tv\left( \nuM, \nu \right) = \pp\left( \frac{d\nu}{d\mu}(Y) > M \right) \leq \frac{2 \cdot \divf{\nu}{\mu}}{f'(M/2)}.
    \end{align}
    Setting
    \begin{align}
        M = 2 \cdot (f')^{-1}\left( \frac{4 \cdot \divf{\nu}{\mu}}{\epsilon} \right)
    \end{align}
    implies then that $\tv(\nu_M, \nu) \leq \frac \epsilon 2$.  Noting that
    \begin{align}
        M' = \frac{M}{1 - \epsilon},
    \end{align}
    concludes the proof.
\end{proof}
We remark here that our proof above actually proves a slightly stronger statement, which is to say that for any $\delta > 0$, there exists a ``success'' event $E$ such that $\mu^{\otimes n}(E) \geq 1 - \delta$ and the law of $X_{\jstar}$ conditioned on the event $E$ has total variation distance at most $\epsilon$ from the target measure $\nu$, as long as $\delta, \epsilon, n$ are such that
\begin{align}
    n \geq \frac{2}{1 - \epsilon}\log\left( \frac{1}{\delta} \right) (f')^{-1}\left( \frac{4 \cdot \divf{\nu}{\mu}}{\epsilon} \right).
\end{align}
In fact, it is this formulation of the upper bound that will be useful in our applications.

\subsection{Lower Bound}
In this section, we prove our main lower bound, \Cref{thm:lowerbound}, as well as the alternative version, \Cref{thm:lowerboundweakf}.  We begin by observing that any selection rule $\jstar$ has a Radon-Nikodym derivative that is bounded above with respect to $\mu$:
\begin{lemma}\label{lem:jstarrdboundedbyn}
    Suppose that $X_1, \dots, X_n \sim \mu$ are independent and let $\jstar$ be a selection rule such that $P_{X_{\jstar}} = \nutil$.  Then it holds that
    \begin{align}
        \norm{\frac{d \nutil}{d \mu}}_{L^\infty(\mu)} \leq n.
    \end{align}
\end{lemma}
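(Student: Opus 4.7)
The plan is to directly bound $\nutil(A)$ for an arbitrary measurable set $A \subseteq \cX$ in terms of $\mu(A)$, and then invoke the characterization of the Radon-Nikodym derivative via ratios of measures.

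First I would write, for any $A \in \scrF$,
\begin{align}
\nutil(A) = \pp(X_{\jstar} \in A) = \sum_{j=1}^n \pp(\jstar = j,\, X_j \in A).
\end{align}
The key observation is that each summand is at most $\pp(X_j \in A) = \mu(A)$, regardless of how $\jstar$ depends on the sample $X_{1:n}$. This is just the trivial bound $\pp(E_1 \cap E_2) \leq \pp(E_2)$ applied with $E_1 = \{\jstar = j\}$ and $E_2 = \{X_j \in A\}$; it requires no independence between $\jstar$ and $X_j$.

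Summing the $n$ inequalities gives $\nutil(A) \leq n \cdot \mu(A)$ for every measurable $A$. In particular $\nutil \ll \mu$, so the Radon-Nikodym derivative $\frac{d\nutil}{d\mu}$ exists, and the pointwise bound $\frac{d\nutil}{d\mu} \leq n$ holds $\mu$-almost everywhere; otherwise one could find a set $A$ of positive $\mu$-measure on which $\frac{d\nutil}{d\mu} > n$, contradicting $\nutil(A) \leq n \mu(A)$.

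I do not expect any real obstacle here: the argument is essentially a union bound, and the only subtlety is resisting the temptation to condition on $\jstar$ (which would require independence we do not have). The whole proof is two or three lines once the sum decomposition is written down.
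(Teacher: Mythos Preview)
Your proposal is correct and follows essentially the same approach as the paper: both decompose $\nutil(A)$ as $\sum_{j=1}^n \pp(\jstar = j,\, X_j \in A)$ and bound each term by $\mu(A)$. The paper writes the bound via $\pp(X_j \in A \text{ and } \jstar = j) = \mu(A)\cdot \pp(\jstar = j \mid X_j \in A) \leq \mu(A)$, whereas you apply the more direct $\pp(E_1 \cap E_2) \leq \pp(E_2)$; your version is marginally cleaner since it sidesteps conditioning on a possibly null event, but the argument is the same.
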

\begin{proof}
    Let $A \in \scrF$ be a measurable set and let $\nutil$ denote the law of $X_{\jstar}$.  We then observe
    \begin{align}
        \nutil\left( A \right) &= \sum_{j = 1}^n \pp\left( X_j \in A \text{ and } \jstar = j \right) \\
        &= \sum_{j  =1}^n \mu(A) \cdot \pp\left( \jstar = j | X_j \in A \right) \\
        &= \mu(A) \cdot \sum_{j  =1}^n \pp\left( \jstar = j | X_j \in A \right) \\
        &\leq n \cdot \mu(A).
    \end{align}
    The above computation holds for all $A \in \scrF$ and so the result holds.
\end{proof}

With Lemma \ref{lem:jstarrdboundedbyn} proved, we see that it suffices to turn our attention to those distributions with bounded likelihood ratios with respect to the base measure.  To prove our lower bounds, we will separate our analysis into two cases.  The easier case is that of linear $f$, i.e., those $f$ with bounded $f'$.  We have the following bound:
\begin{lemma}\label{lem:linearf}
    Suppose that $f$ is a convex function as in Definition \ref{def:fdivergence} satisfying
    \begin{align}
        \lim_{t \to \infty} f'(t) = C < \infty.
    \end{align}
    For any $0 < \Delta \leq C + f(0)$, there exists some $\epsilon > 0$ depending only on $C, \Delta$, and $f$ such that there exist distributions satisfying $\divf{\nu}{\mu} = \Delta$ and
    \begin{align}
        \inf_{\nutil \text{ such that } \norm{\frac{d \nutil}{d\mu}} < \infty} \tv\left( \nutil, \nu \right) > \epsilon.
    \end{align}
\end{lemma}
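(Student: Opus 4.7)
The plan is to exhibit an explicit pair $(\mu,\nu)$ on a two-point space, where $\nu$ has a genuine singular component with respect to $\mu$, and use the fact that any $\nutil$ with a bounded Radon--Nikodym derivative must be absolutely continuous with respect to $\mu$ and therefore miss that singular mass entirely in total variation. The point is that when $f'$ is bounded by $C$, the extended Definition~\ref{def:fdivergence} assigns only $C$ per unit of singular mass to $\divf{\nu}{\mu}$, so the $f$-divergence remains finite even for $\nu \not\ll \mu$.

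Concretely, I would take $\cX = \{0,1\}$, $\mu = \delta_0$, and $\nu_\alpha = (1-\alpha)\delta_0 + \alpha\delta_1$ for a parameter $\alpha \in [0,1]$ to be chosen. The Lebesgue decomposition of $\nu_\alpha$ with respect to $\mu$ has absolutely continuous part $(1-\alpha)\delta_0$ of density $(1-\alpha)$ and singular part $\alpha\delta_1$. Applying Definition~\ref{def:fdivergence} directly gives
\begin{align}
    \divf{\nu_\alpha}{\mu} = f(1-\alpha) + \alpha \cdot f'(\infty) = f(1-\alpha) + \alpha C.
\end{align}
The map $\alpha \mapsto f(1-\alpha) + \alpha C$ is continuous, equals $f(1) = 0$ at $\alpha = 0$, and equals $f(0) + C$ at $\alpha = 1$. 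By the intermediate value theorem, using the hypothesis $\Delta \leq C + f(0)$, there exists $\alpha^* \in (0,1]$ depending only on $f,C,\Delta$ with $\divf{\nu_{\alpha^*}}{\mu} = \Delta$.

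To finish, observe that any $\nutil$ on $\{0,1\}$ with $\norm{d\nutil/d\mu}_\infty < \infty$ is absolutely continuous with respect to $\mu = \delta_0$, hence $\nutil = \delta_0$ and $\nutil(\{1\}) = 0$. Taking $A = \{1\}$ in the definition of total variation,
\begin{align}
    \tv(\nutil, \nu_{\alpha^*}) \geq |\nutil(\{1\}) - \nu_{\alpha^*}(\{1\})| = \alpha^*,
\end{align}
so the choice $\epsilon = \alpha^*/2 > 0$ depends only on $C,\Delta,f$ and yields the claim.

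There is essentially no obstacle here beyond identifying the right construction: the singular-component contribution $f'(\infty)\,\nu^s(\cX)$ in the extended $f$-divergence is exactly what makes this lower bound trivial once one notices it. The role of the lemma in the bigger picture (proving Proposition~\ref{prop:lowerboundlinearinformal}) is then to combine this with Lemma~\ref{lem:jstarrdboundedbyn}, which forces the law of $X_{\jstar}$ to have bounded Radon--Nikodym derivative with respect to $\mu$.
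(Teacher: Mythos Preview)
Your proof is correct and follows essentially the same approach as the paper: construct $\nu$ with a singular component of mass $\epsilon$ (your $\alpha^*$) with respect to $\mu$, compute $\divf{\nu}{\mu}=f(1-\epsilon)+\epsilon C$ via the convention for singular parts, apply the intermediate value theorem to hit the prescribed $\Delta$, and observe that any $\nutil\ll\mu$ must assign zero mass to the singular support. The only difference is that the paper works on a nonatomic space (fixing $\nu$ nonatomic and choosing a set $A$ with $\nu(A)=\epsilon$), whereas you strip this down to the two-point space $\{0,1\}$ with $\mu=\delta_0$; your version is a clean specialization and arguably simpler, since it avoids the need for nonatomicity altogether.
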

\begin{proof}
    Fix some nonatomic $\nu$ and, for some $\epsilon < 1$ to be determined, fix $A \in \scrF$ such that $\nu(A) = \epsilon$.  Define $\mu$ such that
    \begin{align}
        \frac{d \mu}{d \nu}(X) = \I\left[ Z \not\in A \right] \frac{1}{1 - \epsilon}.
    \end{align}
    Observe that this defines a valid likelihood ratio and note that by definition,
    \begin{align}
        \divf{\nu}{\mu} = f(1 - \epsilon) + \epsilon f'(\infty) = f(1 - \epsilon) + \epsilon C.
    \end{align}
    As $f$ is continuous as $\epsilon \downarrow 0$ and $f(1) = 0$, it holds that  $\divf{\nu}{\mu}$ traverses continuously between $C + f(0)$ and 0 as $\epsilon$ moves between $0$ and $1$.  Thus, the intermediate value theorem tells us that there exists some $\epsilon$ such that $\divf{\nu}{\mu} = \Delta$.  For this $\epsilon$, we note that $\nutil(A) = 0$ for any $\nutil$ that is absolutely continuous with respect to $\mu$.  Thus,
    \begin{align}
        \tv\left( \nutil, \nu \right) \geq \abs{\nutil(A) - \nu(A)} = \epsilon > 0
    \end{align}
    and the result follows.
\end{proof}
We can now state and prove the formal version of Proposition \ref{prop:lowerboundlinearinformal}:
\begin{theorem}\label{thm:lowerboundlinearformal}
    Suppose that $f$ is a convex function as in Definition \ref{def:fdivergence} satisfying
    \begin{align}
        \lim_{t \to \infty} f'(t) = C < \infty.
    \end{align}
    For any $0 < \Delta, \leq C + f(0)$, there exists some $\epsilon > 0$ depending only on $C, \Delta,$ and $f$ such that there exist distributions $\mu, \nu$ with $\divf{\nu}{\mu} = \Delta$ satisfying the following property: if $X_1, \dots, X_n \sim \mu$ are independent, then
    \begin{align}
        \inf_{\jstar} \tv\left( P_{X_{\jstar}}, \nu \right) \geq \epsilon
    \end{align}
    where the infimum is over all selection rules $\jstar$.
\end{theorem}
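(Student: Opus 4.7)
The plan is to combine the two preceding lemmas, which essentially do all the work. The key observation is that Lemma \ref{lem:jstarrdboundedbyn} reduces any lower bound on $\tv(P_{X_{\jstar}}, \nu)$ over selection rules to the apparently easier question of lower bounding $\tv(\nutil, \nu)$ over measures $\nutil$ whose Radon--Nikodym derivative with respect to $\mu$ is merely finite (in fact bounded by $n$). Lemma \ref{lem:linearf} then exhibits a pair $(\mu, \nu)$ for which this latter infimum is already bounded below by some positive constant $\epsilon = \epsilon(C, \Delta, f)$, independent of $n$ entirely.

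First I would invoke Lemma \ref{lem:linearf} with the given $\Delta$ to obtain distributions $\mu, \nu$ on some measurable space $(\cX, \scrF)$, a measurable set $A \in \scrF$, and a constant $\epsilon > 0$ such that $\divf{\nu}{\mu} = \Delta$ and $\mu(A) = 0$ while $\nu(A) = \epsilon$. This is exactly what the construction in that lemma produces: take any nonatomic $\nu$, pick $A$ with $\nu(A) = \epsilon$ (with $\epsilon$ chosen via the intermediate value theorem so that $f(1-\epsilon) + \epsilon C = \Delta$), and define $\mu$ by $d\mu/d\nu = \I[\,\cdot \notin A]/(1-\epsilon)$, which forces $\mu(A) = 0$.

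Next, for any $n \geq 1$ and any selection rule $\jstar$ based on $X_1, \dots, X_n \sim \mu$ i.i.d., let $\nutil = P_{X_{\jstar}}$. By Lemma \ref{lem:jstarrdboundedbyn}, $\nutil \ll \mu$ with $\norm{d\nutil/d\mu}_{L^\infty(\mu)} \leq n$. In particular, since $\mu(A) = 0$, we have $\nutil(A) = 0$ as well. Consequently
\begin{align}
    \tv(\nutil, \nu) \;\geq\; \abs{\nutil(A) - \nu(A)} \;=\; \nu(A) \;=\; \epsilon,
\end{align}
and this bound is uniform in the selection rule $\jstar$ and in the sample size $n$. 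Taking the infimum over $\jstar$ yields the claim.

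I do not expect any real obstacle here: all the work is absorbed into Lemma \ref{lem:linearf} (which handles the existence of an adversarial pair with the required singularity of $\nu$ on a set where $\mu$ vanishes while keeping $\divf{\nu}{\mu} = \Delta$ on the nose) and Lemma \ref{lem:jstarrdboundedbyn} (which guarantees $\nutil \ll \mu$ for every selection rule, so that singularity of $\nu$ relative to $\mu$ propagates to singularity relative to $\nutil$ on the set $A$). The only minor care needed is in verifying the hypothesis range $0 < \Delta \leq C + f(0)$ under which the intermediate value argument inside Lemma \ref{lem:linearf} yields an admissible $\epsilon \in (0,1)$; this is built into the statement of that lemma, so no additional work is required.
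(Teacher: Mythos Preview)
Your proposal is correct and takes essentially the same approach as the paper: the paper's proof is literally a two-line invocation of Lemmas \ref{lem:jstarrdboundedbyn} and \ref{lem:linearf}, and you have reproduced that combination (with some additional unpacking of the construction inside Lemma \ref{lem:linearf}).
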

\begin{proof}
    By Lemmas \ref{lem:jstarrdboundedbyn} and \ref{lem:linearf}, it holds that
    \begin{align}
        \inf_{\jstar} \tv\left( P_{X_{\jstar}}, \nu \right) \geq \inf_{\nutil \text{ such that } \norm{\frac{d \nutil}{d\nu}} < \infty} \tv\left( \nutil, \nu \right) \geq \epsilon.
    \end{align}
    The result follows.
\end{proof}
Note that \Cref{thm:lowerboundlinearformal} implies that if $\divf{\cdot}{\cdot}$ is too coarse a notion of similarity then approximate sampling from $\nu$ using $\mu$ can be impossible.

We turn now to the more complicated case, that of superlinear $f$.  We begin by showing that the $\cE_\gamma$-divergence, defined in Example \ref{ex:Egamma}, provides a lower bound on the total variation between the law of a selection rule and the target measure:
\begin{lemma}\label{lem:egammalowerbound}
    Let $\mu, \nu$ be measures and for some $\gamma \geq 1$, let $\cE_\gamma$ denote the divergence given in Example \ref{ex:Egamma}.  Then it holds that
    \begin{align}
        \inf_{\norm{\frac{d \nutil}{d \mu}}_{L^\infty(\mu)} \leq \gamma} \tv\left( \nutil, \nu \right) \geq \cE_\gamma(\nu || \mu ).
    \end{align}
\end{lemma}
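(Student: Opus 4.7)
The plan is to exploit the variational characterization of total variation distance together with a cleverly chosen test set. Specifically, for any two probability measures $P, Q$ and any measurable set $A$, we have $\tv(P, Q) \geq P(A) - Q(A)$. We will use this inequality with the set $A = \{d\nu/d\mu > \gamma\}$, which is the natural set where the bound $\|d\tilde\nu/d\mu\|_\infty \leq \gamma$ forces $\tilde\nu$ to be ``too small'' relative to $\nu$.

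More concretely, let $\tilde\nu$ be any measure satisfying $\|d\tilde\nu/d\mu\|_\infty \leq \gamma$; in particular $\tilde\nu \ll \mu$. I would first observe that
\begin{align}
\nu(A) = \int_A \frac{d\nu}{d\mu} \, d\mu, \qquad \tilde\nu(A) = \int_A \frac{d\tilde\nu}{d\mu} \, d\mu \leq \gamma \mu(A).
\end{align}
(We should also add to $\nu(A)$ any mass $\nu$ places on the set where $d\nu/d\mu = \infty$, which only helps; for notational simplicity one may assume $\nu \ll \mu$, and then handle the singular part at the end by noting that $\tilde\nu$ assigns zero mass there while $\nu$'s mass there contributes $f'(\infty) \cdot \mu(d\nu/d\mu = \infty)$ automatically into $\cE_\gamma$ via the convention in Definition~\ref{def:fdivergence}.)

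Subtracting, the key computation is
\begin{align}
\nu(A) - \tilde\nu(A) \geq \int_A \frac{d\nu}{d\mu} \, d\mu - \gamma \mu(A) = \int_A \left(\frac{d\nu}{d\mu} - \gamma\right) d\mu = \int_\cX \left(\frac{d\nu}{d\mu} - \gamma\right)_+ d\mu = \cE_\gamma(\nu \| \mu),
\end{align}
where the second-to-last equality uses that the integrand is nonnegative precisely on $A$. Combined with $\tv(\tilde\nu, \nu) \geq \nu(A) - \tilde\nu(A)$, taking the infimum over all admissible $\tilde\nu$ yields the claim.

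There is essentially no hard step here: the proof is a one-line calculation once the right set $A$ is chosen. The only mild subtlety is bookkeeping around a potentially singular part of $\nu$ with respect to $\mu$, but this is harmless since on the singular part $\tilde\nu$ vanishes (as $\tilde\nu \ll \mu$), and the contribution is already accounted for in the $f'(\infty)$ term of the $\cE_\gamma$ divergence via $f_\gamma(x) = (x-\gamma)_+$ having $f_\gamma'(\infty) = 1$. So all that remains is to verify that the variational bound $\tv \geq \nu(A) - \tilde\nu(A)$ is tight enough, which is immediate from the definition of total variation as the supremum of $|\nu(A) - \tilde\nu(A)|$ over measurable $A$.
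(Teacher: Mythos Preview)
Your proof is correct and is essentially identical to the paper's: both choose the test set $A = \{d\nu/d\mu > \gamma\}$, use $\tv(\tilde\nu,\nu) \geq \nu(A) - \tilde\nu(A)$, and bound $\tilde\nu(A) \leq \gamma\mu(A)$ to obtain $\cE_\gamma(\nu\|\mu)$. Your extra care about the singular part of $\nu$ is a nice touch that the paper omits.
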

\begin{proof}
    Observe that for any $\nutil$ satisfying $\norm{\frac{d \nutil}{d\nu}}_\infty \leq \gamma$, it holds that
    \begin{align}
        \tv\left( \nutil, \nu \right) &= \sup_{A \in \scrF} \abs{\nu(A) - \nutil(A)} \\
        &= \sup_{A \in \scrF} \abs{\ee\left[ \I[X \in A] \left( \frac{d \nu}{d \mu}(X) - \frac{d \nutil}{d\mu}(X) \right) \right]} \\
        &\geq \ee\left[ \I\left[ \frac{d \nu}{d\mu}(X) > \gamma \right] \left( \frac{d \nu}{d\mu}(X) - \gamma \right) \right] \\
        &= \ee\left[ \left( \frac{d \nu}{d\mu}(X) - \gamma \right)_+ \right] \\
        &= \cE_\gamma\left( \nu || \mu \right).
    \end{align}
\end{proof}
We are now prepared to prove Theorem \ref{thm:lowerbound}:
\begin{proof}[Proof of Theorem \ref{thm:lowerbound}]
    We observe that by comining Lemmas \ref{lem:jstarrdboundedbyn} and \ref{lem:egammalowerbound}, it suffices to exhibit two measures $\mu, \nu$ such that $\cE_n(\nu || \mu) > \epsilon$ and $\divf{\nu}{\mu}$ is bounded.  We let $\mu = \ber(q)$ and $\nu = \ber(p)$, for
    \begin{align}
        q = \frac \epsilon n, && p = 2 \epsilon.
    \end{align}
    We observe that
    \begin{align}
        \frac{1 - p}{1 - q} = \frac{1 - 2 \epsilon}{1 - \frac\epsilon n} \in \left[ \frac 12, 1 \right]
    \end{align}
    by the assumption that $\epsilon \leq \frac 14$.  Thus, for $n \geq 1$, we see that
    \begin{align}
        \cE_n(\nu || \mu) = q\left( \frac pq - n\right) = \epsilon.
    \end{align}
    We similarly compute that
    \begin{align}
        \divf{\nu}{\mu} = q \cdot f\left( \frac{p}{q} \right) + (1 - q) f\left( \frac{1 - p}{1 - q} \right) \leq \frac{\epsilon}{n} \cdot f(2n) + f\left( \frac 12 \right),
    \end{align}
    where the inequality follows by the convexity of $f$ and the above computation.  We now observe that by convexity of $f$, we have
    \begin{align}
        0 = f(1) \geq f(x) + (1 - x) f'(x)
    \end{align}
    implies that $\frac{f(x)}{x - 1} \leq f'(x)$ for all $x > 1$.  Thus,
    \begin{align}
        \divf{\nu}{\mu} \leq 2 \epsilon f'(2n) + f\left( \frac 12 \right).
    \end{align}
    Note that if
    \begin{align}
        n < \frac 12 (f')^{-1}\left( \frac{\delta}{2 \epsilon} \right),
    \end{align}
    then $\divf{\nu}{ \mu} \leq \delta$ and so we have exhibited a pair $(\nu, \mu)$ as desired.  The result follows.
\end{proof}

We turn now to the proof of \Cref{thm:lowerboundweakf}.  As stated in the main paper, the method is similar up to the point of exhibiting distributions $\mu, \nu$ with large $\cE_n\left( \nu || \mu \right)$.  We have the following result:
\begin{lemma}\label{lem:lowerboundsuperlinear}
    Suppose that $f$ is a convex function as in Definition \ref{def:fdivergence} such that $f(0) < \infty$ and for some $\zeta > 0$, the function
    \begin{align}
        t \mapsto \frac{t f''(t)}{(f'(t))^{1 + \zeta}}
    \end{align}
    is decreasing for sufficiently large $t$ and tends to 0 as $t\uparrow\infty$.  Then for sufficiently large $n$, there exist distributions $\mu, \nu$ such that $\divf{\nu}{\mu} < \infty$ and 
    \begin{align}
        \cE_n(\nu || \mu) \geq \frac 18 \cdot \left( \frac{\zeta \divf{\nu}{\mu}}{f'(n)} \right)^{1 + \zeta}.
    \end{align}
\end{lemma}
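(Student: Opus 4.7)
The plan is to exhibit a single, $n$-independent pair $(\mu, \nu)$ whose likelihood ratio $L = d\nu/d\mu$ has a carefully engineered distribution under $\mu$. Taking $\mu$ to be Lebesgue measure on $[0,1]$ and letting $L$ be the quantile function of a non-negative random variable $Y$ with $\ee Y = 1$ reduces the problem to designing the law of $Y$: we need $\ee f(Y) = \divf{\nu}{\mu} < \infty$ together with the right decay rate for $\ee(Y - n)_+ = \cE_n(\nu || \mu)$.

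The key idea is to engineer the tail $\bar F(t) := \pp(Y > t)$ so that its antiderivative has an explicit closed form. Setting $\bar F(t) = c \cdot f''(t)/f'(t)^{2+\zeta}$ on $[t_0, \infty)$ gives
\[
\cE_n(\nu || \mu) \;=\; \int_n^\infty \bar F(t)\, dt \;=\; \frac{c}{(1+\zeta)\, f'(n)^{1+\zeta}} \qquad \text{for every } n \geq t_0,
\]
which is exactly the rate we want. To complete $\bar F$ into a valid tail of a probability measure, I would extend it to $[0, t_0)$ as the constant $\bar F(t_0)$ and place the remaining mass as an atom at $0$. Validity requires $\bar F$ to be nonincreasing on $(t_0, \infty)$, which is equivalent to $f'''(t) f'(t) \leq (2 + \zeta) f''(t)^2$; differentiating the definition of $h(t) := t f''(t)/f'(t)^{1+\zeta}$ and using $h'(t) \leq 0$ yields $f'''(t) f'(t) \leq (1 + \zeta) f''(t)^2 - f'(t) f''(t)/t \leq (2 + \zeta) f''(t)^2$, where the last step uses $f', f'' \geq 0$ on $[1, \infty)$. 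Enforcing $\int_0^\infty \bar F = 1$ then fixes $c$ as a function of $t_0$, and the hypothesis $h(t) \to 0$ ensures $\bar F(t_0) = c \cdot h(t_0)/(t_0 f'(t_0))$ stays below $1$ for $t_0$ sufficiently large.

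For the divergence itself, the integration-by-parts identity (valid because $f(0) < \infty$ and $f(t) \bar F(t) \to 0$ at infinity)
\[
\divf{\nu}{\mu} \;=\; f(0) + \int_0^\infty f'(t)\, \bar F(t)\, dt
\]
splits into a contribution from $[0, t_0]$, controlled by $c \cdot h(t_0)$ via the convexity bound $f(t_0) \leq t_0 f'(t_0)$, plus the explicit tail integral $c \int_{t_0}^\infty f''(t)/f'(t)^{1+\zeta}\, dt = c/(\zeta f'(t_0)^\zeta)$. Because $h(t_0) \to 0$, the boundary contribution $c \cdot h(t_0)$ is asymptotically negligible compared with $c/(\zeta f'(t_0)^\zeta)$, so $\divf{\nu}{\mu} \sim c/(\zeta f'(t_0)^\zeta)$ in the limit of large $t_0$. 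Substituting this estimate back into the explicit formula for $\cE_n$ yields the asymptotic identity $\cE_n / (\zeta \divf{\nu}{\mu}/f'(n))^{1+\zeta} \to 1/(1+\zeta)^{1+\zeta}$, which in the regime where $(1+\zeta)^{1+\zeta} \leq 8$ immediately gives the claimed lower bound.

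The principal obstacle is the tightly coupled nature of the construction: the normalization constant $c$, the divergence $\divf{\nu}{\mu}$, and the threshold $t_0$ all depend on one another, so isolating a universal constant like $1/8$ requires a careful asymptotic expansion in $t_0$ to push the ratio $\cE_n / (\zeta\divf{\nu}{\mu}/f'(n))^{1+\zeta}$ past the stated threshold. The assumption $h(t) \to 0$ (rather than merely $h$ bounded) is essential precisely to force the boundary term $c \cdot h(t_0)$ to be asymptotically negligible relative to $c/(\zeta f'(t_0)^\zeta)$; this is the sole place beyond the monotonicity verification where the vanishing of $h$ is invoked, and it is what ultimately lets the bound hold for a single fixed pair $(\mu, \nu)$ uniformly in all sufficiently large $n$.
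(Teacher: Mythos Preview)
Your construction is the same as the paper's: an atom at zero together with the tail $\bar F(t)=c\,f''(t)/f'(t)^{2+\zeta}$ on $[t_0,\infty)$; your monotonicity verification and the identity $\cE_n(\nu\|\mu)=\int_n^\infty \bar F(t)\,dt$ are in fact cleaner than the corresponding steps in the paper.

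There is, however, one genuine gap. You assert that because $h(t_0)\to 0$ the boundary contribution $c\,h(t_0)$ to the divergence is asymptotically negligible compared with the tail term $c/(\zeta f'(t_0)^\zeta)$. But the ratio of those two quantities equals
\[
\zeta\,h(t_0)\,f'(t_0)^{\zeta}\;=\;\zeta\,\frac{t_0\,f''(t_0)}{f'(t_0)},
\]
and nothing in the hypotheses forces this to vanish: for the R\'enyi case $f(t)=t^{\lambda}-\lambda t+\lambda-1$ it converges to the nonzero constant $\zeta(\lambda-1)$. Hence your asymptotic $\divf{\nu}{\mu}\sim c/(\zeta f'(t_0)^{\zeta})$ and the claimed limit ratio $1/(1+\zeta)^{1+\zeta}$ are not justified by $h\to 0$ alone. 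The paper avoids any asymptotic claim here; it simply keeps an explicit upper bound $\divf{\nu}{\mu}\le 2(1+\zeta)\delta/\zeta$ (with $\delta=f'(t_0)$) and carries the resulting constant through the comparison with $\cE_n$. Retaining the boundary term inside your upper bound, rather than discarding it, repairs the argument along the same lines---and then, as you already anticipate with the caveat $(1+\zeta)^{1+\zeta}\le 8$, the achievable constant deteriorates as $\zeta$ grows.
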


\begin{proof}
    We first provide a rough intuition for the result. Ideally, we want to construct a pair of
    distributions such that $D_f(\nu\|\mu)<\infty$ and for all $\gamma \ge \gamma_0$ we have
    $$ \cE_\gamma(\nu\|\mu) = \frac C{f'(\gamma)}\,.$$
    Note that \citet[(2.144)]{YP10} shows that $\frac d{d\gamma}\cE_\gamma(\nu\|\mu) = -\pp\left(\frac{d\nu}{d\mu}(X) \geq \gamma\right)$, where $X\sim \mu$. Thus, we see that our goal is to construct a pair of distributions such that
    \begin{align}
        \pp\left(\frac{d\nu}{d\mu}(X) \geq \gamma\right) = C \cdot\frac{ f''(\gamma)}{(f'(\gamma))^2}.
    \end{align}
    Though by a simple change of variable we see that the expression on the right-hand side is
    integrable at $\infty$, it does not have to be monotone (hence the extra assumption on $f$).
    Even if it is monotone, however, it can be seen that $D_f(\nu\|\mu)$ is not finite. Thus, we
    slightly tweak this construction below.  

    Fix $\delta, \beta, \zeta > 0$ with $\delta < n$ and let
    \begin{align}
        F(t) = \begin{cases}
            0 & t < 0 \\
            1 - \frac{\beta f''\left( (f')^{-1}(\delta) \right)}{\delta^{2 + \zeta}} & 0 \leq t < (f')^{-1}(\delta) \\
            1 -  \frac{ \beta f''(t)}{(f'(t))^{2 + \zeta}} & t \geq (f')^{-1}(\delta)
        \end{cases}.
    \end{align}
    We claim that $F$ is a valid cumulative distribution function for properly chosen $\beta, \delta, \zeta> 0$.  To begin with, we note that $F$ is right continuous by construction.  It is similarly clear that $F(t) \downarrow 0$ as $t \downarrow 0$.  If $\delta$ is sufficiently large such that
    \begin{align}
        t \mapsto \frac{t f''(t)}{f'(t)^{2 + \zeta}}
    \end{align}
    is decreasing for all $t > \delta$ (such a $\delta$ always exists by the assumption in the
    statement), we see that $F(t)$ is nondecreasing.  Finally, to see that $F(t) \uparrow 1$ as $t
    \uparrow \infty$, note that
    \begin{align}
        \int_{(f')^{-1}(\delta)}^\infty \frac{\beta f''(t)}{f'(t)^{2+\zeta}} d t &= \lim_{N \uparrow \infty} \left(\frac{\beta}{(1 + \zeta) \delta^{1 + \zeta}} - \frac{\beta}{(1 + \zeta) (f')^{-1}(N)^{1 + \zeta}} \right) \\
        &= \frac{\beta}{(1 + \zeta)\delta^{1 + \zeta}}
    \end{align}
    by the assumption that $f'(t) \uparrow \infty$ as $t\uparrow \infty$.  In particular, it holds that
    \begin{align}
        \lim_{t \uparrow \infty} \frac{\beta f''(t)}{f'(t)^{2+\zeta}} = 0
    \end{align}
    and so $F(t)$ is a cumulative distribution function.  Note further that if a random variable on the nonnegative real line $Z$ is distributed according to $F$, then by Fubini's theorem,
    \begin{align}
        \ee\left[ Z \right] &= \int_{(f')^{-1}(\delta)}^\infty \beta  \frac{ f''(t)}{(f'(t))^{2 + \zeta}} d t  \\
        &= \frac{\beta}{(1 + \zeta)\delta^{1 + \zeta}}.
    \end{align}
    Thus if $\beta = (1 + \zeta)\delta^{1+\zeta}$ then $\ee[Z] = 1$.  Thus with this choice of $\beta$, we let $\mu$ be nonatomic on some set $\cX$ and let $\frac{d\nu}{d\mu}(X)$ be distributed according to $F$, where $X \sim \mu$.  We first compute the $f$-divergence between $\nu$ and $\mu$ using Fubini's theorem:
    \begin{align}
        \divf{\nu}{\mu} &= \ee\left[ f\left( \frac{d \nu}{d\mu}(X) \right) \right] \\
        &= \ee\left[ f(Z) \right] \\
        &= f(0) \pp\left( Z = 0 \right) + \int_{(f')^{-1}(\delta)}^\infty f'(t) \pp\left( Z > t \right) d t \\
        &=  f(0) \left( 1 - \frac{\beta f''\left( (f')^{-1}(\delta) \right)}{\delta^{2+\zeta}} \right) + \int_{(f')^{-1}(\delta)}^\infty \frac{\beta f''(t)}{f'(t)^{1+\zeta}} d t \\
        &=f(0) \left( 1 - \frac{\beta f''\left( (f')^{-1}(\delta) \right)}{\delta^{2+\zeta}} \right) + \frac{\beta}{\zeta \delta^\zeta} \\
        &\leq f(0) + \frac{1 + \zeta}{\zeta} \cdot \delta,
    \end{align}
    where we used the fact that the second derivative of a convex function is nonnegative and out computation of $\beta = (1 + \zeta) \delta^{1 + \zeta}$ above.  If we take $\delta, \zeta$ such that $f(0) \leq \frac{1 + \zeta}{\zeta} \cdot \delta$, then we have
    \begin{align}
        \divf{\nu}{\mu} \leq 2\frac{1 + \zeta}{\zeta} \delta.
    \end{align}
    Again by Fubini's theorem, using the fact that $n > \delta$, we see that
    \begin{align}
        \ee\left[ \frac{d\nu}{d\mu} \I\left[ \frac{d\nu}{d\mu} > n \right] \right] &= \int_n^\infty \frac{\beta f''(t)}{(f'(t))^{2+\zeta}} d t \\
        &= \frac{\beta}{(1 + \zeta) f'(n)^{1 + \zeta}}.
    \end{align}
    Finally, note that
    \begin{align}
        n \pp\left( \frac{d\nu}{d\mu} > n \right) = \frac{n \beta f''(n)}{f'(n)^{2 + \zeta}}.
    \end{align}
    Putting everything together, we see that
    \begin{align}
        \cE_n\left( \nu || \mu \right) &\geq \frac{\beta}{(1 + \zeta) f'(n)^{1 + \zeta}} - \frac{n \beta f''(n)}{f'(n)^{2 + \zeta}} \\
        &= \frac{\delta^{1 + \zeta}}{f'(n)^{1 + \zeta}} - \frac{n (1 + \zeta) \delta^{1 + \zeta} f''(n)}{f'(n)^{2 + \zeta}} \\
        &= \left( \frac
        \delta{f'(n)} \right)^{1+ \zeta} \cdot \left( 1 - \frac{n (1 + \zeta) f''(n)}{f'(n)} \right) \\
        &\geq \left(\frac{\zeta}{2(1+\zeta)} \cdot \frac{\divf{\nu}{\mu}}{f'(n)} \right)^{1 + \zeta} \cdot \left( 1 - \frac{n (1 + \zeta) f''(n)}{f'(n)} \right).
    \end{align}
    By the assumption in the statement, for sufficiently large $n$, we have
    \begin{align}
        \frac{n f''(n)}{f'(n)^{1 + \zeta}} < \frac{1}{4}
    \end{align}
    and thus the result holds.
\end{proof}
We remark that the requirement that $f(0) < \infty$ is relatively weak, but does not hold for some important $f$-divergences.  This requirement, however, could be easily removed by placing the atom at $\frac 12$ instead of at 0 in the above proof; for the sake of simplicity we do not expand on this here, as it leads to a more intricate proof with little additional clarity.  We now state and prove a formal version of \Cref{thm:lowerboundweakf}:
\begin{theorem}\label{thm:lowerboundsuperlinearformal}
    Let $f$ be a convex function as in Definition \ref{def:fdivergence} with $f(0) < \infty$ and let $\zeta > 0$ be arbitrary.  Suppose that there is some $t_0 > 0$ such that the function
    \begin{align}\label{eq:growthcondition}
        t \mapsto \frac{t f''(t)}{(f'(t))^{1+\zeta}}
    \end{align}
    is non-increasing for all $t > t_0$ and
    \begin{align}
        \lim_{t \uparrow \infty} \frac{t f''(t)}{(f'(t))^{1 + \zeta}} = 0.
    \end{align}
    Then there exist distributions $\mu, \nu$ with $\divf{\nu}{\mu} < \infty$ such that if $X_1, \dots, X_n \sim \mu$ are independent then
    \begin{align}
        \inf_{\jstar} \tv\left( P_{X_{\jstar}}, \nu \right) \geq \frac 18 \left( \frac{\zeta \divf{\nu}{\mu}}{f'(n)} \right)^{1 + \zeta}
    \end{align}
    where the infimum is over all selection rules.
\end{theorem}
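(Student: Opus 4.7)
The plan is to assemble the three preceding lemmas into the desired bound. First, by \Cref{lem:jstarrdboundedbyn}, for any selection rule $\jstar$ acting on $X_1,\dots,X_n \stackrel{iid}{\sim}\mu$, the induced law $\nutil := P_{X_{\jstar}}$ satisfies $\norm{d\nutil/d\mu}_{\infty} \le n$. Applying \Cref{lem:egammalowerbound} with $\gamma = n$ therefore gives
\begin{align*}
    \tv(P_{X_{\jstar}}, \nu) \,\geq\, \cE_n(\nu \,||\, \mu),
\end{align*}
uniformly over all selection rules $\jstar$. This reduces the theorem to exhibiting a \emph{single} pair $(\mu,\nu)$, independent of $n$, with $\divf{\nu}{\mu} < \infty$ for which $\cE_n(\nu\,||\,\mu)$ obeys the claimed lower bound for all sufficiently large $n$.

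Such a pair is exactly what \Cref{lem:lowerboundsuperlinear} produces: its hypothesis that $t \mapsto t f''(t)/f'(t)^{1+\zeta}$ is eventually non-increasing and vanishes at infinity coincides with the growth condition assumed in the theorem. Crucially, inspecting its proof, the constructed pair $(\mu,\nu)$ depends only on $f$, $\zeta$, and a fixed scalar $\delta$, and \emph{not} on $n$: the distribution of $d\nu/d\mu$ is specified by a fixed CDF $F$, with $n$ entering only through the tail integral $\int_n^\infty \beta f''(t)/f'(t)^{2+\zeta}\,dt$ used to lower bound $\cE_n$. Choosing $\delta$ once, large enough that the monotonicity of $tf''/f'^{1+\zeta}$ holds on $(\delta,\infty)$ and that $f(0) \le \tfrac{1+\zeta}{\zeta}\delta$, and then setting $\beta = (1+\zeta)\delta^{1+\zeta}$, yields a fixed pair $(\mu,\nu)$ with $\divf{\nu}{\mu} \le 2\tfrac{1+\zeta}{\zeta}\delta$ and
\begin{align*}
    \cE_n(\nu\,||\,\mu) \,\geq\, \frac{1}{8}\left(\frac{\zeta\,\divf{\nu}{\mu}}{f'(n)}\right)^{1+\zeta}
\end{align*}
for every $n$ with $n > \delta$ and $(1+\zeta)\,n f''(n)/f'(n)^{1+\zeta} \le 1/2$; the second condition is automatic for large $n$ by the vanishing hypothesis.

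Chaining the two displayed inequalities and taking the infimum over $\jstar$ completes the proof. The entire technical weight sits in \Cref{lem:lowerboundsuperlinear}; the hard part will be (and already is) engineering a heavy-tailed likelihood ratio whose tail probability decays like $f''(n)/f'(n)^{2+\zeta}$ so that $\cE_n(\nu\,||\,\mu)$ decays precisely like $1/f'(n)^{1+\zeta}$, while simultaneously keeping $\divf{\nu}{\mu}$ finite. This finiteness requirement is exactly what forces the extra $\zeta$-slack in the exponent, distinguishing this uniform-in-$n$ bound from the sharper $n$-dependent construction of \Cref{thm:lowerbound}. Granting the lemma, the theorem itself reduces to a three-line assembly.
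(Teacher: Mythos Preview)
Your proposal is correct and follows exactly the paper's own proof, which is the one-line assembly of Lemmas~\ref{lem:jstarrdboundedbyn}, \ref{lem:egammalowerbound}, and \ref{lem:lowerboundsuperlinear}. The additional commentary you provide on why the pair $(\mu,\nu)$ from Lemma~\ref{lem:lowerboundsuperlinear} is independent of $n$ and on the role of the $\zeta$-slack is accurate and helpful, but the logical structure is identical to the paper's.
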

\begin{proof}
    The result follows by combining Lemmas \ref{lem:jstarrdboundedbyn}
    ,\ref{lem:egammalowerbound}, and \ref{lem:lowerboundsuperlinear}.
\end{proof}
We observe that for essentially all common suplerlinear $f$-divergences, such as KL-divergence and Renyi divergences, the assumptions on the function defined in \eqref{eq:growthcondition} hold.

\section{Proofs from Section \ref{sec:smoothedonline}}\label{app:smoothedonline}
\subsection{Proof of Proposition \ref{prop:tvlowerbound}}

Let $\mu$ denote the uniform measure on the unit interval and for each $t$, let $p_t = (1 - \delta) \mu + \delta q_t$ for some $q_t$ to be defined.  Suppose that $f'(\infty) = C < \infty$.  Note that independent of $q_t$, it holds that 
\begin{align}
    \divf{\nu}{\mu} \leq \delta f'(\infty) = \delta C
\end{align}
Thus if the adversary samples $x_t$ from $p_t$, and $\delta \leq \sigma / C$, the adversary is $(f, \sigma)$-smooth.  Now, define $\bar{x}_t$ for $1 \leq t < \infty$ as follows.  Let $\bar{x}_1 = \frac 12$ and let $\epsilon_1, \epsilon_2, \dots$ be independent Rademacher random variables.  Let
\begin{align}
    \bar{x}_t = \frac 12 + \sum_{s = 1}^{t-1} \epsilon_s 2^{-s-1}
\end{align}
and note that $\bar{x}_t \in [0,1]$ for all $t$ almost surely.  Furthermore, note that $\bar{x}_t \to \bar{x}_\infty$ almost surely and define $\theta^\ast = \bar{x}_\infty$.  Let $y_t = \I[x_t \geq \theta^\ast]$ for all $t$ and note that this adversary is realizable, i.e., there exists some $f \in \cF$ that attains zero regret.  Let $q_t$ denote an atom at $\bar{x}_t$ and suppose that the adversary plays $x_t \sim p_t$.  As mentioned above, this adversary is $f$-smooth.  Note that whenever $x_t = \bar{x}_t$, it holds that $y_t = \frac{1 - 2 \epsilon_t}{2}$.  By independence of $\epsilon_t$, then, it holds for any $T$ that
\begin{align}
    \ee\left[ \sum_{t = 1}^T \I[y_t \neq \yhat_t] \right] &\geq  \sum_{t = 1}^T \pp\left( x_t = \bar{x}_t \right) \pp\left( \yhat_t \neq y_t | x_t = \bar{x}_t \right) \\
    &= \sum_{t = 1}^T \delta \cdot \frac 12 \\
    &= \frac{\delta T}{2}.
\end{align}
The result follows.
\subsection{Proof of Lemma \ref{lem:coupling}}
We proceed as in \citet{haghtalab2022smoothed,block2022smoothed}, but apply our \Cref{thm:upperbound} instead of the standard rejection sampling bound.  We begin by sampling $Z_{t,j}$ independently for all $1 \leq t \leq T$ and $1 \leq j \leq n$.  Applying \Cref{thm:upperbound} on the distribution of $x_t$ conditioned on the history and then using Definition \ref{def:weaksmooth} to bound $\divf{P_{x_t}}{\mu}$ concludes the proof.

\subsection{Minimax Regret for $f$-Smoothed Online Learning}
In this section we prove a generalization of \Cref{thm:minimax} to arbitrary function classes.  We follow the proof technique of \citet{block2022smoothed} with the exception of using our new rejection sampling coupling from Lemma \ref{lem:coupling}.  We have the following result:
\begin{theorem}\label{thm:minimaxgeneral}
    Let $\cF: \cX \to [-1,1]$ be a real-valued function class and let $\vc{\cF, \alpha}$ denote its scale-sensitive VC dimension.  Suppose that $\ell: [-1,1] \times [-1,1] \to [0,1]$ is a loss function that is Lipschitz in the first argument.  Further, let $f$ be a convex function as in Definition \ref{def:fdivergence} such that $f'(\infty) = \infty$.  If an adversary is $(f, \sigma)$-smooth in the sense of \Cref{sec:smoothedonline}, then there are universal constants $c, C > 0$ such that there exists an algorithm with $\ee\left[ \reg_T \right]$ bounded above by the following expression:
    \begin{align}
        C\cdot \inf_{\beta, \gamma, \epsilon > 0} (\epsilon + \gamma) T + \sqrt{T \log^{1 + \beta}\left( T(f')^{-1}\left( \frac{1}{\sigma \epsilon} \right) \right)} \int_\gamma^1 \sqrt{\vc{\cF, c \beta \delta} \cdot \log^{1 + \beta}\left( \frac{1}{\vc{\cF, c \delta} \delta} \right)} d \delta.
    \end{align}
\end{theorem}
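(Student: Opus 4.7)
The plan is to follow the coupling-then-finite-domain strategy of \citet{block2022smoothed}, replacing their strong-smoothness rejection sampler with the modified sampler underlying Lemma \ref{lem:coupling}. First, by Theorem \ref{thm:seqradcomplexity}, it suffices to upper bound $\radseq_T(\cF, \scrD_{f,\sigma})$, where $\scrD_{f,\sigma}$ denotes the class of joint distributions $P_{x_{1:T}}$ induced by $(f,\sigma)$-smooth adversaries. So the first move is to reduce regret to this complexity quantity and forget about the learning algorithm.

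Second, I fix parameters $\epsilon, \delta > 0$ (to be optimized) and apply Lemma \ref{lem:coupling} with $n = \frac{1}{1-\epsilon}\log(T/\delta)\,(f')^{-1}(1/(\epsilon\sigma))$. This gives a coupling between $(x_1,\dots,x_T)$ and an array $\{Z_{t,j}\}_{t \le T, j \le n}$ of i.i.d.\ $\mu$-samples, together with selection rules $\jstar_t$, such that on a ``good'' event $E$ of probability at least $1-\delta$ we have $\tv(P_{x_t}, P_{Z_{t,\jstar_t}}) \le \epsilon$ for every $t$. The plan is then to argue that, up to an additive $O((\epsilon + \delta)T)$ loss, the sequential Rademacher complexity against a smooth adversary is bounded by the sequential Rademacher complexity of $\cF$ restricted to the finite (random) set $\cX_0 = \{Z_{t,j}\}$, which has cardinality at most $nT$. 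Concretely, I would construct a pair of trees $(\bx, \bx')$ for the smooth distribution via the coupled $Z_{t,\jstar_t}$'s; the TV slack $\epsilon$ per round lets us swap each $\bx_t(\epsilon)$ for a vertex drawn from $\cX_0$, paying $\epsilon$ in expectation per round (since $\cF$ is uniformly bounded by 1). Failure of $E$ contributes an additional $\delta T$.

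Third, on the event $E$ the complexity to bound is the sequential Rademacher complexity of $\cF$ on a domain of size $|\cX_0| \le nT$. Here I apply Lemma \ref{lem:minimaxregretfinitedomain} with $|\cX| = nT$ to get
\[
\radseq_T(\cF|_{\cX_0}) \lesssim \inf_{\beta,\gamma > 0} \gamma T + \sqrt{T\log^{1+\beta}(nT)} \int_\gamma^1 \sqrt{\vc{\cF, c\beta\delta}\cdot\log^{1+\beta}\!\left(\tfrac{1}{\vc{\cF,c\delta}\delta}\right)}\, d\delta.
\]
Substituting $n \lesssim \log(T/\delta)(f')^{-1}(1/(\epsilon\sigma))$, the $\log(nT)$ factor becomes $\log(T(f')^{-1}(1/(\epsilon\sigma)))$ up to an absorbed polylogarithmic factor in $T$, matching the statement. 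Adding the slack $(\epsilon + \delta) T$ from the coupling step, optimizing over $\epsilon, \delta, \beta, \gamma$, and folding $\delta$ into $\epsilon$ yields the claimed bound.

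The main obstacle is the bookkeeping in step two: making the coupling argument rigorous at the level of sequential Rademacher trees, not just sequences. One must verify that the two-tree construction used to define $\radseq_T$ is compatible with couplings that are $\epsilon$-close in total variation at each node; the cleanest way is to invoke the standard identity (as in \citet{block2022smoothed}) that writes $\radseq_T$ as the value of a minimax game and then use the TV bound together with a Lipschitz/boundedness property of the associated value function to pay $\epsilon$ per round. Everything else — the finite-domain complexity bound and the translation into VC dimension — is off-the-shelf from Appendix \ref{app:seqcomplexities}.
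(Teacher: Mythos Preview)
Your proposal is correct and follows essentially the same route as the paper: reduce to $\radseq_T(\cF,\scrD)$ via \Cref{thm:seqradcomplexity}, invoke the coupling of Lemma~\ref{lem:coupling}, split on the good event, pay $(\epsilon+\delta)T$ for the slack and failure, and bound the remaining term via Lemma~\ref{lem:minimaxregretfinitedomain} on the finite set $\{Z_{t,j}\}$ of size $nT$. The only cosmetic difference is that the paper fixes $\delta = 1/T$ at the end rather than optimizing over it, and carries out the swap $\bx_t(\epsilon) \to Z_{t,\jstar_t}$ directly inside the tree expectation using $\sup_g\sum_t \leq \sup_g\sum_t(\cdot - \cdot) + \sup_g\sum_t(\cdot)$ rather than appealing to a minimax-value formulation.
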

\begin{proof}
    Applying \Cref{thm:seqradcomplexity}, we see that it is enough to control $\radseq_T(\cF, \scrD)$, where $\scrD$ is the class of $(f, \sigma)$-smooth adversaries.  Fix some $\frac 12 \geq \alpha, \delta > 0$ and let $\Pi$ denote a coupling between $x_1, \dots, x_T$ and $\left\{ Z_{t,j}| 1 \leq t \leq T, 1 \leq j \leq n \right\}$ guaranteed by Lemma \ref{lem:coupling} such that
    \begin{align}
        n \geq 2 \log\left( \frac T\delta \right) (f')^{-1}\left( \frac{1}{\alpha \sigma} \right),
    \end{align}
    the $Z_{t,j}\sim \mu$ are independent, and with probability at least $1 - \delta$, there are selection rules $\jstar_t$ such that $\tv\left( P_{x_t},  P_{Z_{t,\jstar_t}}\right) \leq \alpha$.  Denote by $\cE$ the event that these selection rules exist and note that $\pp(\cE) \geq 1 - \delta$.  We now fix some $P_{x_{1:T}}$ and compute:
    \begin{align}
        \radseq_T\left( \cF, P_{x_{1:T}} \right) &= \ee_{\substack{\rho_{P_{x_{2:T}}} \\ \epsilon}}\left[ \sup_{g \in \cF} \sum_{t = 1}^T \epsilon_t g(\bx_t(\epsilon)) \right] \\
        &= \ee_{\Pi, \epsilon}\left[ \sup_{g \in \cF} \sum_{t = 1}^T \epsilon_t g(\bx_t(\epsilon)) \right] \\
        &= \ee_{\Pi, \epsilon}\left[\I[\cE] \sup_{g \in \cF} \sum_{t = 1}^T \epsilon_t g(\bx_t(\epsilon)) \right] + \ee_{\Pi, \epsilon}\left[\I\left[ \cE^c \right] \sup_{g \in \cF} \sum_{t = 1}^T \epsilon_t g(\bx_t(\epsilon)) \right].
    \end{align}
    For the second term, note that almost surely,
    \begin{align}
        \sup_{g \in \cF} \sum_{t = 1}^T \epsilon_t g(\bx_t(\epsilon)) \leq T
    \end{align}
    and thus
    \begin{align}\label{eq:minimaxproof1}
        \ee_{\Pi, \epsilon}\left[\I\left[ \cE^c \right] \sup_{g \in \cF} \sum_{t = 1}^T \epsilon_t g(\bx_t(\epsilon)) \right] \leq \delta T.
    \end{align}
    For the other term, we observe that
    \begin{align}
        \ee_{\Pi, \epsilon}\left[\I[\cE] \sup_{g \in \cF} \sum_{t = 1}^T \epsilon_t g(\bx_t(\epsilon)) \right] &\leq \ee_{\Pi, \epsilon}\left[\I[\cE] \sup_{g \in \cF} \sum_{t = 1}^T \epsilon_t g(\bx_t(\epsilon)) - \epsilon_t g\left( Z_{t,\jstar_t} \right) \right] \\
        &+ \ee_{\Pi, \epsilon}\left[\I[\cE] \sup_{g \in \cF} \sum_{t = 1}^T \epsilon_t g\left( Z_{t,\jstar_t} \right) \right].
    \end{align}
    For thefirst term, we see that
    \begin{align}
        \ee_{\Pi, \epsilon}\left[\I[\cE] \sup_{g \in \cF} \sum_{t = 1}^T \epsilon_t g(\bx_t(\epsilon)) - \epsilon_t g\left( Z_{t,\jstar_t} \right) \right] &\leq \sum_{t =1}^T \ee_{\Pi, \epsilon}\left[ \sup_{g \in \cF} \epsilon_t g(\bx_t(\epsilon)) - \epsilon_t g\left( Z_{t,\jstar_t} \right) \right] \\
        &\leq T \max_{1 \leq t \leq T} \tv\left( P_{\bx_t(\epsilon)}, P_{Z_{t,\jstar_t}} \right) \\
        &\leq \alpha T \label{eq:minimaxproof2}
    \end{align}
    by construction.  For the second term, we use Jensen's inequality and the tower property of conditional expectations to compute:
    \begin{align}
        \ee_{\Pi, \epsilon}\left[\I[\cE] \sup_{g \in \cF} \sum_{t = 1}^T \epsilon_t g\left( Z_{t,\jstar_t} \right) \right] &\leq \ee_{\Pi, \epsilon}\left[ \sup_{g \in \cF} \sum_{t = 1}^T \epsilon_t g\left( Z_{t,\jstar_t} \right) \right] \\
        &= \ee_{Z_{t,j}}\left[\ee_{\epsilon}\left[ \sup_{g \in \cF} \sum_{t = 1}^T \epsilon_t g\left( Z_{t,\jstar_t} \right) | \left\{ Z_{t,j} \right\} \right] \right] \\
        &= \ee_{Z_{t,j}}\left[\ee_{\epsilon}\left[ \sup_{g \in \cF_{\left\{ Z_{t,j} \right\}}} \sum_{t = 1}^T \epsilon_t g\left( Z_{t,\jstar_t} \right) | \left\{ Z_{t,j} \right\} \right] \right].
    \end{align}
    Noting that $\abs{\left\{ Z_{t,j} \right\}} = T n$, we may now apply Lemma \ref{lem:minimaxregretfinitedomain} to conclude that this last display is upper bounded by:
    \begin{align}\label{eq:minimaxproof3}
        \inf_{\substack{\beta, \gamma > 0}} \gamma T + \sqrt{T \cdot \log^{1 + \beta}(nT)} \int_\gamma^1 \sqrt{\vc{\cF, c \beta \delta} \cdot \log^{1+\beta}\left( \frac{1}{\vc{\cF, c \delta} \delta} \right)}d \delta.
    \end{align}
    Setting $\delta = \frac 1T$ and combining \eqref{eq:minimaxproof1}, \eqref{eq:minimaxproof2}, and \eqref{eq:minimaxproof3} concludes the proof.
\end{proof}

\subsection{Oracle-Efficient Algorithms}
In this section, we turn to computationally tractable algorithms.  In particular, we are interested in algorithms that make only polynomially many calls to an Empirical Risk Minimization (ERM) oracle defined below.  Note that ERM oracles are common models of computational access in the online learning community \citep{kalai2005efficient,hazan2016computational,block2022smoothed,haghtalab2022oracle} due both to the fact that they suffice for learning in the statistical setting (where data appear independently) and because there are popular computational heuristics for implementing these oracles in many problems of interest.  We will consider two algorithms: an improper algorithm requiring two oracle calls per round achieving regret that scales with the Rademacher complexity (see \eqref{eq:rademachercomplexity}) and a proper algorithm requiring one oracle call per round.  Both of the algorithms were proposed in \citet{block2022smoothed} and we use a similar analysis to bound their regret, with the modification of replacing the coupling from \citet{block2022smoothed} with our more general version, Lemma \ref{lem:coupling}.  We begin by defining the ERM oracle:
\begin{definition}\label{def:ermoracle}
    We assume that the learner has access to $\ermoracle$, which, given a set of tuples $(x_1, y_1), \dots, (x_m, y_m) \in \cX \times [-1,1]$, a list of weights $w_1, \dots, w_m \in \rr$, and a sequence of $[0,1]$-valued loss functions $\ell_1, \dots, \ell_m$, returns some $\ghat \in \cF$ satisfying
    \begin{align}
        \sum_{i = 1}^m w_i \ell_i\left( \ghat(x_i), y_i \right) \leq \inf_{g \in \cF} \sum_{i = 1}^m w_i \ell_i(f(x_i), y_i).
    \end{align}
\end{definition}
A slightly weaker assumption allows for some approximation, where $\ermoracle$ returns some $\ghat$ that is $\epsilon$-close to the actual minimizer.  For the sake of simplicity, we restrict our focus to exact oracles here, but all of our results apply to the more general setting up to an additive $\epsilon T$ with essentially no modification of the proofs.

\subsubsection{Improper Algorithm through Relaxations}
We now turn to the first oracle-efficient algorithm, motivated by the relaxations framework of \citet{rakhlin2012relax}.  We closely follow the presentation of \citet{block2022smoothed}.  To begin, we define a relaxation:
\begin{definition}\label{def:relaxation}
    For a fixed horizon $T$, function class $\cF$, context space $\cX$, and measure $\mu$, we say that a sequence of relaxations $\relT{t}: \cX^{\times t} \times [-1,1]^{\times t} \to \rr$ is a relaxation if for any sequence $x_1, \dots, x_T$ and for any $1 \leq t \leq T$, the following two properties hold:
    \begin{align}
        - \inf_{g \in \cF} \sum_{t =1}^T \ell(g(x_t), y_t) &\leq \relT{T} \\
        \sup_{p_t} \ee_{x_t \sim p_t} \inf_{q_t \in \Delta([-1,1])} \sup_{y_t' \in[-1,1]} \left\{ \ee_{\yhat_t\sim q_t}[\ell(\yhat_t, y_t')] + \rel\left( \cF | x_1,y_1 \dots, x_t', y_t' \right) \right\} &\leq \relT{t-1}
    \end{align}
    where the first supremum is over $(f,\sigma)$-smooth distributions with respect to $\mu$ and infimum is over distributions on $[-1,1]$.
\end{definition}
The key property of relaxations, as proven in \citet[Proposition 1]{rakhlin2012relax}, is that any strategy $q_t$ that guarantees the second inequality in Definition \ref{def:relaxation} achieves regret bounded above by $\rel(\cF | \emptyset)$.  Our first result shows that, with minor modifications, the relaxation proposed in \citet{block2022smoothed} remains a valid relaxation in the $f$-smoothed regime.
\begin{lemma}\label{lem:validrelaxation}
    Suppose that the adversary is $(f, \sigma)$-smoothed and that the loss function $\ell$ is convex and Lipschitz in the first argument.  Let $0 < \alpha \leq \frac 12$ and suppose that
    \begin{align}
        n \geq 8 \log(T) (f')^{-1}\left( \frac{1}{\alpha \sigma} \right).
    \end{align}
    Then
    \begin{align}
        \relT{t} = \ee_{\mu, \epsilon}\left[ 2  \sup_{g \in \cF} \sum_{j = 1}^n \sum_{s = t+1}^T \epsilon_{s,j} g(Z_{s,j}) - \sum_{s= 1}^t \ell(g(x_s), y_s) \right] + T \alpha + \frac{T - t}{n T}
    \end{align}
    is a relaxation, where the expectation is with respect to $Z_{s,j} \sim \mu$ and independent Rademacher random variables.
\end{lemma}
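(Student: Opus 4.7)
The plan is to verify the two defining conditions of a relaxation from Definition \ref{def:relaxation}. The terminal condition at $t = T$ is immediate: the inner double sum $\sum_{j,s=T+1}^{T}$ is empty and $(T-t)/(nT) = 0$, so $\relT{T}$ reduces to $-\inf_{g \in \cF} \sum_{s=1}^T \ell(g(x_s), y_s) + T\alpha$, which majorizes $-\inf_g \sum_s \ell(g(x_s), y_s)$ since $\alpha \geq 0$. All the work is in the admissibility inequality.

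For the admissibility step, I would follow the proof template of \citet[Appendix C]{block2022smoothed}, substituting our coupling Lemma \ref{lem:coupling} for the classical $L^\infty$-based coupling used there. Fix an $(f,\sigma)$-smooth conditional law $p_t$ for round $t$. Invoke Lemma \ref{lem:coupling} with parameters $\epsilon = \alpha$ and $\delta = 1/(nT^2)$: the condition $n \geq \frac{1}{1-\alpha}\log(T/\delta)(f')^{-1}(1/(\alpha\sigma))$ is met because $\alpha \leq 1/2$ makes $1/(1-\alpha) \leq 2$ and $\log(T/\delta) \leq 4 \log T$, matching the assumed $n \geq 8 \log T \cdot (f')^{-1}(1/(\alpha\sigma))$. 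This produces ghost samples $\{Z_{t,j}\}_{j=1}^n \sim \mu$ together with a selection index $\jstar_t$ such that, on a high-probability event $\cE_t$ with $\pp(\cE_t) \geq 1 - \delta$, one has $\tv(P_{x_t}, P_{Z_{t,\jstar_t}}) \leq \alpha$ (conditional on the history).

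With the coupling in hand, I would choose the learner's strategy $q_t$ exactly as in \citet{block2022smoothed}: a randomized prediction defined via the ghost samples and a fresh Rademacher sequence, designed so that the one-step Rademacher sum $2 \sum_{j=1}^n \epsilon_{t,j} g(Z_{t,j})$ dominates the per-round regret. After inserting this $q_t$, the Lipschitz and convexity of $\ell$ let me push $\sup_{y_t'}$ inside the $\sup_{g \in \cF}$ and replace the bracketed loss $\ell(\yhat_t, y_t') - \ell(g(x_t), y_t')$ by a term of the form $|\yhat_t - g(x_t)|$; on $\cE_t$ this is further bounded, at an additive cost of $\alpha$ from the TV bound, by a symmetric quantity depending only on $g(Z_{t,\jstar_t})$. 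The standard Rakhlin--Shamir--Sridharan decoupling step (injecting an independent sign for each $j$) then produces the factor $2$ and merges the single ghost column at time $t$ into the Rademacher sum appearing in $\relT{t-1}$.

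The remaining obstacle is bookkeeping the two error types. Each round contributes $\alpha$ from the TV approximation and at most $2\delta = 1/(nT^2)$ from the failure event $\cE_t^c$ (using that losses are in $[0,1]$ and the objective is bounded). Summing the TV slacks over all $T$ rounds gives the persistent $T\alpha$ term, and summing the failure contributions over the $T - t$ remaining rounds gives precisely the $(T-t)/(nT)$ correction appearing in the definition of $\relT{t-1}$. Once this accounting is carried out, the inductive inequality in Definition \ref{def:relaxation} follows, establishing that $\rel$ is a valid relaxation; the core difficulty is therefore not analytic but rather ensuring the $8 \log T$ constant cleanly absorbs $\log(T/\delta)/(1 - \alpha)$ for the chosen $\delta$.
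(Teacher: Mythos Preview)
Your overall template matches the paper's proof: carry out the convexity/Lipschitz/minimax/symmetrization steps from \citet{block2022smoothed}, then swap in the $f$-divergence coupling of Lemma~\ref{lem:coupling} in place of the $L^\infty$ one. The gap is in your accounting of the failure event $\cE_t^c$.

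You write that on $\cE_t^c$ the contribution is ``at most $2\delta$ \dots\ using that losses are in $[0,1]$ and the objective is bounded.'' But by the time the coupling is invoked (after symmetrization), the quantity being controlled is
\[
\sup_{g \in \cF}\Bigl\{\, 2\sum_{j=1}^n\sum_{s=t+1}^T \epsilon_{s,j}\,g(Z_{s,j}) - L_{t-1}(g) + 2\epsilon_t\, g(x_t')\,\Bigr\},
\]
which is bounded only by $O\bigl(n(T-t)\bigr)$, not by $O(1)$. Consequently the $\cE_t^c$ term contributes $\pp(\cE_t^c)\cdot O(nT)$, and making this at most the required $1/(nT)$ forces $\delta$ of order $(nT)^{-2}$, not your $1/(nT^2)$. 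The paper takes exactly this route, bounding $\pp(\cE^c)\cdot n(T-t+1)\le (nT)^{-2}\cdot nT = 1/(nT)$. With your $\delta$ the arithmetic you claim, ``summing\dots\ gives precisely $(T-t)/(nT)$,'' does not go through: $2\delta\cdot(T-t)=2(T-t)/(nT^2)\neq (T-t)/(nT)$.

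Your check that the constant $8\log T$ suffices inherits the same issue. The coupling here is applied at a \emph{single} round, so the relevant requirement is $n \ge \tfrac{1}{1-\alpha}\log(1/\delta)\,(f')^{-1}(1/(\alpha\sigma))$. With your $\delta=1/(nT^2)$ you would need $\log(nT^2)\le 4\log T$, which fails once $n>T^2$; and $n$ can easily be far larger than any polynomial in $T$ (e.g.\ for KL divergence $(f')^{-1}$ is exponential). The paper instead uses $\delta=(nT)^{-2}$, so that $\log(1/\delta)=2\log(nT)$, and then absorbs the self-referential $\log n$ via $2\log n \le n$ to deduce $n\ge 8\log(T)\,(f')^{-1}(1/(\alpha\sigma)) \Rightarrow n\ge 4\log(nT)\,(f')^{-1}(1/(\alpha\sigma))$. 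Both issues are easily repaired by choosing $\delta=(nT)^{-2}$ and redoing the constants, but the reasoning as written does not establish the stated relaxation.
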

\begin{proof}
    We follow the proof technique of \citet[Proposition 6]{block2022smoothed}, replacing their \citet[Lemma 14]{block2022smoothed} with our Lemma \ref{lem:coupling}.  We introduce the same convenient shorthand:
    \begin{equation}
        L_t(g) = \sum_{s=  1}^t \ell(g(x_s), y_s)
    \end{equation}
    for all $g \in \cF$.  We now note that the first condition of Definition \ref{def:relaxation} is immediate as $\relT{T}$ is in fact equal to the infimum on the left hand side of the defining inequality.  Thus it suffices to demonstrate that for all $t$ and all realizations $x_1, y_1, \dots, x_{t-1}, y_{t-1}$, the second inequality in Definition \ref{def:relaxation} holds.  We fix some $(f, \sigma$)-smoothed $p_t$ and argue for this arbitrary smoothed distribution.  Because the loss function $\ell$ is convex in the first argument, we may replace the distribution $q_t$ from which $\yhat$ is sampled by its expectation, i.e.,
    \begin{align}
        \inf_{q_t \in \Delta([-1,1])} &\sup_{y_t' \in[-1,1]} \left\{ \ee_{\yhat_t\sim q_t}[\ell(\yhat_t, y_t')] + \rel\left( \cF | x_1,y_1 \dots, x_t', y_t' \right) \right\} \\
        &= \inf_{\yhat_t \in [-1,1]} \sup_{y_t' \in[-1,1]} \left\{ \ell(\yhat_t, y_t') + \rel\left( \cF | x_1,y_1 \dots, x_t', y_t' \right) \right\}.
    \end{align}
    Now, arguing as in \citet{block2022smoothed}, we see that for any $x_t' \in \cX$,
    \begin{align}
        \inf_{\yhat_t} &\sup_{y_t'}\left\{ \ell(\yhat_t, y_t') + \ee_{\mu,\epsilon}\left[ \sup_{g \in \cF} 2\sum_{j = 1}^n \sum_{s = t+1}^T \epsilon_{s,j}g(Z_{s,j}) - L_t(g) \right]  \right\} \\
        &= \inf_{\yhat_t} \sup_{y_t'}\ee_{\mu, \epsilon}\left\{ \sup_{g \in \cF}  2\sum_{j = 1}^n \sum_{s = t+1}^T \epsilon_{s,j}g(Z_{s,j}) - L_{t-1}(g) + \ell(\yhat_t, y_t') - \ell(g(x_t'), y_t') \right\}\\
        &\leq \inf_{\yhat_t} \sup_{y_t'}\ee_{\mu, \epsilon}\left\{ \sup_{g \in \cF}  2\sum_{j = 1}^n \sum_{s = t+1}^T \epsilon_{s,j}g(Z_{s,j}) - L_{t-1}(g) + \partial \ell(\yhat_t, y_t')\left( \yhat_t - g(x_t') \right) \right\} \\
        &\leq \inf_{\yhat_t} \max_{\gamma_t \in \{\pm 1\}}\ee_{\mu, \epsilon}\left\{ \sup_{g \in \cF}  2\sum_{j = 1}^n \sum_{s = t+1}^T \epsilon_{s,j}g(Z_{s,j}) - L_{t-1}(g) +  \gamma_t (\yhat_t - g(x_t'))\right\},
    \end{align}
    where the first inequality follows from the fact that $\ell$ is convex in the first argument and the second inequality follows from the fact that $\ell$ is Lipschitz in the same.  We now apply the minimax theorem, where we are forced to invoke a supremum over distributions on $\{\pm 1\}$ due to the lack of convexity of this set.  Following again the argument of \citet{block2022smoothed}, we let $d_t$ denote a distribution on $\{\pm 1\}$ and sample $\gamma_t \sim d_t$.  We compute:
    \begin{align}
        \inf_{\yhat_t} &\max_{\gamma_t \in \{\pm 1\}}\ee_{\mu, \epsilon}\left\{ \sup_{g \in \cF}  2\sum_{j = 1}^n \sum_{s = t+1}^T \epsilon_{s,j}g(Z_{s,j}) - L_{t-1}(g) +  \gamma_t (\yhat_t - g(x_t'))\right\} \\
        &= \sup_{d_t}\inf_{\yhat_t}\ee_{\mu, \epsilon, d_t}\left\{ \sup_{g \in \cF}  2\sum_{j = 1}^n \sum_{s = t+1}^T \epsilon_{s,j}g(Z_{s,j}) - L_{t-1}(g) +  \gamma_t (\yhat_t - g(x_t'))\right\} \\
        &\leq \sup_{d_t} \ee_{\mu,\epsilon, d_t}\left\{\inf_{\yhat_t} \ee_{\gamma_t' \sim d_t}\left[ \gamma_t' \yhat_t \right] + \sup_{g \in \cF} 2\sum_{j = 1}^n \sum_{s = t+1}^T \epsilon_{s,j}g(Z_{s,j}) - L_{t-1}(g) - \gamma_t \cdot g(x_t')  \right\} \\
        &\leq \sup_{d_t} \ee_{\mu,\epsilon, d_t}\left\{ \sup_{g \in \cF} 2\sum_{j = 1}^n \sum_{s = t+1}^T \epsilon_{s,j}g(Z_{s,j}) - L_{t-1}(g) +  \ee_{\gamma_t' \sim d_t}\left[ \gamma_t' g(x_t) \right] - \gamma_t \cdot g(x_t')  \right\} \\
        &\leq \sup_{d_t} \ee_{\mu,\epsilon, d_t}\left\{ \sup_{g \in \cF} 2\sum_{j = 1}^n \sum_{s = t+1}^T \epsilon_{s,j}g(Z_{s,j}) - L_{t-1}(g) +  2 \epsilon_t \gamma_t \cdot g(x_t')  \right\} \\
        &\leq \ee_{\mu,\epsilon, d_t}\left\{ \sup_{g \in \cF} 2\sum_{j = 1}^n \sum_{s = t+1}^T \epsilon_{s,j}g(Z_{s,j}) - L_{t-1}(g) +  2 \epsilon_t \cdot g(x_t')  \right\},
    \end{align}
    where the penultimate inequality follows from symmetrization and the final inequality follows from contraction.  We now observe that because
    \begin{align}
        n \geq 8 \log(T) (f')^{-1}\left( \frac{1}{\alpha \sigma} \right),
    \end{align}
    we have
    \begin{align}
        n \geq 4 \log(n T) (f')^{-1}\left( \frac{1}{\alpha \sigma} \right)
    \end{align}
    by the fact that $2\log(n) \leq n$ for all $n > 0$.  We now apply Lemma \ref{lem:coupling} and note that by the definition of $n$, we have a coupling between $x_t$ and $Z_{t,j}$ for $1 \leq j \leq n$ such that with probability at least $1 - (nT)^{-2}$, there exists some $\jstar$ such htat $\tv\left( p_t, P_{Z_{t, \jstar}} \right) \leq \alpha$; let $\cE$ denote the event that such a $\jstar$ exists.  We now use the fact that $p_t$ is smooth and take expectations with respect to the previously arbitrary $x_t' \sim p_t$.  The above work then implies
    \begin{align}
        \ee_{x_t' \sim p_t}&\inf_{q_t \in \Delta([-1,1])} \sup_{y_t' \in[-1,1]} \left\{ \ee_{\yhat_t\sim q_t}[\ell(\yhat_t, y_t')] + \rel\left( \cF | x_1,y_1 \dots, x_t', y_t' \right) \right\}\\
        &\leq \ee_{x_t' \sim p_t}\ee_{\mu,\epsilon, d_t}\left\{ \sup_{g \in \cF} 2\sum_{j = 1}^n \sum_{s = t+1}^T \epsilon_{s,j}g(Z_{s,j}) - L_{t-1}(g) +  2 \epsilon_t \cdot g(x_t')  \right\} \\
        &=\ee_{x_t' \sim p_t}\ee_{\mu,\epsilon, d_t}\left\{\I[\cE] \sup_{g \in \cF} 2\sum_{j = 1}^n \sum_{s = t+1}^T \epsilon_{s,j}g(Z_{s,j}) - L_{t-1}(g) +  2 \epsilon_t \cdot g(x_t')  \right\} \\
        &+ \ee_{x_t' \sim p_t}\ee_{\mu,\epsilon, d_t}\left\{\I[\cE^c] \sup_{g \in \cF} 2\sum_{j = 1}^n \sum_{s = t+1}^T \epsilon_{s,j}g(Z_{s,j}) - L_{t-1}(g) +  2 \epsilon_t \cdot g(x_t')  \right\}.
    \end{align}
    Note that for the second term above, the expression in the integrand is at most $n(T - t + 1)$ and thus
    \begin{align}
        \ee_{x_t' \sim p_t}&\ee_{\mu,\epsilon, d_t}\left\{\I[\cE^c] \sup_{g \in \cF} 2\sum_{j = 1}^n \sum_{s = t+1}^T \epsilon_{s,j}g(Z_{s,j}) - L_{t-1}(g) +  2 \epsilon_t \cdot g(x_t')  \right\} \\
        &\leq \pp\left( \cE^c \right) \cdot n(T - t + 1) \leq \frac{1}{n T}
    \end{align}
    For the first term, we have
    \begin{align}
        \ee_{x_t' \sim p_t}&\ee_{\mu,\epsilon, d_t}\left\{\I[\cE] \sup_{g \in \cF} 2\sum_{j = 1}^n \sum_{s = t+1}^T \epsilon_{s,j}g(Z_{s,j}) - L_{t-1}(g) +  2 \epsilon_t \cdot g(x_t')  \right\} \\
        &= \ee_{x_t' \sim p_t}\ee_{\mu,\epsilon, d_t}\left\{\I[\cE] \sup_{g \in \cF} 2\sum_{j = 1}^n \sum_{s = t+1}^T \epsilon_{s,j}g(Z_{s,j}) - L_{t-1}(g) +  2 \epsilon_t \cdot g(Z_{t,\jstar}) + 2\epsilon_t (g(x_t') - g(Z_{t, \jstar}))  \right\} \\
        &\leq 2\tv\left( p_t, P_{Z_{t,\jstar}} \right) +  \ee_{x_t' \sim p_t}\ee_{\mu,\epsilon, d_t}\left\{ \sup_{g \in \cF} 2\sum_{j = 1}^n \sum_{s = t+1}^T \epsilon_{s,j}g(Z_{s,j}) - L_{t-1}(g) +  2 \epsilon_t \cdot g(Z_{t,\jstar})\right\}.
    \end{align}
    The first term above is at most $2 \alpha$.  For the second term, we apply Jensen's inequality and get
    \begin{align}
        \ee_{x_t' \sim p_t}&\ee_{\mu,\epsilon, d_t}\left\{ \sup_{g \in \cF} 2\sum_{j = 1}^n \sum_{s = t+1}^T \epsilon_{s,j}g(Z_{s,j}) - L_{t-1}(g) +  2 \epsilon_t \cdot g(Z_{t,\jstar})\right\} \\
        &\leq \ee_{x_t' \sim p_t}\ee_{\mu,\epsilon, d_t}\left\{ \sup_{g \in \cF} 2\sum_{j = 1}^n \sum_{s = t+1}^T \epsilon_{s,j}g(Z_{s,j}) - L_{t-1}(g) +  2 \epsilon_t \cdot g(Z_{t,\jstar}) + \sum_{j \neq \jstar} 2 \ee[\epsilon_{t,j} g(Z_{t,j})]\right\} \\
        &\leq \ee_{x_t' \sim p_t}\ee_{\mu,\epsilon, d_t}\left\{ \sup_{g \in \cF} 2\sum_{j = 1}^n \sum_{s = t}^T \epsilon_{s,j}g(Z_{s,j}) - L_{t-1}(g) \right\}.
    \end{align}
    Thus we see that
    \begin{align}
        \ee_{x_t' \sim p_t}&\ee_{\mu,\epsilon, d_t}\left\{ \sup_{g \in \cF} 2\sum_{j = 1}^n \sum_{s = t+1}^T \epsilon_{s,j}g(Z_{s,j}) - L_{t-1}(g) +  2 \epsilon_t \cdot g(x_t')  \right\} \\
        &\leq \ee_{x_t' \sim p_t}\ee_{\mu,\epsilon, d_t}\left\{ \sup_{g \in \cF} 2\sum_{j = 1}^n \sum_{s = t}^T \epsilon_{s,j}g(Z_{s,j}) - L_{t-1}(g) \right\} + 2 \alpha + \frac{n(T - t + 1)}{n^2 T^2}.
    \end{align}
    Plugging in the definition of our relaxation from the statement of the lemma concludes the proof.
\end{proof}
While Lemma \ref{lem:validrelaxation} provides an algorithm for achieving low regret, it is not clear that it is oracle efficient, due to the necessity of evaluating the expectation.  Thus, as is done in \citet{block2022smoothed}, we use the random playout idea of \citet{rakhlin2012relax} to give an oracle efficient algorithm.  Before we proceed, we recall the classical observation that, due to the convexity in the first argument of the loss function $\ell$, it suffices to suppose that $\ell$ is linear; indeed, we can simply replace the loss by the gradient of the loss at each time step and the regret of an algorithm with this latter feedback upper bounds the regret with the original loss function due to convexity.  For more details on this classical argument, see, for example, \citet[Section 5]{rakhlin2012relax} or \citet[Appendix G.2]{haghtalab2022oracle}.  Thus, we restrict our focus to linear loss and have the following result:
\begin{theorem}\label{thm:relaxations}
    Suppose that $\ell$ is a loss function convex and Lipschitz in the first argument and let $\cF: \cX \to [-1,1]$ denote a function class.  Consider an algorithm that at each time $t$, samples $Z_{s,j} \sim \mu$ for $t+1 \leq s \leq T$ and $1 \leq j \leq n$ and plays
    \begin{align}
        \yhat_t = \argmin_{\yhat \in [-1,1]} \sup_{y_t \in [-1,1]} \left\{ \ell(\yhat, y_t) + \sup_{g \in \cG}\left[ 12 \sum_{j = 1}^n \sum_{s = t+1}^T \epsilon_{s,j} g(Z_{s,j}) - \sum_{s = 1}^t \partial \ell(g(x_s), y_s) \cdot g(x_s) \right] \right\}.
    \end{align}
    Suppose that
    \begin{align}
        n \geq 8 \log(T) \cdot (f')^{-1}\left( \frac 1{\epsilon \sigma} \right)
    \end{align}
    and the adversary is $(f, \sigma)$-smoothed.  Then the learner experiences
    \begin{align}\label{eq:thmrelaxationsstatement}
        \ee\left[ \reg_T \right] \leq 2 \ee_\mu\left[ \rad_{n T}(\cF) \right] + \epsilon T + 1.
    \end{align}
    Moreover, $\yhat_t$ can be evaluated with 2 calls to $\ermoracle$ per round $t$.
\end{theorem}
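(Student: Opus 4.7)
The plan is to combine the relaxation-based machinery of \citet{rakhlin2012relax} with the admissible relaxation already constructed in Lemma \ref{lem:validrelaxation}, and then convert it to an oracle-efficient procedure via the ``random playout'' device. First I would reduce to linear loss: since $\ell$ is convex in its first argument, $\ell(\yhat_t, y_t) - \ell(g(x_t), y_t) \leq \partial_1 \ell(\yhat_t, y_t) \cdot (\yhat_t - g(x_t))$, and $|\partial_1 \ell|$ is bounded by the Lipschitz constant. Hence $\ee[\reg_T]$ under the original loss is bounded by the regret under the linear surrogate $\widetilde\ell_t(g) = \partial_1 \ell(\yhat_t, y_t)\,g(x_t)$, so it suffices to analyze the algorithm with this linear loss; this also explains the $\partial \ell(g(x_s), y_s) \cdot g(x_s)$ terms appearing in the algorithm's definition as the accumulated past linearized loss.

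The central step is \emph{random-playout admissibility}. Lemma \ref{lem:validrelaxation} shows that
\begin{align*}
\relT{t} = \ee_{\mu,\epsilon}\left[2\sup_{g\in\cF}\sum_{j=1}^n\sum_{s=t+1}^T \epsilon_{s,j}\, g(Z_{s,j}) - \sum_{s=1}^t \widetilde\ell_s(g)\right] + T\alpha + \frac{T-t}{nT}
\end{align*}
is admissible in the sense of Definition \ref{def:relaxation}. By the random-playout principle of \citet[Section 5]{rakhlin2012relax}, if one draws $\{(Z_{s,j},\epsilon_{s,j})\}_{s>t,\,j\leq n}$ fresh at the start of round $t$ and plays the $\yhat_t$ that minimizes against the \emph{sampled} (rather than expected) relaxation, then the chained admissibility inequality still holds once outer expectations are taken; combined with the terminal bound $\relT{T} \geq -\inf_{g\in\cF}\sum_{t}\widetilde\ell_t(g)$, this yields $\ee[\reg_T] \leq \rel(\cF\,|\,\emptyset)$. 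Setting $\alpha = \epsilon$ and invoking Lemma \ref{lem:validrelaxation} under the hypothesis $n \geq 8\log(T)(f')^{-1}(1/(\epsilon\sigma))$ gives
\begin{align*}
\rel(\cF\,|\,\emptyset) = 2\,\ee_{\mu}\left[\ee_\epsilon \sup_{g\in\cF}\sum_{j=1}^n\sum_{s=1}^T \epsilon_{s,j}\, g(Z_{s,j})\right] + \epsilon T + \frac{1}{n},
\end{align*}
whose inner expectation is exactly $\rad_{nT}(\cF)$ on $nT$ i.i.d.\ samples from $\mu$; bounding $1/n \leq 1$ recovers \eqref{eq:thmrelaxationsstatement}.

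For oracle efficiency, after linearization the expression inside $\sup_{y_t \in [-1,1]}$ depends on $y_t$ only through the affine factor $\partial_1 \ell(\yhat_t, y_t)$, so the supremum over $y_t$ is attained at one of the endpoints $\{-1,+1\}$. For each of the two resulting instances, the inner $\sup_{g \in \cF}$ is a weighted cumulative loss of exactly the form accepted by $\ermoracle$ (up to a sign flip on the $\epsilon_{s,j}$ summands), so one oracle call per endpoint evaluates that supremum; the outer $\argmin_{\yhat \in [-1,1]}$ is then a one-dimensional convex program admitting a closed-form solution. Thus exactly two oracle calls per round suffice.

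The main obstacle is carefully justifying the random-playout reduction: one must verify that drawing a fresh sample $(\epsilon, Z)$ at each round preserves the admissibility chain used to bound $\ee[\reg_T]$ by $\rel(\cF\,|\,\emptyset)$. This follows from the tower property combined with the independence of the round-$t$ sample from all prior play and adversary moves, together with the fact that the chosen $\yhat_t$ is measurable with respect to the current sample and past history. Once this martingale-difference accounting is in place, the remainder of the proof is bookkeeping using Lemma \ref{lem:validrelaxation} and the definition of $\rad_{nT}(\cF)$.
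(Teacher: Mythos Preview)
Your proposal is correct and follows essentially the same approach as the paper: reduce to linear loss, invoke Lemma~\ref{lem:validrelaxation} for admissibility, apply the random-playout machinery of \citet{rakhlin2012relax} so that the sampled relaxation certifies $\ee[\reg_T]\leq \rel(\cF\mid\emptyset)$, and then read off the bound; the two-oracle-call argument via endpoint evaluation of the supremum over $y_t$ is likewise the same as what the paper cites from \citet{rakhlin2012relax} and \citet[Lemma 26]{block2022smoothed}.
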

\begin{proof}
    We begin by noting that it suffices to consider linear loss $\ell(\yhat, y) = \frac{1 - \yhat \cdot y}{2}$.  Indeed this is the standard reduction from online convex optimization to online linear optimization found throughout the literature.  For more details on this classical argument, see, for example, \citet[Section 5]{rakhlin2012relax} or \citet[Appendix G.2]{haghtalab2022oracle}.  Thus, we restrict our focus to linear loss and assume that $\partial \ell(g(x_s), y_s) = - y_s \cdot g(x_s) = \ell(g(x_s))$.  We now observe that two oracle calls suffice in order to evaluate $\yhat_t$, as noted in \citet{rakhlin2012relax} or \citet[Lemma 26]{block2022smoothed}.  Thus, it suffices to show that for all $(f, \sigma)$-smooth $p_t$, it holds that
    \begin{align}
        \ee_{x_t\sim p_t}&\left[ \sup_{y_t \in [-1,1]} \left\{ \ell(\yhat, y_t) + \sup_{g \in \cG}\left[ 6 \sum_{j = 1}^n \sum_{s = t+1}^T \epsilon_{s,j} g(Z_{s,j}) - \sum_{s = 1}^t \partial \ell(g(x_s), y_s) \cdot g(x_s) \right] \right\} \right] \\
        &\leq \relT{t-1}
    \end{align}
    for $\relT{t-1}$ defined as in Lemma \ref{lem:validrelaxation}.  Indeed, if this holds, then \citet[Proposition 1]{rakhlin2012relax} ensures that the final regret is bounded by $\rel(\cF | \emptyset)$, which is exactly the expression given in \eqref{eq:thmrelaxationsstatement}.  The bound in the above display, however, holds from applying the proof of \citet[Theorem 7]{block2022smoothed} and Lemma \ref{lem:validrelaxation}.  The result follows.
\end{proof}
We can now show that \Cref{thm:binaryimproper} holds as a special case:
\begin{proof}[Proof of \Cref{thm:binaryimproper}]
    Note that it is a classical fact \citep{wainwright2019high} that if $\cF$ is a binary valued class, then
    \begin{align}
        \ee_\mu\left[ \rad_T(\cF) \right] \lesssim \sqrt{\vc{\cF} \cdot T}.
    \end{align}
    The result then follows by applying \Cref{thm:relaxations}.
\end{proof}

\subsubsection{Proper Algorithm through FTPL}
We now turn to a proper algorithm, the suggested instantiation of Follow the Perturbed Leader (FTPL) from \citet{block2022smoothed}.  Due to the technical difficulties of the proof, we restrict our focus to binary valued function classes $\cF$ with linear loss $\ell(\yhat, y) = \frac{1 - \yhat \cdot y}{2}$.  Recall that we denote by $L_t(g)$ the cumulative loss of function $g \in \cF$ on the data $x_1, y_1, \dots, x_t, y_t$.  The algorithm proposed in \citet{block2022smoothed} proceeds by, at each round, sampling $\gamma_{t,1}, \dots, \gamma_{t,m}$ independent standard Gaussian random variables as well as $Z_{t,1}, \dots, Z_{t,m} \sim \mu$ and calling the oracle to evaluate
\begin{align}\label{eq:gtdef}
    g_t \in \argmin_{g \in \cF} L_{t-1}(g) + \eta \omega_{t,m}(g)
\end{align}
where
\begin{align}
    \omega_{t,m}(g) = \frac 1{\sqrt m} \sum_{i  =1}^m \gamma_{t,i} g(Z_{t,i}).
\end{align}
Note that this procedure is proper as it does not depend on $x_t$.  The player then plays $g_t(x_t)$.  We will show that this algorithm achieves no regret against a Renyi-smoothed adversary.
\begin{theorem}\label{thm:ftpl}
    Suppose that $\cF: \cX \to \{\pm 1\}$ is a binary valued function class and $\ell$ is the linear loss function.  Suppose that for some $\lambda \geq 2$, the adversary is $(f,\sigma)$ smoothed for Renyi divergence of order $\lambda$, i.e., $e^{(\lambda  -1)\dren{p_t}{\mu}} \leq \frac 1\sigma$.  If the learner plays the improper algorithm \eqref{eq:gtdef}, then
    \begin{align}
        \ee\left[ \reg_T \right] = \widetilde{O}\left(\sqrt{\vc{\cF}} \cdot T^{\frac{2 \lambda + 1}{4 \lambda - 1}} \cdot \sigma^{-\frac{1}{4\lambda - 1}} \right).
    \end{align}
\end{theorem}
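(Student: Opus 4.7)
The plan is to adapt the FTPL stability analysis from \citet[Appendix E]{block2022smoothed}, replacing their naive change of measure (which required a uniform upper bound on $dp_t/d\mu$) by a refined H\"older step that exploits the Renyi divergence assumption. The standard be-the-perturbed-leader argument first shows that the expected regret decomposes as
\begin{align}
    \ee[\reg_T] \leq \sum_{t=1}^T \ee\left[\ell(g_t(x_t), y_t) - \ell(g_{t+1}(x_t), y_t)\right] + O\bigl(\eta \sqrt{\vc{\cF} \log m}\bigr),
\end{align}
where the second summand comes from bounding the single perturbation gap $\max_g |\omega_{T+1,m}(g)|$ via Dudley's inequality on the binary class $\cF$ applied to the sample-based Gaussian process $\omega_{\cdot,m}$ (with $m$ chosen polynomial in $T$ so that this discrete process uniformly approximates its continuous limit on $\cF$).

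The new ingredient is to bound the per-round stability term by H\"older's inequality in the Renyi exponent:
\begin{align}
    \ee_{x_t \sim p_t}\!\left[|g_t(x_t) - g_{t+1}(x_t)|\right] &= \ee_\mu\!\left[\tfrac{dp_t}{d\mu}(x)\cdot|g_t(x)-g_{t+1}(x)|\right]\\
    &\leq \Bigl(\ee_\mu\!\left[\left(\tfrac{dp_t}{d\mu}\right)^\lambda\right]\Bigr)^{1/\lambda}\Bigl(\ee_\mu\!\left[|g_t-g_{t+1}|^{\lambda/(\lambda-1)}\right]\Bigr)^{(\lambda-1)/\lambda}\\
    &\lesssim \sigma^{-1/\lambda}\,\pp_\mu\bigl(g_t(x) \neq g_{t+1}(x)\bigr)^{(\lambda-1)/\lambda},
\end{align}
where the last step uses the binary structure $|g_t - g_{t+1}|^p = 2^p\,\I[g_t\neq g_{t+1}]$ and the Renyi assumption. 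The restriction $\lambda \geq 2$ enters precisely here: the H\"older dual $\lambda/(\lambda-1) \leq 2$ lies in the range where the right-hand side collapses cleanly to a single probability under $\mu$; for KL-type divergences ($\lambda \downarrow 1$) the dual exponent diverges and one would instead need to control an essential supremum or an exponential moment, as the paper explicitly remarks.

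I would then bound $\pp_\mu(g_t(x) \neq g_{t+1}(x))$ by transferring the Gaussian stability machinery of \citet[Appendix E]{block2022smoothed} to the base measure $\mu$. Conceptually, $g_t$ and $g_{t+1}$ are argmaxes of two Gaussian functionals that differ by a single loss increment of unit magnitude relative to the perturbation scale $\eta$, and an anticoncentration plus chaining argument over the Gaussian field $\omega_{t,m}$ indexed by $\cF$ yields a polynomial-in-$1/\eta$ bound on this probability, with the $\vc{\cF}$ dependence entering through the covering step. Summing the resulting per-round bound over $t$ and balancing the stability total $T\sigma^{-1/\lambda}(\cdots)^{(\lambda-1)/\lambda}$ against the perturbation penalty $\eta\sqrt{\vc{\cF}\log m}$ in the final optimization over $\eta$ yields the exponents $(2\lambda+1)/(4\lambda-1)$ on $T$ and $1/(4\lambda-1)$ on $\sigma^{-1}$. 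The main obstacle will be this last step: the Gaussian stability bound in \citet{block2022smoothed} is proved pointwise in the ``replaced'' sample so that the $p_t$-expectation can be swapped for a $\mu$-expectation using the strong smoothness condition, whereas here I must carry through the argument with the outer expectation over $\mu$ from the outset and verify that the resulting power of $1/\eta$ combines correctly with the $(\lambda-1)/\lambda$ factor from H\"older to produce the final rate $\sigma^{-1/(4\lambda-1)}$.
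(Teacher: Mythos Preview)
Your H\"older step for the stability term is essentially what the paper does in Lemma~\ref{lem:expectedstability}, and the observation that $\lambda\geq 2$ is exactly what makes the dual exponent land in the regime where binary structure collapses the moment to a probability. That part is fine.

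The gap is in your regret decomposition. You write a two-term be-the-leader bound (stability plus perturbation), but the algorithm resamples the Gaussian perturbation $\omega_{t,m}$ freshly at every round, so the paper's decomposition (Lemma~\ref{lem:btl}, quoting \citet[Lemma 32]{block2022smoothed}) introduces a \emph{tangent sequence} $(x_t',y_t')$ and produces \emph{three} terms: the perturbation, a stability term $\sum_t\ee[\ell(g_t(x_t'),y_t')-\ell(g_{t+1}(x_t'),y_t')]$ evaluated on the tangent, and a \emph{generalization error} $\sum_t\ee[\ell(g_{t+1}(x_t'),y_t')-\ell(g_{t+1}(x_t),y_t)]$. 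The tangent sequence is what makes your H\"older step legitimate in the first place: in your display $g_{t+1}$ depends on $x_t$, so after the change of measure the inner $\mu$-expectation is not simply $\pp_\mu(g_t\neq g_{t+1})$ for fixed functions; only on the tangent point $x_t'$ is $g_{t+1}$ conditionally independent of the evaluation point.

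The generalization error term you have dropped is precisely where the paper's main contribution enters. It is bounded in Lemma~\ref{lem:generalizationerror} by invoking the rejection sampling coupling of Lemma~\ref{lem:coupling}: one shows that among the $m$ hallucinated samples $Z_{t,j}$ defining the perturbation there are $k$ indices whose law is $\epsilon$-close to $p_t$ in total variation, and this lets one swap $g_{t+1}$ for a surrogate $\tilde g_{t+1}$ whose perturbation already contains $k$ approximate draws from $p_t$, at which point a Rademacher complexity bound applies. This forces the constraints $m\gtrsim k\log(T)(\epsilon\sigma)^{-1/(\lambda-1)}$ and $\eta\geq\sqrt{m}$, and the final exponents $(2\lambda+1)/(4\lambda-1)$ and $1/(4\lambda-1)$ come from optimizing jointly over $\eta,m,k,\epsilon$ subject to these constraints. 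A bare two-term balance of $\eta\sqrt{\vc{\cF}}$ against $T\sigma^{-1/\lambda}(\cdot)^{(\lambda-1)/\lambda}$ does not reproduce them. A secondary point: the Gaussian stability lemma you cite controls $\norm{g_t-g_{t+1}}_{L^2(\hat\mu_m)}$, not $L^2(\mu)$; the paper carries an explicit discrepancy $\Delta$ (handled via \citet[Lemma 36]{block2022smoothed}) through the peeling, which you would also need.
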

Before proving the main result, we need to recall several intermediate facts.  As in the case of the improper algorithm, we will modify the technique of \citet{block2022smoothed} to our setting and apply Lemma \ref{lem:coupling}.  The first result that we need is the classic Be-the-Leader lemma from \citet{kalai2005efficient}; we will state it in the following form:
\begin{lemma}[Lemma 32 from \citet{block2022smoothed}]\label{lem:btl}
    Suppose that we are in the situation of \Cref{thm:ftpl} and let $(x_1', y_1'), \dots, (x_T', y_T')$ be tuples such that, conditional on the history, $(x_t, y_t)$ and $(x_t', y_t')$ are independent and identically distributed (in other words, the $x_t', y_t'$ form a tangent sequence).  Then the expected regret of the learner playing as in \eqref{eq:gtdef} is upper bounded by
    \begin{align}
        2 \eta \ee\left[ \sup_{g \in \cF} \omega_{1,m}(g) \right] + \sum_{t = 1}^T \ee\left[ \ell(g_t(x_t'), y_t') - \ell(g_{t+1}(x_t'), y_t') \right] + \sum_{t = 1}^T \ee\left[ \ell(g_{t+1}(x_t'), y_t') - \ell(g_{t+1}(x_t), y_t) \right].
    \end{align}
\end{lemma}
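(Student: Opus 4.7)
The plan is to prove the lemma by combining two classical ingredients: a tangent-sequence replacement that exploits the conditional independence of $g_t$ from $(x_t, y_t)$, and a Be-the-Leader induction for FTPL with fresh Gaussian perturbations.

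First I would observe that $g_t$ depends on the history $(x_{1:t-1}, y_{1:t-1})$ only through $L_{t-1}$, together with the fresh, independent randomness $\gamma_{t, 1:m}$ and $Z_{t, 1:m}$, so $g_t$ is conditionally independent of $(x_t, y_t)$ given the history. Since $(x_t', y_t')$ is by assumption a conditionally i.i.d.\ copy of $(x_t, y_t)$ given the history, the tower property yields $\ee[\ell(g_t(x_t), y_t)] = \ee[\ell(g_t(x_t'), y_t')]$. Telescoping the identity
\begin{align*}
\ell(g_t(x_t'), y_t') = \bigl[\ell(g_t(x_t'), y_t') - \ell(g_{t+1}(x_t'), y_t')\bigr] + \bigl[\ell(g_{t+1}(x_t'), y_t') - \ell(g_{t+1}(x_t), y_t)\bigr] + \ell(g_{t+1}(x_t), y_t),
\end{align*}
summing over $t$, and subtracting the comparator $\ee\bigl[\inf_{g \in \cF} L_T(g)\bigr]$ expresses $\ee[\reg_T]$ as the second and third sums appearing in the claimed bound plus the residual $\sum_{t=1}^T \ee[\ell(g_{t+1}(x_t), y_t)] - \ee[\inf_{g \in \cF} L_T(g)]$, which I would bound by $2\eta\,\ee[\sup_{g \in \cF}\omega_{1,m}(g)]$.

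To that end I would condition on the entire data sequence $(x_{1:T}, y_{1:T})$; under this conditioning the $\omega_{t,m}$ are i.i.d.\ copies of a fresh Gaussian process $\omega$ and are independent of the data. For any single such $\omega$, set $\widetilde g_{t+1}^\omega \in \argmin_{g \in \cF} L_t(g) + \eta\,\omega(g)$; the standard Kalai--Vempala Be-the-Leader induction on $t$ gives the pathwise inequality
\begin{align*}
\sum_{t=1}^T \ell\bigl(\widetilde g_{t+1}^\omega(x_t), y_t\bigr) \le \inf_{g \in \cF} L_T(g) + \eta\bigl[\omega(g^\star) - \omega(\widetilde g_1^\omega)\bigr],
\end{align*}
where $g^\star \in \argmin_g L_T(g)$. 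Because $\omega_{t+1,m}$ and $\omega$ have the same distribution and both are independent of the conditioned-on data, one has $\ee[\ell(g_{t+1}(x_t), y_t) \mid \text{data}] = \ee_\omega[\ell(\widetilde g_{t+1}^\omega(x_t), y_t)]$ term by term. Taking expectation over $\omega$ in the BTL inequality, using centredness of the Gaussian process to get $\ee_\omega[\omega(g^\star)] = 0$ and symmetry to get $\ee_\omega[-\omega(\widetilde g_1^\omega)] = \ee_\omega[\sup_g \omega(g)]$, and then taking expectation over the data, bounds the residual by $\eta\,\ee[\sup_g \omega_{1,m}(g)] \le 2\eta\,\ee[\sup_g \omega_{1,m}(g)]$.

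The main obstacle is the use of a fresh perturbation $\omega_{t,m}$ at every round, which prevents any direct pathwise application of Be-the-Leader, whose classical proof requires a single shared perturbation across all rounds. The resolution is the conditioning step above: once we fix the realization of the data, the per-round expectation $\ee[\ell(g_{t+1}(x_t), y_t)\mid\text{data}]$ coincides termwise with the expectation computed under a single shared $\omega$, because $\omega$ and the fresh $\omega_{t+1,m}$ have identical distributions and neither depends on the frozen data. Consequently the shared-perturbation BTL bound imports into our setting in expectation, and undoing the conditioning yields the stated three-term decomposition.
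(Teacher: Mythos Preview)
Your argument is correct. The paper does not actually prove this lemma; it is quoted verbatim as Lemma 32 of \citet{block2022smoothed} and used as a black box. Your proof follows precisely the standard route one expects from that reference: the tangent-sequence replacement (using that $g_t$ is measurable with respect to the history and fresh noise, hence conditionally independent of $(x_t,y_t)$), the telescoping decomposition into stability and generalization terms, and the key conditioning-on-data trick that collapses the per-round fresh perturbations $\omega_{t+1,m}$ to a single shared perturbation $\omega$ in expectation, after which the classical Kalai--Vempala Be-the-Leader induction applies pathwise. Your handling of the residual is in fact slightly sharper than stated (you obtain $\eta\,\ee[\sup_g \omega_{1,m}(g)]$ rather than $2\eta$), which is consistent with the cited bound.
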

The first term controls the size of the perturbation, the second term is called the stability term in \citet{block2022smoothed}, and the last term is referred to as the generalization error.  The first term can be easily controlled:
\begin{lemma}\label{lem:perturbation}
    Suppose we are in the situation of \Cref{thm:ftpl}.  Then
    \begin{align}
        \ee\left[ \sup_{g \in \cF} \omega_{1,m}(g) \right] \lesssim \sqrt{\vc{\cF}}.
    \end{align}
\end{lemma}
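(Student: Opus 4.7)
The plan is to treat $\omega_{1,m}$ as a Gaussian process indexed by $\cF$ and apply Dudley's entropy integral, exploiting the fact that the $L_2$-covering numbers of a binary VC class are polynomial in $1/\epsilon$ (Haussler's bound) so that the integral evaluates to $O(\sqrt{\vc{\cF}})$ without a logarithmic factor in $m$.

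More concretely, first I would condition on $Z_{1,1},\ldots,Z_{1,m}$ and let $P_m$ denote the empirical measure they induce. Conditionally, $\{\omega_{1,m}(g)\}_{g\in\cF}$ is a centered Gaussian process with sub-Gaussian increment metric
\begin{align}
d_m(g,g') \;=\; \sqrt{\ee_\gamma\big[(\omega_{1,m}(g)-\omega_{1,m}(g'))^2\big]} \;=\; \Big(\tfrac{1}{m}\sum_{i=1}^m (g(Z_{1,i})-g'(Z_{1,i}))^2\Big)^{1/2} \;=\; 2\,\|g-g'\|_{L_2(P_m)}^{1/2}
\end{align}
(using the fact that $g$ is $\{\pm 1\}$-valued, so squared differences equal $4$ times indicator differences). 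The diameter is at most $2$. Dudley's entropy integral then yields
\begin{align}
\ee_\gamma\Big[\sup_{g\in\cF}\omega_{1,m}(g)\Big] \;\lesssim\; \int_0^{2}\sqrt{\log N(\cF, d_m, \epsilon)}\,d\epsilon.
\end{align}

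Next, I would invoke Haussler's bound for binary VC classes: for any probability measure $Q$ and any $\epsilon\in(0,1]$, $N(\cF, L_2(Q), \epsilon)\leq C(\vc{\cF}+1)(4e/\epsilon^2)^{\vc{\cF}}$, hence $\log N(\cF, d_m, \epsilon)\lesssim \vc{\cF}\cdot\log(e/\epsilon)$. Substituting into the Dudley integral gives
\begin{align}
\int_0^2 \sqrt{\log N(\cF, d_m, \epsilon)}\,d\epsilon \;\lesssim\; \sqrt{\vc{\cF}}\int_0^2 \sqrt{\log(e/\epsilon)}\,d\epsilon \;\lesssim\; \sqrt{\vc{\cF}},
\end{align}
where the integral on the right is a finite absolute constant. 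Taking an outer expectation over the $Z_{1,i}$'s gives the claimed bound; note that the bound is uniform in $m$ and in the $Z_{1,i}$'s, so integrating over $\mu^{\otimes m}$ costs nothing.

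There is no real obstacle here; the only thing to be careful about is using the \emph{sharp} Haussler-type uniform entropy bound (polynomial in $1/\epsilon$) rather than the weaker Sauer--Shelah bound applied to the sample (which would yield an extra $\sqrt{\log m}$ factor and degrade the final regret in \Cref{thm:ftpl}). Equivalently, one can simply cite the classical fact that the Gaussian complexity of a binary VC class on $m$ points is at most $C\sqrt{\vc{\cF}/m}$ and rescale by $\sqrt{m}$ to convert from the normalized Gaussian complexity to $\omega_{1,m}$.
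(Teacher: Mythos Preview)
Your proposal is correct and is exactly the approach the paper takes: the paper's proof simply says ``an elementary chaining argument found, for example, in \citet{wainwright2019high,van2014probability} immediately yields the claim,'' and you have spelled out precisely that chaining argument (Dudley's integral together with Haussler's polynomial $L_2$ covering bound for VC classes). One small slip: in your displayed increment computation the final equality should read $d_m(g,g') = \|g-g'\|_{L_2(P_m)}$ (the middle expression is already this quantity), not $2\|g-g'\|_{L_2(P_m)}^{1/2}$; this does not affect the rest of the argument.
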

\begin{proof}
    We are controlling the supremum of a Gaussian process indexed by elements in $\cF$.  An elementary chaining argument found, for example, in \citet{wainwright2019high,van2014probability} immediately yields the claim.
\end{proof}
For the second term, we need to modify an argument of \citet{block2022smoothed} in order to account for our weaker assumption.  We first use the following fact:
\begin{lemma}[Lemma 33 from \citet{block2022smoothed}]\label{lem:gaussianstability}
    Suppose we are in the situation of \Cref{thm:ftpl} and let $\muhat_m$ denote the empirical distribution of $Z_{t,1}, \dots, Z_{t,m}$.  Then for any $\alpha > 0$, it holds that
    \begin{align}
        \pp\left( \sup_{x_t, y_t} \norm{g_t - g_{t+1}}_{L^2(\muhat_m)} > \alpha\right) \lesssim \frac{1}{\alpha^4 \eta^2} + \frac{1}{\alpha^2 \eta} \ee\left[ \sup_{g \in \cF} \omega_{t,m}(g) \right].
    \end{align}
\end{lemma}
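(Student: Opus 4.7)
The plan is to extract a deterministic constraint from the optimality conditions of $g_t$ and $g_{t+1}$ and then convert this into a Gaussian anti-concentration statement. Since $g_t$ minimizes $L_{t-1}(g) + \eta\omega_{t,m}(g)$ and $g_{t+1}$ minimizes $L_t(g) + \eta\omega_{t,m}(g)$, and these two losses differ pointwise by $\ell(g(x_t), y_t) \in [0,1]$, adding the two optimality inequalities yields that the random quantity $\omega_{t,m}(g_{t+1}) - \omega_{t,m}(g_t)$ lies in a deterministic interval of length $1/\eta$ whose center depends only on $L_{t-1}, x_t, y_t$. Equivalently, both $g_t$ and $g_{t+1}$ belong to the set of ``near-minimizers'' of $L_{t-1}(g) + \eta\omega_{t,m}(g)$, in the sense that their perturbed objectives differ by at most $1$.

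Conditional on $Z_{t,1}, \dots, Z_{t,m}$, the map $g \mapsto \omega_{t,m}(g)$ is a centered Gaussian process indexed by $\cF$ whose covariance is the $L^2(\muhat_m)$ inner product. So the event $\norm{g_t - g_{t+1}}_{L^2(\muhat_m)} > \alpha$ says that there exist two near-minimizers whose intrinsic Gaussian distance exceeds $\alpha$. For a fixed pair $g, g'$ with $\norm{g-g'}_{L^2(\muhat_m)} > \alpha$, the scalar $\omega_{t,m}(g') - \omega_{t,m}(g)$ is centered Gaussian with standard deviation at least $\alpha$, so its probability of landing in any length-$1/\eta$ interval is $O(1/(\alpha\eta))$ by Gaussian anti-concentration; this is the quantitative core of the bound.

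The main obstacle is uniformizing over all pairs $(g_t, g_{t+1})$ that could actually arise, since these depend on $\omega_{t,m}$ itself. I would handle this via a covering net for $\cF$ at scale $\alpha/2$ under $L^2(\muhat_m)$, combined with sub-Gaussian concentration of $\sup_{g \in \cF}\omega_{t,m}(g)$: with high probability, the near-minimizer set has bounded diameter measured against the Gaussian supremum, so it can contain only a controlled number of net points. The expected supremum $\ee[\sup_{g\in\cF}\omega_{t,m}(g)]$ enters the quantitative bound through this counting step.

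The two terms in the right-hand side of the claimed inequality arise by splitting the analysis into the bulk event, where $\sup_{g}\omega_{t,m}(g)$ is comparable to its expectation (contributing the $\frac{1}{\alpha^2\eta}\ee[\sup_g\omega_{t,m}(g)]$ piece via the anti-concentration slab argument), and the tail event where it is much larger (contributing the $\frac{1}{\alpha^4\eta^2}$ piece via Chebyshev applied to the second moment of the Gaussian supremum). The supremum over $(x_t, y_t)$ is essentially free, since the slab width $1/\eta$ does not depend on $(x_t, y_t)$ and only the slab's center does, while the anti-concentration bound is uniform in the center. The technically subtle step will be executing the covering/union-bound argument at exactly the right scale to match the two-term shape of the final inequality.
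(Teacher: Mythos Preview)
First, note that the paper does not prove this lemma: it is cited verbatim as Lemma~33 of \citet{block2022smoothed} and invoked as a black box, so there is no in-paper proof to compare against.

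On the merits of your sketch: the opening move is correct. Adding the optimality inequalities for $g_t$ and $g_{t+1}$ does confine $\omega_{t,m}(g_{t+1})-\omega_{t,m}(g_t)$ to an interval of length $1/\eta$ whose center is determined by $L_{t-1}$ and $(x_t,y_t)$, and for a \emph{fixed} pair with $\|g-g'\|_{L^2(\muhat_m)}>\alpha$, one-dimensional Gaussian anti-concentration gives probability $O(1/(\alpha\eta))$ of landing in any such window.

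The gap is in the uniformization step. A covering net of $\cF$ at scale $\alpha/2$ in $L^2(\muhat_m)$ can have cardinality as large as $\exp\bigl(c\,(\ee[\sup_{g}\omega_{t,m}(g)]/\alpha)^2\bigr)$ by Sudakov minoration, so union-bounding the per-pair slab probability over all pairs of net points yields a bound that is \emph{exponential} in $(\ee[\sup\omega]/\alpha)^2$, not linear in $\ee[\sup\omega]$ as the lemma requires. Your assertion that the near-minimizer set ``can contain only a controlled number of net points'' is not substantiated: that set has small oscillation in $\Phi$-value, but nothing in your argument bounds its $L^2(\muhat_m)$ packing number. Likewise, the bulk/tail split you describe does not produce the term $1/(\alpha^4\eta^2)$: Chebyshev on $\sup_g\omega_{t,m}(g)$ is governed by its variance (order one by Borell--TIS), which has no relation to $\alpha^4\eta^2$. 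To obtain the stated bound one needs an argument that aggregates over $\cF$ without incurring a combinatorial factor---e.g.\ a Markov-type inequality applied to a carefully chosen functional of the process, or a decomposition of the Gaussian vector $\gamma$ along one direction so that only a single one-dimensional anti-concentration estimate is needed---rather than summing slab probabilities over net pairs.
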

We now apply Lemma \ref{lem:gaussianstability} to prove a modified version of \citet[Lemma 34]{block2022smoothed}, allowing for $f$-smoothed adversaries.
\begin{lemma}\label{lem:expectedstability}
    Suppose that we are in the situation of \Cref{thm:ftpl}.  Suppose further that for some $\Delta > 0$, it holds that 
    \begin{align}
        \sup_{g, g' \in \cF} \abs{\norm{g- g'}_{L^2(\mu)}^2 - \norm{g - g'}_{L^2(\muhat_m)}^2} \leq \Delta.
    \end{align}
    Then
    \begin{align}
        \ee\left[ \ell(g_t(x_t), y_t) - \ell(g_{t+1}(x_t'),y_t') \right] \lesssim  \ee\left[ 1 + \sup_{g \in \cF} \omega_{t,m}(g) \right] \log(\eta) \sigma^{-\frac{1}{\lambda - 1}} \cdot \frac{1}{\eta^{1 - \frac 1\lambda}} + \sigma^{- \frac 1{\lambda - 1}} \cdot \Delta^{\frac{\lambda - 1}{\lambda}}.
    \end{align}
\end{lemma}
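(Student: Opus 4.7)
The plan is to reduce the loss difference to an indicator on the disagreement set $A_t = \{x : g_t(x) \neq g_{t+1}(x)\}$, change measure from $p_t$ to $\mu$ via Hölder's inequality using the Renyi smoothness, and finally integrate the tail bound of Lemma~\ref{lem:gaussianstability} to produce an expected empirical-norm moment. Since $\ell(\yhat,y) = (1-\yhat y)/2$ is linear and $g_t, g_{t+1}$ take values in $\{\pm 1\}$, the loss difference is controlled pointwise by $|g_t(x) - g_{t+1}(x)|/2 = \I[x \in A_t]$, so the expectation of interest is no larger than a constant multiple of $\ee[p_t(A_t)]$ plus a generalization-type contribution that can be treated in exactly the same way (it too reduces to comparing $g_t$ and $g_{t+1}$ on the disagreement set against a draw from $p_t$).

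Applying Hölder's inequality with conjugate exponents $\lambda$ and $\lambda/(\lambda-1)$, together with the Renyi smoothness of $p_t$, gives
\[
p_t(A_t) \;=\; \ee_\mu\!\left[\tfrac{dp_t}{d\mu}(X)\, \I[X \in A_t]\right] \;\lesssim\; \sigma^{-1/(\lambda-1)} \cdot \mu(A_t)^{(\lambda-1)/\lambda},
\]
where the $\sigma$ exponent reflects the Renyi normalization used in the hypothesis of Theorem~\ref{thm:ftpl}. Since the two functions are $\{\pm 1\}$-valued, $\mu(A_t) = \|g_t-g_{t+1}\|_{L^2(\mu)}^2/4$ and $\muhat_m(A_t) = \|g_t-g_{t+1}\|_{L^2(\muhat_m)}^2/4$. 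The hypothesis then yields $\|g_t-g_{t+1}\|_{L^2(\mu)}^2 \leq \|g_t-g_{t+1}\|_{L^2(\muhat_m)}^2 + \Delta$, and the subadditivity of $x \mapsto x^{(\lambda-1)/\lambda}$ (its exponent is in $[0,1]$) splits $\mu(A_t)^{(\lambda-1)/\lambda}$ into an empirical-measure part plus $(\Delta/4)^{(\lambda-1)/\lambda}$, already producing the $\sigma^{-1/(\lambda-1)}\Delta^{(\lambda-1)/\lambda}$ term of the statement.

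It remains to bound $\ee[\muhat_m(A_t)^{(\lambda-1)/\lambda}] = c_\lambda \ee[Y^p]$ with $Y = \|g_t-g_{t+1}\|_{L^2(\muhat_m)}$ and $p = 2(\lambda-1)/\lambda \in [1,2)$. Writing $\ee[Y^p] = \int_0^{2} p \alpha^{p-1} \pp(Y > \alpha)\,d\alpha$, I would split the integration at a threshold $\alpha_0$, using the trivial bound $\pp(Y>\alpha) \leq 1$ for $\alpha < \alpha_0$ and the tail estimate from Lemma~\ref{lem:gaussianstability} for $\alpha \geq \alpha_0$. The resulting elementary power integrals $\int_{\alpha_0}^2 \alpha^{p-5}\, d\alpha$ and $\int_{\alpha_0}^2 \alpha^{p-3}\, d\alpha$ contribute $\alpha_0^{p-4}/(4-p)\cdot\eta^{-2}$ and $\alpha_0^{p-2}/(2-p)\cdot\eta^{-1}\ee[\sup_{g\in\cF}\omega_{t,m}(g)]$, while the low-$\alpha$ piece contributes $\alpha_0^p$. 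The balanced choice $\alpha_0 \asymp \eta^{-1/2}$ makes all three terms scale as $\eta^{-p/2} = \eta^{-(1-1/\lambda)}$, and the factor $1/(2-p) = \lambda/2$ appearing in the second integral is absorbed into the advertised $\log(\eta)$ prefactor. Combining the three steps, taking the outer expectation, and reinstating the $\sigma^{-1/(\lambda-1)}$ from Hölder yields the claimed bound.

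The main obstacle is the moment integration: as $\lambda \to \infty$ the exponent $p$ approaches the critical value $2$, where the integrand $\alpha^{p-3}$ becomes just non-integrable at the origin, forcing one to track the blow-up of $1/(2-p)$ carefully; this is precisely the source of the $\log(\eta)$ factor. One must also check that $\alpha_0 = \eta^{-1/2}$ lies in the useful range $(0,2)$ (otherwise the tail region is empty, but in that regime the bound is vacuous), verify that the expectations over the Gaussian perturbation, the data, and the ghost sample defining $\muhat_m$ commute with the pointwise Hölder and subadditivity estimates of the previous steps, and account separately for the generalization-like component of $\ell(g_t(x_t),y_t) - \ell(g_{t+1}(x_t'),y_t')$ that compares $g_{t+1}$ evaluated at $x_t$ versus at the tangent $x_t'$ — both comparisons ultimately reduce to $\ee[p_t(A_t)]$ by the same chain.
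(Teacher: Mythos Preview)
Your core technical route---apply H\"older with exponents $\lambda,\lambda/(\lambda-1)$ to pass from $p_t$ to $\mu$, use the $\Delta$-hypothesis and subadditivity of $x\mapsto x^{(\lambda-1)/\lambda}$ to split off the $\sigma^{-1/(\lambda-1)}\Delta^{(\lambda-1)/\lambda}$ term, and then control $\ee\bigl[\|g_t-g_{t+1}\|_{L^2(\muhat_m)}^{2(\lambda-1)/\lambda}\bigr]$ by integrating the tail bound of Lemma~\ref{lem:gaussianstability}---is exactly the paper's argument. The paper slices $\|g_t-g_{t+1}\|_{L^2(\muhat_m)}$ dyadically and applies H\"older on each shell before summing; your continuous layer-cake integral with threshold $\alpha_0\asymp\eta^{-1/2}$ is the same computation, and both deliver the rate $\eta^{-(1-1/\lambda)}$. (Your integral actually produces a prefactor $1/(2-p)=\lambda/2$ rather than $\log\eta$; since $\lambda$ is a fixed constant this is harmless, and the paper's $\log\eta$ arises only from crudely bounding its geometric sum by $S\cdot(\text{largest term})$.)

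The one genuine gap is your reduction to $\ee[p_t(A_t)]$. Your two-term split introduces $\ell(g_t(x_t),y_t)-\ell(g_{t+1}(x_t),y_t)$, but $g_{t+1}$ depends on $(x_t,y_t)$, so the disagreement set $A_t$ is \emph{not} independent of the evaluation point $x_t$ and you cannot write $\ee[\I[x_t\in A_t]]=\ee[p_t(A_t)]$. Your second piece $\ell(g_{t+1}(x_t),y_t)-\ell(g_{t+1}(x_t'),y_t')$ compares the \emph{same} function at two inputs and has nothing to do with $A_t$ at all; the claim that it ``reduces to comparing $g_t$ and $g_{t+1}$ on the disagreement set'' is simply false. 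The correct reduction is the paper's one-liner: since $g_t$ is independent of $(x_t,y_t)$ and $(x_t,y_t)\stackrel{d}{=}(x_t',y_t')$, one has $\ee[\ell(g_t(x_t),y_t)]=\ee[\ell(g_t(x_t'),y_t')]$, so the entire left-hand side equals $\ee[\ell(g_t(x_t'),y_t')-\ell(g_{t+1}(x_t'),y_t')]$ with $x_t'$ a fresh draw from $p_t$ independent of both $g_t$ and $g_{t+1}$. From there your H\"older-and-layer-cake chain goes through verbatim.
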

\begin{proof}
    We apply the technique of \citet{block2022smoothed}.  By the fact that $(x_t',y_t')$ is identically distributed as $(x_t, y_t)$, the fact that $g_t$ is independent of $(x_t,y_t)$, and the linearity of expectation, it suffices to prove the result replacing $(x_t, y_t)$ with $(x_t', y_t')$.  For any $0 < \beta < \alpha$, we compute
    \begin{align}
        \ee_{x_t' \sim p_t}&\left[ \ell(g_t(x_t'), y_t') - \ell(g_{t+1}(x_t'), y_t') \I\left[ \beta < \norm{g_t - g_{t+1}}_{L^2(\muhat_m)} \leq \alpha \right] \right]\\
        &\leq \ee_{x_t' \sim p_t} \left[ \abs{g_t(x_t') - g_{t+1}(x_t')} \I\left[ \beta < \norm{g_t - g_{t+1}}_{L^2(\muhat_m)} \leq \alpha \right]\right] \\
        &= \ee_{Z_t \sim \mu} \left[ \frac{dp_t}{d\mu}(Z_t)\abs{g_t(Z_t) - g_{t+1}(Z_t)} \I\left[ \beta < \norm{g_t - g_{t+1}}_{L^2(\muhat_m)} \leq \alpha \right]\right] \\
        &\leq e^{\dren{p_t}{\mu}} \cdot \left( \ee_{Z_t \sim \mu}\left[ \abs{g_t(Z_t) - g_{t+1}(Z_{t})}^{\frac{\lambda}{\lambda - 1}}\I\left[ \beta < \norm{g_t - g_{t+1}}_{L^2(\muhat_m)} \leq \alpha \right]  \right] \right)^{\frac{\lambda - 1}{\lambda}} \\
        &= \sigma^{- \frac{1}{\lambda - 1}} \cdot  \left( \ee_{Z_t \sim \mu}\left[ \abs{g_t(Z_t) - g_{t+1}(Z_{t})}^{2}\I\left[ \beta < \norm{g_t - g_{t+1}}_{L^2(\muhat_m)} \leq \alpha \right]  \right] \right)^{\frac{\lambda - 1}{\lambda}} \\
        &\leq \sigma^{- \frac{1}{\lambda  -1}} \cdot \pp\left( \norm{g_t - g_{t+1}}_{L^2(\muhat_m)} > \beta \right) \cdot \left( \alpha^2 + \Delta \right)^{\frac{\lambda - 1}{\lambda}},
    \end{align}
    where the first inequality follows by Lipschitzness, the second by Holder's inequality, and the last by our assumptions; the second equality follows because $g_t, g_{t+1}$ take values in $\{0,1\}$.  We now apply the summing argument from \citet{block2022smoothed} and compute, letting $S = \left\lceil \log \min\left(\sqrt{\eta}, \frac 1{\sqrt{\Delta}}\right)\right\rceil$ and $\alpha_i = 2^{1-i}$:
    \begin{align}
        \ee&\left[ \ell(g_t(x_t'), y_t') - \ell(g_{t+1}(x_t'), y_t') \right] \\
        &\leq \ee\left[ \left( \ell(g_t(x_t'), y_t') - \ell(g_{t+1}(x_t'), y_t')  \right) \I\left[ \norm{g_t - g_{t+1}}_{L^2(\muhat_m)} \leq \alpha_S \right] \right] \\
        &+ \sum_{i = 0}^{S-1} \ee\left[  \left( \ell(g_t(x_t'), y_t') - \ell(g_{t+1}(x_t'), y_t')  \right) \I\left[ \alpha_{i+1} < \norm{g_t - g_{t+1}} \leq \alpha_{i} \right] \right] \\
        &\leq \sigma^{- \frac{1}{\lambda  -1}} \left( \alpha_S^2 + \Delta \right)^{\frac{\lambda - 1}{\lambda}} + \sum_{i  =0}^{S-1} \sigma^{- \frac{1}{\lambda  -1}} \cdot \pp\left( \norm{g_t - g_{t+1}}_{L^2(\muhat_m)} > \alpha_{i+1} \right) \cdot \left( \alpha_i^2 + \Delta \right)^{\frac{\lambda - 1}{\lambda}} \\
        &\leq \sigma^{- \frac{1}{\lambda  -1}} \left( \alpha_S^2 + \Delta \right)^{\frac{\lambda - 1}{\lambda}} + C\sum_{i  =0}^{S-1} \sigma^{- \frac{1}{\lambda  -1}} \cdot \left(   \frac{1}{\alpha_{i+1}^4 \eta^2} + \frac{1}{\alpha_{i+1}^2 \eta} \ee\left[ \sup_{g \in \cF} \omega_{t,m}(g) \right]  \right) \cdot \left( \alpha_i^2 + \Delta \right)^{\frac{\lambda - 1}{\lambda}} \\
        &\leq \sigma^{- \frac{1}{\lambda  -1}} \left( \alpha_S^2 + \Delta \right)^{\frac{\lambda - 1}{\lambda}} + 2 C\sum_{i  =0}^{S-1} \sigma^{- \frac{1}{\lambda  -1}} \cdot \left(   \frac{1}{\alpha_{i+1}^{2 + \frac{2}{\lambda}} \eta^2} + \frac{1}{\alpha_{i+1}^{\frac 2\lambda} \eta} \ee\left[ \sup_{g \in \cF} \omega_{t,m}(g) \right]  \right) \\
        &\leq C' \ee\left[ 1 + \sup_{g \in \cF} \omega_{t,m}(g) \right] \log(\eta) \sigma^{-\frac{1}{\lambda - 1}} \cdot \frac{1}{\eta^{1 - \frac 1\lambda}} + C'\sigma^{- \frac 1{\lambda - 1}} \cdot \Delta^{\frac{\lambda - 1}{\lambda}}.
    \end{align}
    where we used the fact that $\alpha_i^2 \geq \Delta$ and $\frac{1}{\alpha_i \eta} \leq \sqrt{\eta}$.  The result follows.
\end{proof}
A standard empirical process theory approach allows us to bound $\Delta$; we will simply cite \citet[Lemma 36]{block2022smoothed}.  Finally, we need to bound the generalization error.  The following lemma does this:
\begin{lemma}\label{lem:generalizationerror}
    Suppose that we are in the situation of \Cref{thm:ftpl} and $\eta \geq \sqrt{m}$.  Suppose further that there is some $k \in \bbN$ satisfying
    \begin{align}
        m \geq 4k \log(T) \cdot (\epsilon \sigma)^{- \frac 1{\lambda - 1}}.
    \end{align}
    Then
    \begin{align}
        \ee\left[ \ell(g_{t+1}(x_t'), y_t') - \ell(g_{t+1}(x_t), y_t) \right] \leq 2k \epsilon + \frac{2}{k} \rad_{k}\left( \cF \right) + \frac{2}{T}
    \end{align}
\end{lemma}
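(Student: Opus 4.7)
The strategy is to leverage the rejection-sampling coupling of Lemma~\ref{lem:coupling} to trade the smoothed sample $(x_t, y_t)$ for $k$ approximately-i.i.d.\ samples drawn from $\mu$, after which a standard symmetrization argument gives the Rademacher bound.

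First, I would partition the $m$ auxiliary samples $Z_{t,1},\ldots,Z_{t,m}$ (used in the construction of the perturbation $\omega_{t+1, m}$) into $k$ disjoint blocks of size $m/k$. For Renyi divergence of order $\lambda$, a direct computation using $f(x)=x^\lambda-\lambda x+\lambda-1$ gives $(f')^{-1}(u)\asymp u^{1/(\lambda-1)}$, so the hypothesis $m\geq 4k\log(T)(\epsilon\sigma)^{-1/(\lambda-1)}$ guarantees that each block of size $m/k$ satisfies the sample-complexity requirement of Lemma~\ref{lem:coupling}. Applying that lemma block-wise and union bounding over the $k$ blocks yields, on an event $\cE$ of probability at least $1-1/T$, selection indices $j_t^{(1)},\ldots,j_t^{(k)}$ such that the resulting surrogates $\tilde x_t^{(i)}:=Z_{t,j_t^{(i)}}$ are mutually independent conditional on the history and each has conditional distribution within total variation $\epsilon$ of $P_{x_t}$. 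The analogous construction applied to an independent copy of the $Z$'s yields independent ghost surrogates $\tilde x_t'^{(i)}$ coupled to the tangent sample $x_t'$.

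Second, since the perturbation $\omega_{t+1,m}$ depends on the independent samples $Z_{t+1,\cdot}$ (not $Z_{t,\cdot}$), the predictor $g_{t+1}$ is conditionally independent of the randomness used to produce the surrogates given the history. Using the coupling and $\ell\leq 1$, each surrogate contributes at most $\epsilon$ to the TV error, so
\[
\ee[\ell(g_{t+1}(x_t'),y_t')-\ell(g_{t+1}(x_t),y_t)]\leq \ee\Bigl[\tfrac{1}{k}\textstyle\sum_{i=1}^k\bigl(\ell(g_{t+1}(\tilde x_t'^{(i)}),\tilde y_t'^{(i)})-\ell(g_{t+1}(\tilde x_t^{(i)}),\tilde y_t^{(i)})\bigr)\Bigr]+2k\epsilon+\tfrac{2}{T},
\]
where the labels $\tilde y_t^{(i)},\tilde y_t'^{(i)}$ are produced by coupling the adversary's label-generating mechanism to the block surrogates, and the $2/T$ absorbs $\pp(\cE^c)$ via $|\ell|\leq 1$. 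The $2k\epsilon$ term is a loose aggregation of the $k$ per-block coupling errors.

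Finally, conditional on the history, the pairs $(\tilde x_t^{(i)},\tilde y_t^{(i)})$ and $(\tilde x_t'^{(i)},\tilde y_t'^{(i)})$ are i.i.d.\ across $i$, so introducing Rademacher signs and applying the Lipschitz contraction principle for $\ell$ (which is $1$-Lipschitz in its first argument) bounds the expectation in the previous display by $(2/k)\rad_k(\cF)$. Combining the three error terms yields the claim. The main technical obstacle is bookkeeping the conditional-independence structure between $g_{t+1}$, the surrogate samples, and the rejection-sampling randomness used to select the $j_t^{(i)}$; once this is correctly set up and the label-coupling is specified, the remainder is standard empirical-process machinery.
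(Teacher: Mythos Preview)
There is a genuine gap in your argument. The crucial point you overlook is that $g_{t+1}=\argmin_{g\in\cF}L_t(g)+\eta\,\omega_{t+1,m}(g)$ depends on $(x_t,y_t)$ through $L_t$. Consequently, in the term $\ee[\ell(g_{t+1}(x_t),y_t)]$, the predictor and its evaluation point are \emph{correlated}, and a total-variation coupling between $(x_t,y_t)$ and a surrogate $(\tilde x_t^{(i)},\tilde y_t^{(i)})$ drawn independently of $(x_t,y_t)$ cannot be used to swap them at cost $\epsilon$. Your displayed inequality therefore does not follow. In fact, if the swap were legitimate, then since $g_{t+1}$ would be conditionally independent of both surrogate families and those families are (approximately) identically distributed, the average in your display would be $O(\epsilon)$ rather than $\tfrac{2}{k}\rad_k(\cF)$---a bound that is too strong and which signals that the dependence of $g_{t+1}$ on $(x_t,y_t)$ has been lost in the accounting. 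Your final ``symmetrization'' step inherits the same confusion: one does not take a supremum over $\cF$ here, so Rademacher signs on an i.i.d.\ family evaluated at a \emph{fixed} function yield zero, not a Rademacher complexity.

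The paper proceeds quite differently. Rather than replacing the evaluation points, it modifies the \emph{predictor}: it plants $k$ samples from $p_t$ into the perturbation, defining $\gtil_{t+1}=\argmin_{g}L_t(g)+\eta\,\widetilde{\omega}_{t,m}(g)$, and uses the rejection-sampling coupling to bound $\tv(P_{g_{t+1}},P_{\gtil_{t+1}})\le k\epsilon$. The generalization error of $\gtil_{t+1}$ is then controlled by invoking \cite[Lemma~38]{block2022smoothed}, which exploits the condition $\eta\ge\sqrt{m}$ and the $k$ planted $p_t$-samples in the perturbation to run a leave-one-out style argument, yielding the $\tfrac{2}{k}\rad_k(\cF)$ term. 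This is where the Rademacher complexity actually enters. To repair your proof you would need to track the dependence of $g_{t+1}$ on $(x_t,y_t)$ explicitly, and at that point you are essentially reproducing the paper's route through the modified predictor and the cited lemma.
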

\begin{proof}
    We begin by noting that by the assumption on $m$ and Lemma \ref{lem:coupling}, with probability at least $1 - (mT)^{-2}$, there exist $k$ indices $i_1, \dots, i_k$ such that $i_j \in \left\{ 1 + (j-1) \cdot \frac{m}{k}, \dots, j \cdot \frac{m}{k} \right\}$ and $\tv\left( P_{Z_{t,i_j}}, p_t \right) \leq \epsilon$ for all $1 \leq j \leq k$.  To see this, note that if
    \begin{align}
        m \geq 4k \log(T) \cdot (\epsilon \sigma)^{- \frac 1{\lambda - 1}}
    \end{align}
    then
    \begin{align}
        m \geq 2k \log(mT) \cdot (\epsilon \sigma)^{- \frac 1{\lambda - 1}}.
    \end{align}
    by the fact that $2 \log(m) \leq m$.  Let $\widetilde{\omega}_{t,m}$ denote the Gaussian process $\omega_{t,m}$ modified such that $Z_{t,i_j}$ is replaced with $Z_{t,i_j}'$, where $Z_{t, i_j'} \sim p_t$; now, let
    \begin{align}
        \gtil_{t+1} = \argmin_{g \in\cF} L_t(g) + \widetilde{\omega}_{t,m}.
    \end{align}
    Note that a union bound tells us that
    \begin{align}
        \tv\left( P_{g_{t+1}}, P_{\gtil_{t+1}} \right) \leq k \epsilon.
    \end{align}
    By \citet[Lemma 38]{block2022smoothed}, it holds that if $\eta \geq \sqrt m$, we have
    \begin{align}
        \ee\left[ \ell(\gtil_{t+1}(x_t'), y_t') - \ell(\gtil(x_t), y_t) \right] \leq \frac{2}{k} \rad_{k}\left( \cF \right) + \frac{2m + T}{(m T)^2}.
    \end{align}
    Thus, putting everything together, we have
    \begin{align}
        \ee\left[ \ell(g_{t+1}(x_t'), y_t') - \ell(g_{t+1}(x_t), y_t) \right] &= \ee\left[ \ell(g_{t+1}(x_t'), y_t') - \ell(\gtil_{t+1}(x_t'), y_t') \right] \\
        &+ \ee\left[ \ell(\gtil_{t+1}(x_t'), y_t') - \ell(\gtil(x_t), y_t) \right] + \ee\left[ \ell(\gtil_{t+1}(x_t), y_t) - \ell(g_{t+1}(x_t), y_t)  \right] \\
        &\leq 2 k \epsilon + \frac{2}{k} \rad_{k}\left( \cF \right) +\frac{2}{T}.
    \end{align}
    The result follows.
\end{proof}
We are now ready to combine our lemmata and prove the main result:
\begin{proof}[Proof of \Cref{thm:ftpl}]
    We begin by appealing to Lemma \ref{lem:btl}, which tells us that the expected regret is bounded by 
    \begin{align}
        2 \eta \ee\left[ \sup_{g \in \cF} \omega_{1,m}(g) \right] + \sum_{t = 1}^T \ee\left[ \ell(g_t(x_t'), y_t') - \ell(g_{t+1}(x_t'), y_t') \right] + \sum_{t = 1}^T \ee\left[ \ell(g_{t+1}(x_t'), y_t') - \ell(g_{t+1}(x_t), y_t) \right].
    \end{align}
    By Lemma \ref{lem:perturbation}, the first term is $O\left( \eta \sqrt{\vc{\cF}} \right)$.  For the second term, we first observe that by \citet[Lemma 36]{block2022smoothed}, we may take with probability at least $1 - T^{-1}$,
    \begin{align}
        \Delta \lesssim \frac{1}{\sqrt{m}}\left( \frac{1}{m} \rad_m(\cF) + \sqrt{\frac{\log\left( \frac 1\delta \right)}{m}} \right) \lesssim \frac{\sqrt{\vc{\cF} + \log\left( T \right)}}{m}
    \end{align}
    in Lemma \ref{lem:expectedstability}, where the latter inequality follows from Proposition \ref{prop:rudelsonvershynin}.  Thus, applying Lemma \ref{lem:generalizationerror}, we see that the expected regret satisfies
    \begin{align}
        \ee\left[ \reg_T \right] &\lesssim \eta \sqrt{\vc{\cF}} + \sqrt{\vc{\cF}} \cdot \log(\eta) \sigma^{-\frac{1}{\lambda - 1}} \cdot \frac{T}{\eta^{1 - \frac 1\lambda}} + T \cdot \sigma^{- \frac 1{\lambda - 1}} \cdot \left(\frac{\sqrt{\vc{\cF} + \log\left( T \right)}}{m}  \right)^{\frac{\lambda - 1}{\lambda}} \\
        &\quad + T k \epsilon + \frac{T}{k} \rad_{k}\left( \cF \right) + 1 \\
        &\lesssim \eta \sqrt{\vc{\cF}} + \sqrt{\vc{\cF}} \cdot \log(\eta) \sigma^{-\frac{1}{\lambda - 1}} \cdot \frac{T}{\eta^{1 - \frac 1\lambda}} \\
        &\quad + T \cdot \sigma^{- \frac 1{\lambda - 1}} \cdot \left(\frac{\sqrt{\vc{\cF} + \log\left( \frac{1}{\delta} \right)}}{k \log(T) \cdot (\epsilon \sigma)^{- \frac 1{\lambda - 1}}}  \right)^{\frac{\lambda - 1}{\lambda}} + Tk \epsilon + T \cdot \sqrt{\frac{ \vc{\cF}}{k}},
    \end{align}
    where the last step again came from Proposition \ref{prop:rudelsonvershynin}.  Setting
    \begin{align}
        k = \epsilon^{- \frac 23} && \eta = \sqrt{m} && m = k \log(T) \cdot (\epsilon \sigma)^{- \frac 1{\lambda - 1}}  && \epsilon = T^{- \frac{6\lambda - 6}{4 \lambda - 1}} \cdot \sigma^{-\frac{3}{4 \lambda - 1}}
    \end{align}
    and plugging in concludes the proof.
\end{proof}

\end{document}